\documentclass{article} 
\usepackage{iclr2024_conference,times}


\usepackage{amsmath,amsfonts,bm}









\def\eqref#1{equation~\ref{#1}}









\def\1{\bm{1}}










\DeclareMathAlphabet{\mathsfit}{\encodingdefault}{\sfdefault}{m}{sl}
\SetMathAlphabet{\mathsfit}{bold}{\encodingdefault}{\sfdefault}{bx}{n}













\usepackage{microtype}
\usepackage{caption, threeparttable}
\usepackage{enumitem}
\usepackage{graphicx, wrapfig}
\usepackage{subfigure}
\usepackage{tikz}
\usepackage{amssymb}
\usepackage[ruled,longend]{algorithm2e}
\usepackage{booktabs} 
\usepackage[utf8]{inputenc} 
\usepackage[T1]{fontenc}    
\usepackage{hyperref}       
\usepackage{url}            
\usepackage{microtype}
\usepackage{enumitem}
\usepackage[english]{babel}

\usepackage{booktabs}
\usepackage{multirow}

\newcommand{\ie}{i.\,e., }

\usepackage{amsmath}
\usepackage{amssymb}
\usepackage{mathtools}
\usepackage{amsthm}

\usepackage[capitalize,noabbrev]{cleveref}
\usepackage{amssymb}
\theoremstyle{plain}
\newtheorem{theorem}{Theorem}[section]
\newtheorem*{theorem*}{Theorem}
\newtheorem{proposition}[theorem]{Proposition}
\newtheorem{lemma}[theorem]{Lemma}
\newtheorem*{lemma*}{Lemma}

\theoremstyle{definition}
\newtheorem{definition}[theorem]{Definition}

\theoremstyle{remark}

\usepackage[disable,textsize=tiny]{todonotes}

\usepackage{graphicx}
\usepackage{subfigure}
\usepackage{caption}
\usepackage{subcaption}
\usepackage{multicol, latexsym, amsmath, amssymb}
\usepackage{blindtext}
\usepackage{subcaption}
\usepackage{natbib}
\usepackage{tikz}
\usepackage{amssymb}
\usepackage{amssymb}
\usepackage{mathtools}
\usepackage{amsthm}
\usepackage{hyperref}
\usepackage[ruled,longend]{algorithm2e}
\usepackage{booktabs}       
\usepackage{amsfonts}       
\usepackage{nicefrac}       
\usepackage{microtype}      
\usepackage{xcolor}         
\usepackage{xargs}

\newcommandx{\change}[2][1=]{\todo[linecolor=blue,backgroundcolor=blue!25,bordercolor=blue,#1]{#2}}

\newcommand{\eqnref}[1]{(\ref{#1})}

\title{Bounding the Expected Robustness of Graph Neural Networks Subject to Node Feature\\ Attacks}


\author{Yassine Abbahaddou \thanks{Equal contribution. Contact: \texttt{yassine.abbahaddou@polytechnique.edu} and \texttt{ennadir@kth.se} } \\
LIX, Ecole Polytechnique\\
IP Paris, France \\
\And
Sofiane Ennadir \footnotemark[1] \\
EECS, KTH\\
Stockholm, Sweden \\
\And
Johannes F. Lutzeyer  \\
LIX, Ecole Polytechnique\\
IP Paris, France \\
\And
Michalis Vazirgiannis\\
Ecole Polytechnique, IP Paris\\ \& KTH
Stockholm, Sweden \\
\And
Henrik Boström \\
EECS, KTH\\
Stockholm, Sweden \\
}

\iclrfinalcopy 
\begin{document}

\maketitle

\begin{abstract}
Graph Neural Networks (GNNs) have demonstrated state-of-the-art performance in various graph representation learning tasks. Recently, studies revealed their vulnerability to adversarial attacks. In this work, we theoretically define the concept of expected robustness in the context of attributed graphs and relate it to the classical definition of adversarial robustness in the graph representation learning literature. Our definition allows us to derive an upper bound of the expected robustness of Graph Convolutional Networks (GCNs) and Graph Isomorphism Networks subject to node feature attacks. Building on these findings, we connect the expected robustness of GNNs to the orthonormality of their weight matrices and consequently propose an attack-independent, more robust variant of the GCN, called the Graph Convolutional Orthonormal Robust Networks (GCORNs). We further introduce a probabilistic method to estimate the expected robustness, which allows us to evaluate the effectiveness of GCORN on several real-world datasets. Experimental experiments showed that GCORN outperforms available defense methods. Our code is publicly available at: \href{https://github.com/Sennadir/GCORN}{https://github.com/Sennadir/GCORN}.
\end{abstract}

\section{Introduction}
Graph-structured data is prevalent in a wide range of domains, motivating therefore the development of neural network models that can operate on graphs, known as Graph Neural Networks (GNNs). 
GNNs have emerged as a powerful tool for learning node and graph representations. Many GNNs are instances of Message Passing Neural Networks (MPNNs) \citep{gilmer2017} such as Graph Isomorphism Networks (GIN)\citep{xu2019powerful} and Graph Convolutional Networks (GCN)\citep{Kipf:2017tc}. These models have been successfully applied in real-world applications such as molecular design~\citep{kearnes2016molecular}. In parallel to their success, it has been shown, particularly in the field of computer vision, that deep learning architectures can be susceptible to adversarial attacks \citep{denfeses_cv}. These attacks, which are based on injecting small perturbations into the input, lead to unreliable predictions, limiting therefore the applicability of these models to real-world problems. 
Similar to other deep learning architectures, GNNs are also vulnerable to adversarial attacks. Recent studies ~\citep{dai2018adversarial, zugner2018adversarial, gunnemann2022graph} have shown that a GNN can be attacked by applying small structural perturbations to the input graphs. These attacks pose a threat to the reliability of GNNs, particularly in safety-critical applications such as finance and healthcare. Consequently, different attacks have been proposed to explore the robustness of GNNs. Concurrently, several studies focus on developing methods to mitigate possible perturbation effects and improve the robustness of MPNNs. The proposed methods include augmenting training data with adversarial examples and retraining the model~\citep{graph_adversarial_training}, pre-processing methods such as edge pruning~\citep{gnn_guard}, and more recently robustness certificates~\citep{schuchardt2021collective}. While the majority of these proposed defenses focus on structural perturbations, only limited advances have been made for feature-based adversarial attacks on graphs. Moreover, despite the large amount of research on the robustness of these methods through empirical exploration, there has been limited progress in understanding the theoretical robustness of GNNs. 

In this work, we conduct a theoretical examination of the robustness of MPNNs subject to adversarial attacks. 
First, we formally define the concept of ``Expected Adversarial Robustness'' for structure and feature-based perturbations in the context of graphs defined in a metric space. We furthermore establish a formal relation between our introduced expected robustness and the conventional adversarial robustness formulation.
Further, by analyzing the input sensitivity of the iterative message-passing schemes, we derive an upper bound on the robustness of these models for both structural and feature-based attacks. Motivated by our theoretical results, we propose a refined learning scheme, called \emph{Graph Convolutional Orthonormal Robust Network (GCORN)}, for the GCN, to improve its robustness to feature-based perturbations while maintaining its expressive power. In addition to our theoretical findings, we empirically evaluate the effectiveness of our GCORN in both node and graph classification on commonly used real-world benchmark datasets and compare GCORN to existing methods to defend against feature-based adversarial attacks.
Most empirical evaluations of the effectiveness of defense methods consider worst-case scenarios, hence not taking into account the variability of attacks and their likelihood of occurrence. To overcome this limitation, we propose a novel probabilistic method for evaluating the expected robustness of GNNs, which is based on our introduced robustness definition. The method is model-agnostic and can hence be applied to any architecture to estimate the local robustness. More realistic and comprehensive evaluation of the effectiveness of defense approaches can hence be conducted.

Our main contributions are: \textbf{(i)} We define and theoretically analyze the expected robustness of MPNNs, producing an upper-bound on their expected robustness, \textbf{(ii)} a novel approach (GCORN) for improving the expected robustness of GCNs to feature-based  attacks while maintaining their performance in terms of accuracy, \textbf{(iii)} a theoretically well-founded, probabilistic and model-agnostic evaluation method, and \textbf{(iv)} an empirical evaluation of our GCORN on benchmark datasets, demonstrating its superior ability compared to existing methods to defend against feature-based attacks.

\section{Related Work}\label{sec:related_work}
While most studies on \textit{attacking machine learning models} focus on images~\citep{denfeses_cv}, work on discrete spaces such as graphs has also emerged. Analogous to images, most existing graph-based attack methods frame the task as a search/optimization problem. For instance, the targeted attack Nettack~\citep{zugner2018adversarial} utilized a greedy optimization scheme while \citet{zugner2019adversarial} formulate the problem as a bi-level optimization task and leverage meta-gradients to solve it. \citet{zhan2021black} expanded this work through a black-box gradient algorithm overcoming several limitations. Furthermore, \citet{dai2018adversarial} proposed to use Reinforcement Learning to solve the search problem and generate adversarial attacks. Node injection attacks \citep{Zou_2021, Tao_2021, chen2022understanding, ju2022let} have also proven effective, with attackers introducing malicious nodes instead of modifying existing nodes or edges, affecting therefore the model's performance. In parallel, the field of \textit{defending against adversarial attacks} on GNNs is still relatively under-explored compared to that of image-based models. The majority of methods are primarily focused on heuristic strategies. Similar to computer vision, robust training \citep{zugner_2019} and noise injection \citep{ennadir2024simple} have been used to improve the robustness of GNNs. Additionally, low-rank approximation with graph anomaly detection \citep{Ma_2021} has been used to defend against adversarial attacks. The GNN-Jaccard method~\citep{gnn_jaccard} pre-processes the adjacency matrix to detect potential manipulation of edges. In the same context, GNN-SVD~\citep{gnn_svd} uses a low-rank approximation of the adjacency matrix to filter out noise. Other methods such as edge pruning \citep{gnn_guard} and transfer learning \citep{Tang_2020} have also been proposed. Finally, \citet{lipschitz_bound_and_robust_training} proposed a low-pass adaptation of the message passing to enhance robustness while providing theoretical guarantees.
Although these defense strategies have had some success, for the majority of them, their heuristic nature results in defenses against specific types of attacks without any guarantees on the model's underlying robustness. As a result, these defenses may be susceptible to being circumvented by future new advanced attacks. Consequently, the \textit{investigation of robustness certificates} \citep{zugner_2019, bojchevski_2019, gosch2023revisiting} has gained attention by providing attack-independent guarantees on the stability of the model's predictions such as randomized smoothing \citep{bojchevski_certificate_2020}. 


While the majority of the existing work on defense strategies for GNNs has focused on structural perturbations, there is far less work on addressing feature-based attacks. This represents a significant gap in the literature as feature-based attacks on GNNs can be very effective. \citet{seddik_rgcn} propose to add a node feature kernel to the message passing to enhance the robustness of GCNs. Additionally, RobustGCN \citep{robustGCN} proposes to use Gaussian distributions as the hidden representations, enabling the absorption of the impact of both structural and feature-based attacks. Finally, \citet{airgnn} edited the message passing module using adaptive residual connections and feature aggregation, which have been shown experimentally to enhance the model's robustness against abnormal node features. Robustness certificates have also been proposed for node feature-based attacks \citep{scholten_graybox_certificate, bojchevski_certificate_2020}.

\section{Expected Adversarial Robustness}\label{sec:adversarial_robustness}
In this section we mathematically define the concept of the expected robustness for a graph-based function, such as a GNN. 
Let us consider three metric spaces with defined norms over the graph space ($\mathcal{G}, \lVert  \cdot \rVert_{\mathcal{G}}$), the feature space ($\mathcal{X}$, $\lVert  \cdot \rVert_{\mathcal{X}}$) and the label space ($\mathcal{Y}, \lVert  \cdot \rVert_{\mathcal{Y}}$). \noindent Let $\mathcal{D}$ be the underlying probability distribution defined on $(\mathcal{G}, \mathcal{X}, \mathcal{Y})$.
Given a graph-based function $f: (\mathcal{G}, \mathcal{X}) \rightarrow \mathcal{Y}$, and some input $(G, X) \in (\mathcal{G}, \mathcal{X})$ with its corresponding label $y \in \mathcal{Y}$ where $f(G, X) = y$, the goal of an adversarial attack is to produce a perturbed graph $\tilde{G}$ and its corresponding features $\tilde{X}$ which are `slightly' different from the original input $(G, X)$ such that the predicted class of $(\tilde{G}, \tilde{X})$ is different from the predicted class of $(G, X)$. The adversarial task is contingent on defining a similarity measure between the input graph and the adversarially generated graph. To this end, we introduce a distance over our input metric spaces, which takes both the graph and its corresponding features into account
\begin{equation*}
    d^{\alpha, \beta}([G, X], [\tilde{G}, \tilde{X}]) =\alpha \lVert G - \tilde{G} \rVert_{\mathcal{G}} + \beta \lVert X - \tilde{X} \rVert_{\mathcal{X}}.
\end{equation*}
In practice, a graph is represented by its adjacency matrix (or some other graph shift operator) and its feature matrix, we can therefore, without loss of generality, consider the distance
\begin{equation} \label{eqn_GraphDistanceMetric}
    d_2^{\alpha, \beta}([G, X], [\tilde{G}, \tilde{X}]) = \min_{P \in \Pi} \left( \alpha \lVert A - P \tilde{A} P^T \rVert_{2} + \beta \lVert X - P \tilde{X} \rVert_2 \right), 
\end{equation}
where $\Pi$ is the set of permutation matrices and $\alpha$, $\beta$ are hyper-parameters. Note that for un-attributed graphs, the distance in \eqnref{eqn_GraphDistanceMetric} aligns with the commonly used edit distance on graphs which is a measure of similarity between two graphs quantifying the minimal number of edges that need to be edited to convert one graph into another while taking into account graph isomorphism.
Based on this distance, we can mathematically formulate the adversarial task as finding a perturbed attributed graph $(\tilde{G}, \tilde{X})$ within a specified budget $\epsilon$ such that $f(\tilde{G}, \tilde{X}) = \tilde{y} \neq y$ with $d^{\alpha, \beta}([G, X], [\tilde{G}, \tilde{X}]) < \epsilon$. Moreover, given that in practice the attacker does not have access to the ground-truth labels, we define an adversarial graph attack to be valid when $f(\tilde{G}, \tilde{X})  \neq f(G, X)$. We can now define the expected vulnerability of a graph function as its likelihood to suffer from such attacks in the input's neighborhood defined by $\epsilon.$ Upper bounding this vulnerability allows us to quantify the model's expected robustness, we start by formulating the expected vulnerability of a graph function $f$ as
\begin{equation}\label{equation:robustness_definition_1}
        Adv^{\alpha, \beta}_{\epsilon}[f] = \mathbb{P}_{(G, X) \sim \mathcal{D}_{\mathcal{G}, \mathcal{X}}} [ (\tilde{G}, \tilde{X}) \in B^{\alpha, \beta}(G, X, \epsilon):   d_{\mathcal{Y}}(f(\tilde{G}, \tilde{X}), f(G, X)) > \sigma],
\end{equation}
with $B^{\alpha, \beta}(G,X ,\epsilon) = \{(\tilde{G}, \tilde{X}): d^{\alpha, \beta}([G, X], [\tilde{G}, \tilde{X}])<\epsilon\}$ for any budget $\epsilon \geq 0$. Additionally, $d_{\mathcal{Y}}$ can be any defined distance in the output space $\mathcal{Y}$ and $\sigma > 0$. Here, we focus on real-valued output mappings and consider the distance metric $d_{\mathcal{Y}}(f(\tilde{G}, \tilde{X}), f(G, X)) = \lVert f(\tilde{G}, \tilde{X}) - f(G, X) \rVert_{\mathcal{Y}}$. This formulation is applicable to both graph and node classification tasks. In node classification, the parameter $\sigma$ determines the minimal number of nodes that need to be successfully attacked to consider the attack to be adversarially successful at the graph-level. This allows for flexibility in different scenarios, where in some cases, even a single node's label flip may be considered a threat, while in others, a limited number of changes can be tolerated. In graph classification, the parameter $\sigma $ acts as a threshold to the continuous softmax output above which an attack is considered effective. We can now introduce the concept of expected robustness of a function defined on graphs, such as a GNN. 

\begin{definition}[Expected Adversarial Robustness]
\label{def:robustness} Let $d^{\alpha, \beta}$ be a graph distance on the spaces $(\mathcal{G}, \mathcal{X})$ and $d_\mathcal{Y}$ be a distance on the space $\mathcal{Y}$. The graph function $f: (\mathcal{G}, \mathcal{X}) \rightarrow \mathcal{Y}$ is $((d^{\alpha, \beta}, \epsilon),( d_{\mathcal{Y}}, \gamma))$--robust if its vulnerability as defined in \eqnref{equation:robustness_definition_1} can be upper-bounded by $\gamma,$ \ie $Adv^{\alpha, \beta}_{\epsilon}[f] \leq \gamma$.
\end{definition}
In Appendix \ref{sec:proof_proposition} (Proposition \ref{prop:equivalence}) we show how, via the equivalence of metrics, expected robustness in a given metric implies expected robustness in several other metrics.

\textbf{Relating Expected Adversarial Robustness to Worst-Case Adversarial Robustness.} Our introduced formulation represents a broader perspective of the classical ``worst-case'' adversarial robustness, where the attacker aims to identify a single adversarial attack representing ``worst-case'' losses within a predefined budget and neighborhood. \todo{Please check if I explained the ``worst-case'' enough.}
Our Expected Adversarial Robustness focuses on understanding the overall behavior of the underlying graph-based function within the specified input neighborhood. This approach offers a more comprehensive assessment of the model's robustness. Nevertheless, we note that our formulation encompasses the adversarial robustness as a special case since by definition these examples are included in our considered neighborhood. In fact, by adjusting the hyper-parameter $\sigma$, we can isolate these worst-case examples. Lemma \ref{lemma:worst_case} directly relates Definition \ref{def:robustness} to the classical ``worst-case'' adversarial robustness.

\begin{lemma}
\label{lemma:worst_case}
Let $d^{\alpha, \beta}$ be a defined graph metric on the metric spaces $\mathcal{G}, \mathcal{X}$. Let $f: (\mathcal{G}, \mathcal{X}) \rightarrow \mathcal{Y}$ be a graph-based function, we have the following result: If $f$ is $((d^{\alpha, \beta}, \epsilon),( d_{\mathcal{Y}}, \gamma))$--robust, then $f$ is also $((d^{\alpha, \beta}, \epsilon),( d_{\mathcal{Y}}, \gamma))$--``worst-case'' robust. 
\end{lemma}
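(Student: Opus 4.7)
The plan is to prove the lemma via a direct comparison of the events underlying the two notions of vulnerability; the argument is essentially a definitional unpacking together with the tautology that ``some element of a set exceeds a threshold'' is equivalent to ``the supremum over the set exceeds the threshold.''

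First, I would explicitly restate the hypothesis. By Definition \ref{def:robustness} together with \eqnref{equation:robustness_definition_1}, the assumption that $f$ is $((d^{\alpha, \beta}, \epsilon),( d_{\mathcal{Y}}, \gamma))$--robust reads
\begin{equation*}
\mathbb{P}_{(G, X) \sim \mathcal{D}}\bigl[\exists\,(\tilde G, \tilde X) \in B^{\alpha,\beta}(G, X, \epsilon):\; d_{\mathcal{Y}}(f(\tilde G, \tilde X), f(G, X)) > \sigma\bigr] \;\leq\; \gamma.
\end{equation*}

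Second, I would formalize the ``worst-case'' robustness in the sense used in the graph adversarial literature referenced in \Secref{sec:related_work}: the worst-case vulnerability is the probability that the strongest perturbation inside the budget ball succeeds, \ie
\begin{equation*}
Adv^{\alpha,\beta,\mathrm{wc}}_{\epsilon}[f] \;=\; \mathbb{P}_{(G,X)\sim\mathcal{D}}\Bigl[\sup_{(\tilde G, \tilde X) \in B^{\alpha,\beta}(G,X,\epsilon)} d_{\mathcal{Y}}(f(\tilde G, \tilde X), f(G, X)) > \sigma\Bigr].
\end{equation*}
Declaring $f$ to be $((d^{\alpha, \beta}, \epsilon),( d_{\mathcal{Y}}, \gamma))$--worst-case robust then means $Adv^{\alpha,\beta,\mathrm{wc}}_{\epsilon}[f] \leq \gamma$.

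Third, I would invoke the elementary equivalence that, for any real-valued function $h$ on a nonempty set $S$, $\sup_{s \in S} h(s) > \sigma$ if and only if there exists $s \in S$ with $h(s) > \sigma$. Applied pointwise in $(G, X)$, this shows that the two events appearing above coincide, so $Adv^{\alpha,\beta,\mathrm{wc}}_{\epsilon}[f] = Adv^{\alpha,\beta}_{\epsilon}[f]$, and the hypothesis transfers directly to give $Adv^{\alpha,\beta,\mathrm{wc}}_{\epsilon}[f] \leq \gamma$.

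The main obstacle is not mathematical but notational: the excerpt does not supply an explicit formal definition of ``worst-case'' robustness, so pinning down the precise probabilistic formulation compatible with the paper's metric and budget parameters is the only subtle step. Once that translation is in place, the implication collapses to a one-line sup/exists equivalence, and no nontrivial probabilistic or analytic machinery is required; in particular the bound $\gamma$ transfers without loss and no adjustment to $\epsilon$ or $\sigma$ is needed.
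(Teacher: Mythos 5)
Your proposal is correct, but it reaches the conclusion by a slightly different route than the paper. The paper's proof formalizes ``worst-case'' robustness by introducing a distinguished subset $\mathcal{W}^{\alpha,\beta}_{\epsilon}$ of worst-case adversarial examples inside the budget ball, defines the worst-case vulnerability as the probability of the success event restricted to that subset, and then concludes by monotonicity of probability over nested events ($\mathcal{W}^{\alpha,\beta}_{\epsilon} \subset B^{\alpha,\beta}(G,X,\epsilon)$ implies the restricted probability is at most $Adv^{\alpha,\beta}_{\epsilon}[f] \leq \gamma$). You instead formalize worst-case vulnerability via the supremum of the output deviation over the entire ball and observe that, for the strict inequality $>\sigma$ used in \eqnref{equation:robustness_definition_1}, the event $\sup_{(\tilde G,\tilde X)\in B} d_{\mathcal{Y}}(f(\tilde G,\tilde X),f(G,X))>\sigma$ coincides exactly with the existential event, so the two quantities are \emph{equal} rather than merely ordered. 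Both arguments are sound one-liners; the difference is entirely in how the undefined term ``worst-case robust'' is formalized. Your version buys a stronger conclusion (equality) under the supremum reading and correctly flags that the strictness of the inequality is what makes the sup/exists equivalence work; the paper's version is more flexible in that it applies to any notion of ``worst-case attack set'' as long as it is contained in the ball, which is closer to how attack algorithms (which return a single candidate perturbation) are evaluated in practice. Either way the bound $\gamma$ transfers with no adjustment to $\epsilon$ or $\sigma$, as you note.
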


The proof of Lemma \ref{lemma:worst_case} is provided in Appendix \ref{sec:proof_lemma}. As a result, our forthcoming theoretical analysis, which considers the general case, is equally applicable to worst-case adversarial examples, which will be also validated experimentally in Section \ref{sec:experiments}. We finally note that the advantages and pitfalls of the generalization of ``worst-case'' to average robustness have also been studied by \citet{average_and_worst_case_neurips21}.

\section{The Expected Robustness of Message Passing Neural Networks}\label{sec:result_robustness}
We now use our Expected Adversarial Robustness Definition \ref{def:robustness} to derive an upper bound on the expected robustness of the GCN, based on which we introduce our more robust GCN adaptation.

\subsection{On the Expected Robustness of Graph Convolutional Networks}
Our work primarily focuses on the theoretical analysis of GCNs within the broader context of MPNNs. 
The computations of one GCN layer are composed of the aggregation of node hidden states over neighborhoods in the graph and subsequent node-wise updates of the hidden states via a weight matrix and non-linear activation function.   
The updated hidden states are then passed to the next layer for further aggregation and updates. An iteration of this process can be expressed as follows
\begin{equation} \label{equation:gcn}
    h^{(\ell)} = \phi^{(\ell)}(\Tilde{A}h^{(\ell-1)}W^{(\ell)}),
\end{equation}
where $h^{(\ell)}$ represents the hidden state in the $\ell$-th GCN layer and $h^{(0)}$ is the initial node features $X\in \mathbb{R}^{n \times K}$, $W^{(\ell)} \in \mathbb{R}^{p \times e}$ is the weight matrix in the $\ell$-th layer, $e$ is the embedding dimension and $\phi^{(\ell)}$ is a 1-Lipschitz continuous non-linear activation function. Moreover, $\Tilde{A} \in \mathbb{R}^{n \times n}$, with $n$ being the number the nodes, denotes the normalized adjacency matrix  $\Tilde{A} = D^{-1/2} A D^{-1/2}.$ 

Determining the exact expected adversarial robustness of a graph-based function, as outlined in Section \ref{sec:adversarial_robustness}, poses a significant challenge. To overcome this, we provide an upper bound, referred to as~$\gamma$ in Definition \ref{def:robustness}.
Our definition of adversarial attacks is closely related to the concept of sensitivity analysis. In both cases, the goal is to understand how small input changes can affect the model's output. We hence tackled the adversarial task by adopting an input perturbation perspective. 
Similar approaches based on sensitivity analysis have had some success for deep neural networks (DNNs). While extending to other domains such as Convolutional Neural Networks is direct, generalizing to graphs presents new challenges. Notably, model dynamics differ due to the Message Passing process involving the adjacency and node features, complicating the task of providing an upper-bound. Since, the architecture itself involves the adjacency matrix, any perturbation on the input produces a different dynamic in the model itself, unlike the classical DNNs architecture, which remains static when subject to perturbations. Consequently, the theoretical analysis and results have to reflect the underlying propagation architecture, i.e.,  the graph structure. Theorem \ref{theo:main_result} provides theoretical insight into the expected robustness of GCNs by establishing an upper bound on the amount of perturbation that a GCN can tolerate before its predictions become unreliable.

\begin{theorem}
\label{theo:main_result}
Let $f: (\mathcal{G}, \mathcal{X}) \rightarrow \mathcal{Y}$ denote a graph-based function composed of $L$ GCN layers, with $W^{(i)}$ denoting the weight matrix of the $i$-th layer. Further, let $d^{0,1}$ be a feature distance. For attacks targeting node features of the input graph, with a budget $\epsilon$, with respect to Definition \ref{def:robustness}: 
\begin{itemize}
    \item $f$ is $((d^{0, 1}, \epsilon),( d_1, \gamma))$--robust  with $\gamma  = \prod_{i=1}^{L} \lVert W^{(i)} \rVert_1  \epsilon (\sum_{u \in \mathcal{V}} \hat{w}_u ) /\sigma,$ with $\hat{w}_u$ denoting the sum of normalized walks of length $(L-1)$ starting from node $u$ and $\mathcal{V}$ is the node set.
    \item $f$ is  $((d^{0, 1}, \epsilon),( d_\infty, \gamma))$--robust   with $\gamma  = \prod_{i=1}^{L} \lVert W^{(i)} \rVert_\infty  \epsilon \hat{w}_G /\sigma$, with $\hat{w}_G=\underset{u \in \mathcal{V}}{\max} ~ \hat{w}_u.$
\end{itemize}

\end{theorem}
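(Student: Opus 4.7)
Since $\alpha = 0$, only node features are perturbed, so the shared normalized adjacency $\tilde{A}$ is preserved across the two forward passes. The plan is: (i) derive a deterministic pointwise bound on the output discrepancy by propagating $\|X - \tilde{X}\| \leq \epsilon$ through the $L$ layers using the $1$-Lipschitz continuity of each $\phi^{(\ell)}$, (ii) identify the arising factor in $\tilde{A}$ combinatorially as a sum of normalized walks, and (iii) convert the deterministic bound into the probabilistic vulnerability bound of \Cref{def:robustness} via Markov's inequality, which accounts for the $1/\sigma$ factor in $\gamma$.

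For step (i), let $\Delta^{(\ell)} = h^{(\ell)} - \tilde{h}^{(\ell)}$, so that $\Delta^{(0)} = X - \tilde{X}$. Applied componentwise, the $1$-Lipschitz property of $\phi^{(\ell)}$ yields $|\Delta^{(\ell)}_{u,i}| \leq |(\tilde{A}\,\Delta^{(\ell-1)}\,W^{(\ell)})_{u,i}|$; since $\tilde{A}$ has non-negative entries, the triangle inequality refines this to the entrywise majorization $|\Delta^{(\ell)}| \leq \tilde{A}\,|\Delta^{(\ell-1)}|\,|W^{(\ell)}|$, where $|\cdot|$ is taken entrywise. Iterating the recursion back to the input then produces the entrywise bound
\[
|\Delta^{(L)}| \;\leq\; \tilde{A}^{L-1}\,|\Delta^{(0)}|\,\prod_{\ell=1}^{L} |W^{(\ell)}|.
\]

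For step (ii), the standard interpretation of matrix powers gives that $(\tilde{A}^{L-1})_{u,v}$ equals the weighted sum of normalized walks of length $L-1$ from $u$ to $v$, so that $\sum_v (\tilde{A}^{L-1})_{u,v} = \hat{w}_u$ and $\max_u \hat{w}_u = \hat{w}_G$. Taking the output $\ell_1$ norm of the entrywise bound --- i.e. summing $|\Delta^{(L)}_{u,i}|$ over both indices --- contracts the $n$-dimension of $\tilde{A}^{L-1}$ by row summation and hence produces the factor $\sum_u \hat{w}_u$; taking the output $\ell_\infty$ norm maximizes over $u$ and therefore produces $\hat{w}_G$. Submultiplicativity of operator norms, together with the identity $\bigl\||W|\bigr\|_p = \|W\|_p$ for $p \in \{1,\infty\}$, then lets me pull the product $\prod_\ell \|W^{(\ell)}\|_p$ out of the bound, giving in each case the required form $\epsilon \cdot \prod_\ell \|W^{(\ell)}\|_p \cdot (\text{walk term})$.

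Step (iii) is then immediate: the deterministic uniform bound $d_p(f(G,X), f(G,\tilde{X})) \leq B$ valid over all $\tilde{X}$ in the $\epsilon$-ball, combined with Markov's inequality, yields $Adv^{0,1}_\epsilon[f] \leq B/\sigma$, which is exactly $\gamma$ in each case. The step I expect to be the main obstacle is the careful bookkeeping in step (ii): I need to reconcile that $W^{(\ell)}$ acts on the right of $h^{(\ell-1)}$ (so the matrix norm natural on the feature dimension is the dual of the one on the node dimension), and to pin down the precise exponent of $\tilde{A}$ --- specifically, whether walks of length $L-1$ or $L$ arise, depending on whether the first layer performs aggregation before or after the initial linear map in the paper's convention. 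Both subtleties are resolved by writing the entrywise recursion out explicitly for the first layer and then iterating.
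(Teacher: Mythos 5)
Your proposal is correct and follows essentially the same route as the paper's proof: propagate the feature perturbation through the $L$ layers using the nonexpansive activations and the entrywise nonnegativity of $\tilde{A}$, identify the resulting power of $\tilde{A}$ with sums of normalized walks, and convert the deterministic Lipschitz bound into the probabilistic vulnerability bound via Markov's inequality --- your entrywise matrix majorization $|\Delta^{(L)}| \leq \tilde{A}^{\,\cdot}\,|\Delta^{(0)}|\,\prod_\ell |W^{(\ell)}|$ is just a tidier packaging of the paper's node-wise nested-sum expansion. The one caveat is the walk-length exponent you flag yourself: iterating your recursion $L$ times actually yields $\tilde{A}^{L}$ rather than $\tilde{A}^{L-1}$, but the paper's own derivation carries the identical off-by-one (its nested sums also perform $L$ aggregations while the statement speaks of walks of length $L-1$), so this is a shared convention issue rather than a defect of your argument.
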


As previously mentioned, the provided upper-bound in Theorem \ref{theo:main_result} is directly dependent on both the graph structure (in terms of walks from the graph's nodes) and the propagation scheme (where the length of the considered walks depends on the number of message-passing iterations). 
The derived upper-bound is intuitive: effectively using feature-based attacks is increasingly difficult with increasing graph sparsity; or conversely, in dense graphs node feature attacks can have greater effect since the message passing scheme propagates them along a greater number of walks. To make this precise, the expected sum of normalized walks in a sparser graph tends to be lower than in a denser one, resulting in a reduced bound, indicating a more robust model within the considered neighborhood. We finally note that this bound applies to both targeted and untargeted feature modifications, whether limited to a specific subset or all the nodes. 
While our main focus is on node feature-based attacks, our analysis provided in Theorem \ref{theo:main_result} can be extended to structural attacks; Theorem \ref{theo:structural_perturbtation} sheds light on this latter case. 

\begin{theorem}
\label{theo:structural_perturbtation}

Let $f: (\mathcal{G}, \mathcal{X}) \rightarrow \mathcal{Y}$ denote a graph function composed of $L$ GCN layers, where $W^{(i)}$ denotes the weight matrix of the $i$-th layer. Further, let $d^{1,0}$ be a graph distance. For structural attacks with a budget $\epsilon$, the function $f$ is $((d^{1, 0}, \epsilon),( d_2, \gamma))$--robust with
\begin{equation}\label{eqn:StructuralAttacks}
\gamma = \prod_{i=1}^{L} \lVert W^{(i)} \rVert_2 \lVert X \rVert_2 \epsilon (1 + L  \prod_{i=1}^{L} \lVert W^{(i)} \lVert_2 )/\sigma.    
\end{equation}

\end{theorem}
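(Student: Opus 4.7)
The plan is to mimic the strategy likely used for Theorem~\ref{theo:main_result}, but adapted to the case where the graph shift operator itself is perturbed rather than the node features. First I would apply Markov's inequality to the vulnerability in \eqref{equation:robustness_definition_1} with $d_{\mathcal{Y}} = \|\cdot\|_2$: the event $\{\|f(\tilde{G},X) - f(G,X)\|_2 > \sigma\}$ has probability at most $\mathbb{E}[\|f(\tilde{G},X) - f(G,X)\|_2]/\sigma$, so it suffices to establish a deterministic uniform bound on $\|f(G,X) - f(\tilde{G},X)\|_2$ as $\tilde{G}$ varies over the $\epsilon$-ball, with target value $\sigma\gamma$.

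Next I would unroll the GCN recursion \eqref{equation:gcn} layer by layer. Denoting the (normalized) adjacencies by $A$ and $A'$ so that $\|A - A'\|_2 \leq \epsilon$, and setting $\Delta_\ell = \|h^{(\ell)} - h'^{(\ell)}\|_2$, $M_\ell = \|h'^{(\ell)}\|_2$, the $1$-Lipschitz continuity of $\phi^{(\ell)}$, the sub-multiplicativity of the spectral norm, and the triangle-inequality decomposition $A h^{(\ell-1)} - A' h'^{(\ell-1)} = A\bigl(h^{(\ell-1)} - h'^{(\ell-1)}\bigr) + (A - A') h'^{(\ell-1)}$ together yield the recursion
\[
\Delta_\ell \;\leq\; \|W^{(\ell)}\|_2\bigl(\|A\|_2\,\Delta_{\ell-1} + \epsilon\,M_{\ell-1}\bigr),
\]
complemented by $M_\ell \leq \prod_{i=1}^{\ell}\|W^{(i)}\|_2\,\|X\|_2$, obtained from $\|A'\|_2 \leq 1$ together with $\phi^{(\ell)}(0) = 0$. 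Using $\|A\|_2 \leq 1$ and $\Delta_0 = 0$, unrolling the resulting linear recursion across the $L$ layers produces a telescoping sum of products of weight norms, from which the stated $\gamma$ follows after dividing by $\sigma$.

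The main obstacle is the coupled nature of the structural perturbation: unlike Theorem~\ref{theo:main_result}, where only $X$ changes while $\tilde{A}$ is fixed, here the propagation operator is altered at every layer, so the proof must track simultaneously the \emph{direct} contribution $(A - A') h'^{(\ell-1)}$ and the \emph{propagated} contribution $A\bigl(h^{(\ell-1)} - h'^{(\ell-1)}\bigr)$ through all $L$ iterations. A secondary technical point concerns relating $\|A - A'\|_2$ to the distance in \eqref{eqn_GraphDistanceMetric}: since $\tilde{A} = D^{-1/2} A D^{-1/2}$ is non-linear in $A$, one either interprets the ``graph shift operator'' in \eqref{eqn_GraphDistanceMetric} as the normalized adjacency directly (in which case the step is immediate), or must prove an auxiliary perturbation bound for the symmetric normalization that depends on the minimum degree. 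Finally, the explicit factor $\prod_{i=1}^L \|W^{(i)}\|_2$ appearing inside the parenthesis of $\gamma$ should be reconciled with the unrolled recursion: it arises from combining the uniform bound on $M_\ell$ across layers with the telescoping sum, and verifying that no step has been loosened beyond this factor is the principal sanity check before the proof is deemed complete.
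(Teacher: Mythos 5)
Your proposal matches the paper's proof essentially step for step: the same triangle-inequality split $A h^{(\ell-1)} - A' h'^{(\ell-1)} = A(h^{(\ell-1)} - h'^{(\ell-1)}) + (A - A')h'^{(\ell-1)}$, the same auxiliary bound $\lVert h'^{(\ell)}\rVert_2 \leq \prod_i \lVert W^{(i)}\rVert_2 \lVert X\rVert_2$ via $\lVert \tilde{A}'\rVert_2 \leq 1$, the same unrolled recursion, and Markov's inequality to pass from the Lipschitz bound to the probabilistic statement. The two caveats you flag are exactly how the paper resolves them: it takes $\lVert \tilde{A} - \tilde{A}'\rVert_2 \leq \epsilon$ directly on the normalized adjacency, and it absorbs the telescoping sum into the stated $\gamma$ by additionally assuming $\lVert W^{(i)}\rVert \geq 1$.
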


We observe the upper bound in \eqnref{eqn:StructuralAttacks} to functionally depend on the size of the graph via $\lVert X \rVert_2,$ yielding the intuitive result that larger graphs have more potential targets to attack and thereby give rise to less robust models. The proofs of Theorems \ref{theo:main_result} and \ref{theo:structural_perturbtation} are provided in Appendix \ref{sec:proof_theorem}.

\subsection{On the Generalization to Other Graph Neural Networks} \change{Shall we add results on GSage or GAT ? JL: That'd be nice if possible!}

While our work focuses on GCNs, our analysis can be extended to any GNN. Our theoretical analysis is contingent on assuming the input node feature space to be bounded, which is a realistic assumption for real word data. For illustration, Theorem \ref{theo:gin_results} derives the upper-bound of the specific case of GINs. 

\begin{theorem}
\label{theo:gin_results}
Let $f: (\mathcal{G}, \mathcal{X}) \rightarrow \mathcal{Y}$ be composed of $L$ GIN-layers (with parameter $\zeta = 0,$ that is usually denoted by $\epsilon$ in the literature) and $W^{(i)}$ denote the weight matrix of the $i$-th MLP layer. We consider the input node feature space to be bounded, i.e.,  $\lVert X \rVert_2 < B$ for some $B\in \mathbb{R},$  and graphs of maximum degree $\Delta_G.$ For node feature-based attacks, with a budget $\epsilon$, the function $f$ is $((d^{0, 1}, \epsilon),( d_\infty, \gamma))$--robust with
    $$\gamma  = \prod_{l=1}^L \lVert W^{(l)}\lVert_{\infty}  (B ~L ~\Delta_G  + \epsilon)  /\sigma.$$

\end{theorem}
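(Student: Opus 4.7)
The plan is to mirror the argument used to prove Theorem~\ref{theo:main_result}, with two adaptations for GIN: (i)~replace the normalized aggregator $\tilde{A}$ by the GIN aggregator $A + I$, so that the walk-count factor $\hat{w}_G$ is replaced by a quantity controlled by $\Delta_G$ and $L$; and (ii)~exploit the boundedness assumption $\lVert X \rVert_2 < B$ to absorb part of the sensitivity into an additive, rather than multiplicative, term. I would begin by recalling that a GIN layer with $\zeta = 0$ acts as $h^{(\ell)} = \phi^{(\ell)}\!\bigl((A + I)\,h^{(\ell-1)} W^{(\ell)}\bigr)$ with $\phi^{(\ell)}$ $1$-Lipschitz and $h^{(0)} = X$, and then set $\Delta^{(\ell)} := h^{(\ell)} - \tilde{h}^{(\ell)}$, where $\tilde{h}^{(\ell)}$ is the iterate produced from a perturbed input $\tilde{X}$ satisfying $\lVert X - \tilde{X} \rVert_2 \leq \epsilon$.

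The main technical steps would then be as follows. (1)~Apply $1$-Lipschitzness of $\phi^{(\ell)}$ together with sub-multiplicativity of the operator $\infty$-norm to obtain
$\lVert \Delta^{(\ell)} \rVert_\infty \leq \lVert W^{(\ell)} \rVert_\infty\,\lVert (A + I)\,\Delta^{(\ell-1)} \rVert_\infty$. (2)~Split the aggregator as $(A + I)\,\Delta^{(\ell-1)} = \Delta^{(\ell-1)} + A\,\Delta^{(\ell-1)}$ and bound the adjacency term via $\lVert A\,\Delta^{(\ell-1)} \rVert_\infty \leq \Delta_G\,(\lVert h^{(\ell-1)} \rVert_\infty + \lVert \tilde{h}^{(\ell-1)} \rVert_\infty)$, using that each row of $A$ contains at most $\Delta_G$ nonzeros. (3)~Use the feature-space bound to assert $\lVert h^{(\ell-1)} \rVert_\infty \leq B$ at every layer, so that the adjacency term contributes an additive $\lesssim B\,\Delta_G$ per step, rather than a multiplicative $(1+\Delta_G)$ factor. (4)~Unroll the resulting recurrence $a_\ell \leq \lVert W^{(\ell)} \rVert_\infty\,(a_{\ell-1} + B\,\Delta_G)$ with $a_0 \leq \epsilon$ across the $L$ layers; after collecting terms this yields the desired form $\prod_{l=1}^{L} \lVert W^{(l)} \rVert_\infty \,(B L\,\Delta_G + \epsilon)$. (5)~Conclude by applying Markov's inequality to $Adv^{0,1}_\epsilon[f]$ exactly as in Theorem~\ref{theo:main_result}, dividing the deterministic bound on $\lVert f(G,\tilde{X}) - f(G,X)\rVert_\infty$ by $\sigma$.

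The main obstacle I anticipate is step~(3): converting the naive layer-wise bound — which by submultiplicativity of $\lVert \cdot \rVert_\infty$ would yield a multiplicative $(1+\Delta_G)^L$ factor — into the additive $BL\,\Delta_G$ term stated in the theorem. This requires a uniform bound on $\lVert h^{(\ell)} \rVert_\infty$ across layers, which does not follow automatically from $\lVert X \rVert_2 < B$: one needs either an additional structural assumption on the activations $\phi^{(\ell)}$ (for instance, boundedness, or contractivity in combination with the weight-matrix norms) or an interpretation of the boundedness hypothesis as applying layer-wise. Once such a uniform bound is established, the telescoping in step~(4) and the concluding probabilistic step are routine.
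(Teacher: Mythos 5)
Your proposal follows essentially the same route as the paper's proof: a layer-wise bound that separates the self-term from the neighbor-aggregation term, bounds the latter by $B\,\Delta_G$ using boundedness of the hidden representations, unrolls the recurrence (implicitly assuming $\lVert W^{(l)}\rVert \geq 1$ so the products collect into $\prod_l \lVert W^{(l)}\rVert_\infty (B L \Delta_G + \epsilon)$, an assumption the paper states explicitly), and finishes with Markov's inequality. The obstacle you flag in step~(3) is real, and the paper resolves it exactly as you anticipate: it reinterprets $B$ as the maximum of the bounds $B_\ell$ over all hidden spaces $\mathcal{H}_\ell$, i.e., a uniform layer-wise bound rather than a consequence of $\lVert X \rVert_2 < B$ alone.
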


Theorem \ref{theo:gin_results} is proved in Appendix \ref{sec:proof_generalization}. 
In addition, following the same assumption, it is possible to derive an upper bound on the GCN's robustness when subject to both structural and feature-based attacks simultaneously. The complete study and additional information are provided in Appendix \ref{appendix:simultaneous_attacks}.

\subsection{Enhancing the Robustness of Graph Convolutional Networks}

Leveraging the established upper-bound on the expected robustness in Theorem \ref{theo:main_result}, we now introduce a novel approach, called Graph Convolutional Orthonormal Robust Networks (GCORNs), which enhances the robustness of a GCN to node feature perturbations while maintaining its ability to learn accurate node and graph representations. 
To this end, we aim to design a GCN architecture for which the upper bound $\gamma$ as stated in Theorem~\ref{theo:main_result} is low. As this quantity is dependent on the norm of the weight matrices in each layer of the GCN, we propose to control the norms of these matrices. 

Enhancing the robustness through enforcing matrix norm constraints during training has previously been studied in the context of DNNs through methods such as Parseval regularization \citep{parseval_cisse} or optimizing over the orthogonal manifold \citep{orthogonal_manifold_ablin}. However, as observed in our experiments, the added constraints of these methods can negatively impact the performance of the graph function and additionally the hyper-parameter tuning can be tricky especially when dealing with large datasets. \todo{Recheck the formulation of this part.} In our work, we choose to tackle the task by modifying the mathematical formulation of the GCN in \eqnref{equation:gcn} to encourage the orthonormality of the weights. 
We have therefore chosen to use an iterative algorithm \citep{bjork_paper}, that mainly was used in the literature for studying Lipschitz approximation \citep{liptschitz_function_approx}, to ensure a fair trade-off between clean and attacked accuracy. We note that, based on the introduced Theorem \ref{theo:main_result}, any orthonormalization method can theoretically enhance the underlying model's robustness.
Given our weight matrix $W$, the iterative process which is computed using Taylor expansion, consists of finding the closest orthonormal matrix $\hat{W}$ to our weight matrix $W$. By considering $\hat{W}_0 = W$, we recursively compute $\hat{W}_k$ from 
\begin{equation}
    \hat{W}_{k+1} = \hat{W}_k\left(I + \tfrac{1}{2} Q_k   + \ldots + (-1)^p \tbinom{-1/2}{p} Q_k^p\right),
    \label{equation:bjork}
\end{equation}
with $k\geq0$, $Q_k = I - \hat{W}_k^T \hat{W}_k$ and $p\geq1$ is the chosen order. A key advantage of this orthonormalization approach is its differentiability, hence its compatibility with the back-propagation process of training GNNs. 
By incorporating this projection into each forward pass during the model's training, we encourage the orthonormality of the weights, consequently enhancing its expected robustness.

\textbf{Training and Convergence of Our Approach.} Encouraging the orthonormality of the weight matrices for each layer in our framework mitigates the problems of vanishing and exploding gradients. Our proposed method preserves the gradient norm, leading to enhanced convergence and improved learning \citep{guo2021orthogonal}. It is important to note that the training of our framework relies on the convergence of the iterative orthonormalization process.
From the original work \citep{bjork_paper} which examined this aspect, convergence is contingent on the condition $\lVert W^T W - I \lVert \leq 1$ being satisfied, which can be guaranteed by applying a scaling factor, based on the spectral norm, to the weight matrices before the iterative process. We noticed that this approach not only guarantees that the weight matrices satisfy the necessary condition for convergence but also helps to speed up the convergence and the training process as analyzed by previous work \citep{salimans2016weight}.

\textbf{Complexity of Our Approach.} Equation \eqnref{equation:bjork} entails a trade-off between convergence speed and the approximation's precision. A higher order $p$ yields closer projections in each iteration, resulting in an increased computational complexity. This trade-off must be carefully considered to strike a balance between complexity and approximation accuracy. The main complexity of the method results from the matrix products which are $\mathcal{O}(e^3)$ where $e$ represents the embedding dimension. Our experiments indicate that a low order and a small number of iterations often yield a satisfactory approximation in practice. We note that our method's complexity does not increase with growing input graph size, distinguishing it from other defenses such as GNNGuard which has a complexity of $\mathcal{O}(e \times |E|)$, where $|E|$ represents the number of edges or GNN-SVD which computes a low-rank approximation through SVD with a corresponding complexity of $\mathcal{O}(n^3)$, with $n$ being the number of nodes. We point out that we conducted an ablation study (reported in Appendix \ref{sec:time_complexity_analysis}) on the effect of changing the order $p$ and number of iterations $k$ on both the precision of the orthonormal projection (represented by both the clean accuracy and the adversarial accuracy) and the time complexity.

\section{Estimation of Our Robustness Measure} \label{sec:probabilistic_evaluation}
\todo{Add remark about the fact that we can use any distribution to sample from}

Based on our introduced definition of expected robustness in Section \ref{sec:adversarial_robustness}, we introduce an algorithm to empirically estimate the quantity $Adv^{\alpha, \beta}_{\epsilon}[f]$, serving therefore as a new metric to evaluate GNN robustness. While most defense methods on graphs are evaluated using the worst-case accuracy of a GNN when exposed to specific attack schemes \citep{bojchevski_certificate_2020,pgd_paper,zugner2019adversarial}, $Adv^{\alpha, \beta}_{\epsilon}[f]$ is an attack-independent and model-agnostic robustness metric based on uniform sampling. It can therefore be used as a \textit{security checkpoint}, in addition to existing robustness metrics, to evaluate the GNN robustness against unknown attack distributions.

Numerous model-agnostic robustness metrics \citep{weng2018clever,DBLP:journals/corr/abs-2102-03716} have emerged primarily in computer vision. These metrics mainly assess robustness through measuring the \textit{distortion} between input and output manifolds. While these methods can be extended to the graph classification task with appropriate distortion definitions, their application in the node classification context proves challenging. To our knowledge, there exists no specific or analogous adaptation within the context of graph representation learning. Thus, our proposed evaluation aims to address this gap.


Given that the expected value of an indicator random variable for an event  $E$ is the probability of that event, i.e., $\mathbb{E}[ \mathbf{1} \{ E \} ] = P(E)$, we use \eqnref{equation:robustness_expected_value} as an equivalent formulation of $Adv^{\alpha, \beta}_{\epsilon}[f]$ in \eqnref{equation:robustness_definition_1}.
\begin{equation}\label{equation:robustness_expected_value}
           Adv^{\alpha, \beta}_{\epsilon}[f]   = \mathbb{E}_{\substack{(G, X) \sim \mathcal{D}_{\mathcal{G}, \mathcal{X}} ,\\(\tilde{G}, \tilde{X}) \in B^{\alpha, \beta}((G,X) ,\epsilon)}}
        \left [ \mathbf{1} \{d_{\mathcal{Y}}(f(\tilde{G}, \tilde{X}), f(G, X)) > \sigma  \}  \right ]. 
        \end{equation}
From the law of large numbers \citep{bernoulli1713ars}, the mean sampling is an unbiased estimator of the vulnerability quantity $Adv^{\alpha, \beta}_{\epsilon}[f].$ Consequently, we estimate a GNN's expected robustness by generating multiple graph pairs $([G,X],[\Tilde{G} , \Tilde{X}])$, where each input graph $[G,X]$ is sampled from the underlying distribution $D_{\mathcal{G}, \mathcal{X}}$ and $[\Tilde{G},\Tilde{X}]$ is uniformly sampled from the ball of radius $\epsilon$ centered around $[G,X],$ \ie $d^{\alpha, \beta}([G, X], [\Tilde{G} , \Tilde{X} ]) \leq \epsilon$. 
This approach can be used for both structural and feature attacks. Since we focus on the latter, we propose a sampling strategy based on the distance
\begin{equation}
\label{eq:prob_eval_distance}
    d^{0,1}([G, X], [\Tilde{G} , \Tilde{X}]) = \lVert X -\Tilde{X} \lVert_{\mathcal{X}}  = \max_{i \in \{1,\ldots, n\}} \lVert X_{i}-\Tilde{X}_{i}\lVert_p ,
\end{equation}
where $\lVert\cdot\lVert_p$ is the $L_p$-norm ($p>0$) and $X_{i}, \Tilde{X}_{i}$ are the $i$-th rows of the matrices $X,\Tilde{X} \in \mathbb{R}^{n\times K}.$
Since the original dataset consists of samples $[G,X]$ from the  underlying distribution $D_{\mathcal{G}, \mathcal{X}}$, we only need to sample uniformly graphs $[\Tilde{G},\Tilde{X}]$ from the neighborhood of the dataset graphs $[G,X]$ such that $\lVert X - \Tilde{X}\lVert_{\mathcal{X}} \leq \epsilon$ and $\tilde{G} = G$. Sampling such $\tilde{X}$ is equivalent to first sample $Z \in  \mathbb{R}^{n \times K}$ from $ \mathcal{B}_{\epsilon} =\left \{ Z\in \mathbb{R}^{n \times K}: \lVert Z \lVert_{\mathcal{X}} \leq \epsilon\right \}$ and then set $\Tilde{X} = X + Z$. Sampling from the ball $\mathcal{B}_{\epsilon}$ could be performed by partitioning the ball with respect to the possible values of $\lVert Z \lVert_{\mathcal{X}}$.
\begin{equation*}
    \centering
    S_r =  \{  Z\in \mathbb{R}^{n\times K}: \lVert Z \lVert_{\mathcal{X}}  = r \}, \qquad
    \mathcal{B}_{\epsilon}  = \cup_{r \leq \epsilon} S_r;  \qquad
    \forall r\neq r'~~~~S_r \cap S_{r'} = \emptyset.
\end{equation*}


We therefore propose to use the following \textit{Stratified Sampling} approach, which is based on two main steps: \textbf{(i)} Sample a distance $r$ from $[0, \epsilon]$  using the distribution $p_\epsilon$ defined in Lemma \ref{lem:sampling_radius}; \textbf{(ii)} Sample $Z $ from  $S_r$. Additional details on the prior distribution $r$ and the sampling from $S_r$ are in Appendix~\ref{appendix:prob_eval}.

\begin{lemma}
\label{lem:sampling_radius}
Let $\mathbb{R}^K$ be the real finite-dimensional space and $\epsilon$ a positive real number. If  $R^{(p)}$ is the random variable indicating the maximum of the $L_p$ norm's values inside the ball of radius $\epsilon$, i.e., 
$\mathcal{B}_\epsilon = \left \{  Z \in \mathbb{R}^{n\times K}: \max_{i \in \{1,\ldots, n\}} \lVert Z_i \lVert_p \leq \epsilon  \right \} .$
Then, for every $p>0$, the density distribution of $R^{(p)}$ does not depends on $p$ and is defined as follows, $ p_\epsilon(r)   =  K \tfrac{1}{\epsilon} \left ( \tfrac{r}{\epsilon}\right )^{K-1}\mathbf{1}\{ 0 \leq r \leq \epsilon \}. $

\end{lemma}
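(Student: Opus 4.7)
The plan is to identify $p_\epsilon$ as the probability density of $R^{(p)}$ by computing its cumulative distribution function (CDF) via a volume-ratio argument, exploiting the positive homogeneity of every $L_p$-norm, and then differentiating.

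First, for $r \in [0,\epsilon]$, I would write
\[
\mathbb{P}(R^{(p)} \leq r) \;=\; \mathbb{P}(Z \in \mathcal{B}_r) \;=\; \frac{\mathrm{Vol}(\mathcal{B}_r)}{\mathrm{Vol}(\mathcal{B}_\epsilon)},
\]
where $\mathcal{B}_r = \{Z : \lVert Z \rVert_{\mathcal{X}} \leq r\}$. Here I use the uniformity of $Z$ on $\mathcal{B}_\epsilon$ together with the nesting $\mathcal{B}_r \subseteq \mathcal{B}_\epsilon$. The key step is to observe that the linear map $Z \mapsto (r/\epsilon)\, Z$ is a bijection from $\mathcal{B}_\epsilon$ onto $\mathcal{B}_r$: this follows from the positive homogeneity $\lVert c Z_i \rVert_p = |c|\,\lVert Z_i \rVert_p$ of every $L_p$-norm and from the fact that $\lVert \cdot \rVert_{\mathcal{X}}$ is a maximum of such row-norms, hence itself positively homogeneous. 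A standard change of variables then gives $\mathrm{Vol}(\mathcal{B}_r) = (r/\epsilon)^K \,\mathrm{Vol}(\mathcal{B}_\epsilon)$, where the $p$-dependent shape constant of an $L_p$-ball appears identically in the numerator and denominator and therefore cancels. This cancellation is precisely what produces the $p$-independence claimed in the lemma.

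Second, differentiating the CDF $F(r) = (r/\epsilon)^K$ on $(0,\epsilon)$ with respect to $r$ immediately yields
\[
p_\epsilon(r) \;=\; \frac{K}{\epsilon} \left( \frac{r}{\epsilon} \right)^{K-1},
\]
while outside $[0,\epsilon]$ the density is zero because $R^{(p)}$ is supported in $[0,\epsilon]$ almost surely; packaging the two regimes with the indicator $\mathbf{1}\{0 \leq r \leq \epsilon\}$ recovers the stated formula.

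The only real obstacle is a clean invocation of the change-of-variables formula: one has to verify rigorously that $Z \mapsto (r/\epsilon) Z$ maps $\mathcal{B}_\epsilon$ onto $\mathcal{B}_r$ bijectively and that its Jacobian determinant is exactly $(r/\epsilon)^K$, which amounts to noting that the scaling acts on each coordinate of $Z$ independently. Everything else is routine, and the $p$-independence --- the most interesting content of the lemma --- drops out for free because the scaling argument only uses the positive homogeneity of $\lVert \cdot \rVert_p$ and never its detailed analytic form.
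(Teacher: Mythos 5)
Your overall strategy is the same as the paper's: both arguments reduce the lemma to the homogeneity of the $L_p$-ball volume, $\mathrm{Vol}(\mathcal{B}_r) = (r/\epsilon)^K\,\mathrm{Vol}(\mathcal{B}_\epsilon)$, so that the $p$-dependent shape constant cancels in a ratio. The paper obtains this scaling by a Fubini induction on the coordinates and then forms the quotient of the derivative of the volume (the ``surface'') by the volume of $\mathcal{B}_\epsilon$, whereas you obtain it by the linear change of variables $Z \mapsto (r/\epsilon)Z$ and then differentiate the CDF $F(r) = (r/\epsilon)^K$. Your CDF route is the cleaner of the two, and the explanation of the $p$-independence is identical in both.

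There is, however, one concrete error in your write-up. You apply the change of variables to $Z \in \mathbb{R}^{n\times K}$ and assert that the Jacobian determinant of $Z \mapsto (r/\epsilon)Z$ equals $(r/\epsilon)^K$ ``because the scaling acts on each coordinate of $Z$ independently.'' That very observation gives $(r/\epsilon)^{nK}$, since $Z$ has $nK$ coordinates; with that exponent your CDF would be $(r/\epsilon)^{nK}$ and the resulting density $nK\,\epsilon^{-1}(r/\epsilon)^{nK-1}$, which does not match the claimed formula when $n>1$. The paper sidesteps this by first arguing that the law of $R^{(p)}$ does not depend on $n$ and setting $n=1$, so that the ball lives in $\mathbb{R}^{K}$ and the Jacobian of the scaling really is $(r/\epsilon)^{K}$. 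Your proof needs the same reduction to a single row (consistent with the paper's row-wise stratified-sampling procedure) before the scaling argument is run in dimension $K$; once that step is inserted, the remainder of your argument is correct.
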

The proof is available in Appendix \ref{appendix:proof_lemma}. We note that the proposed framework is a practical and theory-based robustness evaluation approach applicable to any GNN, since no assumption has been made on the underlying architecture. Algorithm \ref{algo:prob_eval} in Appendix \ref{appendix:prob_eval} offers a summary of the approach.

\section{Empirical Investigation}\label{sec:experiments}
This section examines the practical impact of our theoretical findings on real-world datasets where we aim to investigate the robustness of GCORN in comparison to other defense benchmarks.

\begin{table}[t]

\tiny
\centering
\caption{Attacked classification accuracy ($\pm$ standard deviation) of the models on different benchmark node classification dataset after the feature attacks. }
\label{tab:results_node_classification}
\resizebox{\textwidth}{!}{%
\begin{tabular}{@{}llcccccc@{}}
\toprule
Attack                                                                            & Dataset    & \textbf{GCN}                       & \textbf{GCN-k} & \textbf{AirGNN}         & \textbf{RGCN}           & \textbf{ParsevalR} & \textbf{GCORN}           \\ \midrule
\multirow{5}{*}{\begin{tabular}[c]{@{}l@{}}Random \\ ($\psi = 0.5$)\end{tabular}} & Cora       & \multicolumn{1}{l}{68.4 $\pm$ 1.9} & 69.2 $\pm$ 2.6 & 73.5 $\pm$ 1.9          & 71.6 $\pm$ 0.3          & 72.9 $\pm$ 0.9     & \textbf{77.1 $\pm$ 1.8}  \\
                                                                                  & CiteSeer   & 57.8 $\pm$ 1.5                     & 62.3 $\pm$ 1.2 & 64.6 $\pm$ 1.6          & 63.7 $\pm$ 0.6          & 65.1 $\pm$ 0.8     & \textbf{67.8 $\pm$ 1.4}  \\
                                                                                  & PubMed     & \multicolumn{1}{l}{68.3 $\pm$ 1.2} & 71.2 $\pm$ 1.1 & 70.9 $\pm$ 1.3          & 71.4 $\pm$ 0.5          & 71.8 $\pm$ 0.8     & \textbf{73.1 $\pm$ 1.1}  \\
                                                                                  & CS         & 85.3 $\pm$ 1.1                     & 86.7 $\pm$ 1.1 & 87.5 $\pm$ 1.6          & 88.2 $\pm$ 0.9          & 87.6 $\pm$ 0.6     & \textbf{89.8 $\pm$ 1.2}  \\
                                                                                  & OGBN-Arxiv & 68.2 $\pm$ 1.5                     & 52.8 $\pm$ 0.5 & 66.5 $\pm$ 1.3          & 63.8 $\pm$ 1.9          & 68.3 $\pm$ 1.9     & \textbf{69.1 $\pm$ 1.8}  \\ \midrule
\multirow{5}{*}{\begin{tabular}[c]{@{}l@{}}Random \\ ($\psi = 1.0$)\end{tabular}} & Cora       & \multicolumn{1}{l}{41.7 $\pm$ 2.1} & 46.3 $\pm$ 2.8 & 53.7 $\pm$ 2.2          & 52.8 $\pm$ 1.6          & 55.3 $\pm$ 1.2     & \textbf{57.6 $\pm$ 1.9}  \\
                                                                                  & CiteSeer   & 38.2 $\pm$ 1.3                     & 45.3 $\pm$ 1.4 & 49.8 $\pm$ 2.1          & 43.7 $\pm$ 2.2          & 51.2 $\pm$ 1.2     & \textbf{57.3 $\pm$ 1.7}  \\
                                                                                  & PubMed     & \multicolumn{1}{l}{60.1 $\pm$ 1.7} & 62.3 $\pm$ 1.3 & 62.4 $\pm$ 1.2          & 61.9 $\pm$ 1.2          & 61.3 $\pm$ 1.7     & \textbf{65.8 $\pm$ 1.4}  \\
                                                                                  & CS         & 69.9 $\pm$ 1.3                     & 73.2 $\pm$ 0.9 & 76.7 $\pm$ 2.8          & 76.2 $\pm$ 1.4          & 78.7 $\pm$ 1.2     & \textbf{81.3 $\pm$ 1.6}  \\
                                                                                  & OGBN-Arxiv & 66.4 $\pm$ 1.9                     & 46.6 $\pm$ 0.6 & 62.7 $\pm$ 1.6          & 63.0 $\pm$ 2.4          & 66.1  $\pm$ 0.7    & \textbf{67.3  $\pm$ 2.1} \\ \midrule
\multirow{5}{*}{PGD}                                                              & Cora       & \multicolumn{1}{l}{54.1 $\pm$ 2.4} & 58.3 $\pm$ 1.6 & 68.2 $\pm$ 1.8          & 62.5 $\pm$ 1.2          & 68.6 $\pm$ 1.7     & \textbf{71.1 $\pm$ 1.4}  \\
                                                                                  & CiteSeer   & 52.3 $\pm$ 1.1                     & 59.6 $\pm$ 1.6 & 59.3 $\pm$ 2.1          & 61.9 $\pm$ 1.1          & 62.1 $\pm$ 1.5     & \textbf{65.6 $\pm$ 1.4}  \\
                                                                                  & PubMed     & \multicolumn{1}{l}{66.1 $\pm$ 2.1} & 67.3 $\pm$ 1.3 & 70.8 $\pm$ 1.7          & 69.5 $\pm$ 0.9          & 68.9 $\pm$ 2.1     & \textbf{72.3 $\pm$ 1.3}  \\
                                                                                  & CS         & 71.3 $\pm$ 1.1                     & 74.1 $\pm$ 0.8 & 76.3 $\pm$ 2.1          & 76.6 $\pm$ 1.2          & 77.3 $\pm$ 0.6     & \textbf{79.6 $\pm$ 1.2}  \\
                                                                                  & OGBN-Arxiv & 67.5 $\pm$ 0.9                     & 49.9 $\pm$ 0.7 & 55.7 $\pm$ 0.9          & 63.6 $\pm$ 0.7          & 67.6 $\pm$ 1.2     & \textbf{68.1 $\pm$ 1.1}  \\ \midrule
\multirow{5}{*}{Nettack}                                                          & Cora       & \multicolumn{1}{l}{60.9 $\pm$ 2.5} & 64.2 $\pm$ 5.2 & 66.7 $\pm$ 3.8          & 63.4 $\pm$ 3.8          & 67.5 $\pm$ 2.5     & \textbf{68.3 $\pm$ 1.4}  \\
                                                                                  & CiteSeer   & 55.8 $\pm$ 1.4                     & 71.7 $\pm$ 1.4 & 67.5 $\pm$ 2.5          & 70.8 $\pm$ 3.8          & 69.2 $\pm$ 3.8     & \textbf{77.5 $\pm$ 2.5}  \\
                                                                                  & PubMed     & \multicolumn{1}{l}{60.0 $\pm$ 2.5} & 65.8 $\pm$ 2.9 & 69.2 $\pm$ 1.4          & \textbf{71.7 $\pm$ 3.8} & 68.3 $\pm$ 1.4     & 70.8 $\pm$ 1.4           \\
                                                                                  & CS         & 55.8 $\pm$ 1.4                     & 71.6 $\pm$ 1.4 & 76.7 $\pm$ 1.4          & 71.7 $\pm$ 2.9          & 75.8 $\pm$ 2.8     & \textbf{78.3 $\pm$ 1.4}  \\
                                                                                  & OGBN-Arxiv & 49.2 $\pm$ 2.9                     & 53.3 $\pm$ 1.4 & \textbf{56.7 $\pm$ 1.4} & 52.6 $\pm$ 2.5          & 55.8 $\pm$ 1.4     & 55.8 $\pm$ 1.4           \\ \bottomrule
\end{tabular}%
}
\end{table}

\subsection{Experimental Setup}

The necessary code to reproduce all our experiments is available on github \href{https://github.com/Sennadir/GCORN}{https://github.com/Sennadir/GCORN}. In this section, we focus on node classification while we report results on graph classification \citep{morris2020tudataset} in Appendix \ref{appendix:additional_results}. We use the citations networks Cora, CiteSeer, and PubMed \citep{dataset_node_classification}, the Co-author network CS \citep{cs_data}, and the citation network between Computer Science arXiv papers OGBN-Arxiv \citep{hu2021open}. Further information about the datasets and implementation details are provided in Appendix~\ref{sec:dataset_implementation_details}. 
To reduce the impact of initialization, we repeated each experiment 10 times and used the train/validation/test splits provided with the datasets and for the CS dataset, we followed the framework of \citet{yang_2016}. We note that for the node feature-based attacks, we normalized the input features to work in a continuous space allowing more flexibility in terms of available attacks.

\textbf{Attacks.} We use two evasion and one poisoning feature-based attacks: 
\textbf{(i)} The baseline random attack injecting Gaussian noise $\mathcal{N}(0, \mathbf{I})$ to the features with a scaling parameter $\psi$ controlling the attack budget;  
\textbf{(ii)} The white-box Proximal Gradient Descent \citep{pgd_paper}, which is a gradient-based approach to the adversarial optimization task. We set the perturbation rate to 15\%. While this attack has limited success for structural perturbations due to the discrete space, it is known to be powerful in continuous spaces, such as the node feature space;
\textbf{(iii)} The targeted-attack Nettack~\citep{zugner2018adversarial}, where similar to the original study, we selected 40 correctly classified target nodes, comprising 10 with the largest classification margin, 20 randomly chosen, and 10 with the smallest margin.

\textbf{Baseline Models.} 
Currently, as presented in Section~\ref{sec:related_work}, there are only few methods for defending against feature-based adversarial perturbations. We compare against \textbf{(i)} GCN-k \citep{seddik_rgcn}; \textbf{(ii)} RobustGCN (RGCN) \citep{robustGCN}; \textbf{(iii)} AIRGNN \citep{airgnn} and \textbf{(iv)} we finally compared to the Parseval Regularization (ParsevalR) \citep{parseval_cisse}, another orthonormalization method, with great success in the computer vision. The aim is mainly to motivate the merits of our proposed orthonormalization method based on the iterative weight projection.

\subsection{Experimental Results}


\textbf{Empirical Estimation of Our Expected Robustness.} We start by investigating the robustness of the different considered benchmarks through the empirical estimation of $Adv^{\alpha, \beta}_{\epsilon}[f]$ as introduced in Section \ref{sec:probabilistic_evaluation} (see Appendix \ref{appendix:epmirical_estimation} and \ref{sec:dataset_implementation_details}).  Figure \ref{fig:all_plots} (a) and (b) report the estimated values of the vulnerability $Adv^{\alpha, \beta}_{\epsilon}[f]$ for our GCORN and the considered defense benchmarks for the Cora and OGBN-Arxiv datasets, respectively. Additional results on further datasets are provided in Appendix \ref{appendix:additional_results}. The figures show that our GCORN yields the lowest adversarial vulnerability indicating therefore that our proposed approach is exhibiting greater expected robustness within the considered neighborhood.

\begin{figure}[t]
    \centering
    \includegraphics[width=0.80\textwidth]{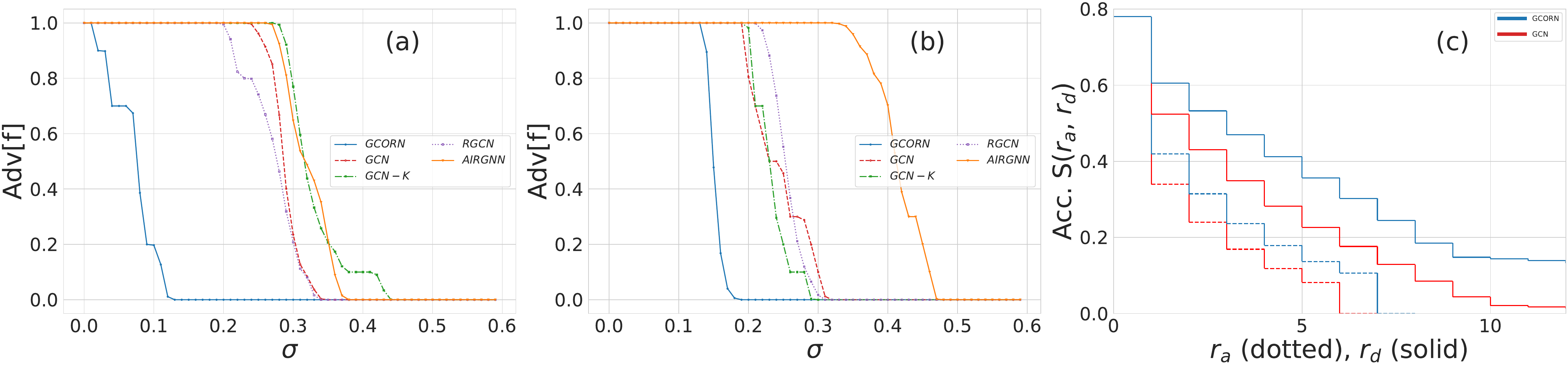}
    \caption{(a) and (b) display $Adv^{\alpha, \beta}_{\epsilon}[f]$ for Cora and OGBN-Arxiv. (c) Robustness guarantees on Cora, where $r_a,r_d$ are respectively the maximum number of adversarial additions and deletions.}
    \label{fig:all_plots}
\end{figure}




\textbf{Worst-Case Adversarial Evaluation.} Table~\ref{tab:results_node_classification} reports the attacked node classification accuracies for our GCORN and the other considered benchmarks. The results show that, when subject to features-based adversarial attacks with a varying budget, the performance of GCN is severely impacted. Instead, the proposed GCORN demonstrates a significant improvement in defending against these attacks in comparison to other methods. For instance, GCORN is outperforming the GCN-k by an average of $\sim12$\% in accuracy. In certain situations, GCORN demonstrates the ability to recover the GCN's performance to a level equivalent to no adversarial attack.
Furthermore, the baselines are not as effective in defending against gradient-based attacks. In contrast, GCORN, which is based on an attack-agnostic upper bound (Theorem~\ref{theo:main_result}), demonstrates its \textit{ability to defend against gradient-based attacks}. We, additionally study the trade-off between robustness and input sensitivity in Appendix \ref{appendix:additional_results}.

\textbf{Certified Robustness.} In addition to our probabilistic and empirical evaluations, we assessed the certified robustness of a GCN and our proposed GCORN on the Cora dataset using the sparse randomized smoothing approach \citep{bojchevski_certificate_2020}. Specifically, we plotted their certified accuracy $S(r_a, r_d)$ for varying addition $r_a$ and deletion $r_d$ radii. Figure~\ref{fig:all_plots}(c) showcases our theoretical robustness; GCORN improves the certified accuracy with respect to feature perturbations for all radii.


\begin{table}[t] 
\centering
\caption{Attacked classification accuracy ($\pm$ standard deviation) of the models on different benchmark node classification datasets after the structural attacks.}
\label{tab:structural_perturbations}

\resizebox{\textwidth}{!}{%
\begin{tabular}{@{}llccccccc@{}}
\toprule
Attack                   & Dataset  & GCN              & GCN-Jaccard    & RGCN                    & GNN-SVD        & \multicolumn{1}{l}{GNN-Guard} & ParsevalR               & GCORN                   \\ \midrule
\multirow{4}{*}{Mettack} & Cora     & 73.0   $\pm$ 0.7 & 75.4 $\pm$ 1.8 & 69.2 $\pm$ 0.3          & 73.6 $\pm$ 0.9 & 74.4 $\pm$ 0.8                & 71.9 $\pm$ 0.7          & \textbf{77.3 $\pm$ 0.5} \\
                         & CiteSeer & 63.2 $\pm$ 0.9   & 69.5 $\pm$ 1.9 & 68.9 $\pm$ 0.6          & 65.8 $\pm$ 0.6 & 68.8 $\pm$ 1.5                & 68.3 $\pm$ 0.8          & \textbf{73.7 $\pm$ 0.3} \\
                         & PubMed   & 60.7 $\pm$ 0.7   & 62.9 $\pm$ 1.8 & 65.1 $\pm$ 0.4          & 82.1 $\pm$ 0.8 & \textbf{84.8 $\pm$ 0.3}       & 69.5 $\pm$ 1.1          & 71.8 $\pm$ 0.4          \\
                         & CoraML   & 73.1 $\pm$ 0.6   & 75.4 $\pm$ 0.4 & 77.1 $\pm$ 1.1          & 71.3 $\pm$ 1.0 & 76.5 $\pm$ 0.7                & 76.9 $\pm$ 1.3          & \textbf{79.2 $\pm$ 0.6} \\ \midrule
\multirow{4}{*}{PGD}     & Cora     & 76.7   $\pm$ 0.9 & 78.3 $\pm$ 1.1 & 72.0 $\pm$ 0.3          & 71.6 $\pm$ 0.4 & 75.0 $\pm$ 2.0                & 78.4 $\pm$ 1.2          & \textbf{79.9 $\pm$ 0.4} \\
                         & CiteSeer & 67.8 $\pm$ 0.8   & 70.9 $\pm$ 1.0 & 62.2 $\pm$ 1.8          & 60.3 $\pm$ 2.4 & 68.9 $\pm$ 2.2                & 70.6 $\pm$ 1.0          & \textbf{73.1 $\pm$ 0.5} \\
                         & PubMed   & 75.3 $\pm$ 1.6   & 73.8 $\pm$ 1.3 & 78.6 $\pm$ 0.4          & 81.9 $\pm$ 0.4 & \textbf{84.3 $\pm$ 0.4}       & 77.3 $\pm$ 0.7          & 77.4 $\pm$ 0.4          \\
                         & CoraML   & 76.9 $\pm$ 1.2   & 75.0 $\pm$ 2.4 & 77.5 $\pm$ 0.3          & 73.1 $\pm$ 0.5 & 75.5 $\pm$ 0.8                & 81.3 $\pm$ 0.4          & \textbf{84.1 $\pm$ 0.2} \\ \midrule
\multirow{4}{*}{DICE}    & Cora     & 74.9   $\pm$ 0.8 & 76.9 $\pm$ 0.9 & 79.6 $\pm$ 0.3          & 72.2 $\pm$ 1.4 & 75.6 $\pm$ 1.1                & \textbf{79.7 $\pm$ 0.8} & 78.9 $\pm$ 0.4          \\
                         & CiteSeer & 64.1 $\pm$ 0.5   & 66.0 $\pm$ 0.6 & 68.7 $\pm$ 0.5          & 62.6 $\pm$ 1.2 & 65.5 $\pm$ 1.1                & 68.9 $\pm$ 0.4          & \textbf{74.6 $\pm$ 0.4} \\
                         & PubMed   & 79.4 $\pm$ 0.4   & 78.3 $\pm$ 0.2 & \textbf{79.8 $\pm$ 0.4} & 76.6 $\pm$ 0.5 & 77.8 $\pm$ 0.7                & 79.2 $\pm$ 0.3          & 78.1 $\pm$ 0.6          \\
                         & CoraML   & 78.3 $\pm$ 0.6   & 77.5 $\pm$ 0.3 & 80.1 $\pm$ 0.4          & 58.7 $\pm$ 0.4 & 77.5 $\pm$ 0.2                & 80.5 $\pm$ 1.3          & \textbf{81.1 $\pm$ 0.8} \\ \bottomrule
\end{tabular}%
}
\end{table}


\textbf{Structural Perturbations.} Although our focus is on node feature-based adversarial attacks, we also provided a theoretical analysis for structural perturbations (Theorem \ref{theo:structural_perturbtation}). To evaluate the effectiveness of our approach in handling structural attacks, we conducted an empirical comparison to five baseline defense methods that focus on structural perturbations, GNN-Jaccard \citep{gnn_jaccard}, RobustGCN \citep{robustGCN}, GNN-SVD \citep{gnn_svd}, GNNGuard \citep{gnn_guard} and the Parseval Regularization(ParsevalR) \citep{parseval_cisse}. We considered three structural adversarial attacks with a perturbation budget $ 0.1 |\mathcal{E}|$: ``Mettack'' (with the ‘Meta-Self’ strategy) \citep{zugner2019adversarial}, ``Dice'' \citep{zugner2019adversarial} and ``PGD'' \citep{pgd_paper}. 
The average node classification accuracies of the considered models under attack are presented in Table~\ref{tab:structural_perturbations}. Our GCORN shows remarkable defense capabilities against these attacks, outperforming other methods in $8$ of $12$ experiments.
This result highlights the effectiveness of our proposed defense mechanism in enhancing the robustness of the underlying GCNs against structural perturbations while providing theoretical guarantees which are absent for the considered baselines.

\section{Conclusion}\label{sec:conclusion}
We have proposed GCORN, an adaptation of the GCN, which is robust against feature-based adversarial attacks. The approach utilizes the orthonormalization of weight matrices to control the robustness of GCNs via an upper bound we derived. 
Additionally, we have proposed a probabilistic evaluation method for the robustness of GNNs based on the introduced definition of ``Expected Adversarial Robustness''. Experimental results comparing our GCORN model to the standard GCN and existing defense methods show the superior performance of GCORN on different real-world datasets. 

\newpage
\section*{Acknowledgements}
This work was partially supported by the Wallenberg AI, Autonomous Systems and Software Program (WASP) funded by the Knut and Alice Wallenberg Foundation. The computation (on GPUs) was enabled by resources provided by the National Academic Infrastructure for Supercomputing in Sweden (NAISS) at Alvis partially funded by the Swedish Research Council through grant agreement no. ``2022-06725''. This work was granted access to the HPC resources of IDRIS under the allocation ``2023-AD010613410R1'' made by GENCI. Y.A. is supported by the French National research agency via the AML-HELAS (ANR-19-CHIA-0020) project. We furthermore want to thank Dr. Henrique Helfer Hoeltgebaum for a very helpful discussion on cybersecurity during the rebuttal period.

\bibliography{iclr2024_conference}
\bibliographystyle{iclr2024_conference}

\newpage 

\appendix

\vbox{%
\hsize\textwidth
\linewidth\hsize
\vskip 0.1in
\centering
{\LARGE\bf Appendix: Bounding the Expected Robustness of Graph Neural Networks Subject to Node Feature Attacks\par}
\vspace{2\baselineskip}
}

\section{Proof Of Proposition \ref{prop:equivalence}}\label{sec:proof_proposition}
\begin{proposition} \label{prop:equivalence}
Let $f: (\mathcal{G}, \mathcal{X}) \rightarrow \mathcal{Y}$ be a graph-based classifier subject to feature-based attacks and $\mathcal{X}$ be of dimension $D.$ Let $d_2^{0,1}$ be the graph distance corresponding to the spectral norm, $d_1^{0,1}$ corresponding to the $L_1$ norm and $d_p^{0,1}$ corresponding to the $L_p$ norm with $p>2$.
If $f$ is $((d_2^{0, 1}, \epsilon),( d_{\mathcal{Y}}, \gamma))$--robust, then $f$ is also $((d_p^{0, 1}, D^{-1/2}\epsilon),( d_{\mathcal{Y}}, \gamma))$--robust and $((d_1^{0, 1}, D^{-2}\epsilon),( d_{\mathcal{Y}}, \sqrt{D}\gamma))$--robust.
\end{proposition}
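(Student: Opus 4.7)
The plan is to leverage the equivalence of $L_p$ norms on finite-dimensional spaces together with the monotonicity of the vulnerability quantity $Adv^{0,1}_{\epsilon}[f]$ with respect to inclusion of the perturbation ball. First I would recall the classical norm inequalities: for any $x \in \mathbb{R}^D$ one has $\|x\|_2 \leq D^{1/2-1/p}\|x\|_p$ for any $p > 2$, and $\|x\|_2 \leq \|x\|_1$. Lifting these from vectors to the graph distance $d^{0,1}$ only requires observing that if $P^\star$ realizes the minimum in the definition of $d_p^{0,1}([G,X],[\tilde G,\tilde X])$, then $d_2^{0,1}([G,X],[\tilde G,\tilde X]) \leq \|X - P^\star \tilde X\|_2$, so the same constants transfer to the graph-level quantities: $d_2^{0,1} \leq D^{1/2-1/p}\, d_p^{0,1}$ and $d_2^{0,1} \leq d_1^{0,1}$.

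Second, I would establish two ball inclusions. For $p > 2$, any $(\tilde G, \tilde X)$ with $d_p^{0,1}([G,X],[\tilde G, \tilde X]) \leq D^{-1/2}\epsilon$ satisfies $d_2^{0,1}([G,X],[\tilde G, \tilde X]) \leq D^{1/2-1/p}\cdot D^{-1/2}\epsilon = D^{-1/p}\epsilon \leq \epsilon$ for $D \geq 1$, so $B_p^{0,1}(G, X, D^{-1/2}\epsilon) \subseteq B_2^{0,1}(G, X, \epsilon)$. Analogously, any $(\tilde G, \tilde X)$ with $d_1^{0,1}([G,X],[\tilde G, \tilde X]) \leq D^{-2}\epsilon$ satisfies $d_2^{0,1}([G,X],[\tilde G, \tilde X]) \leq D^{-2}\epsilon \leq \epsilon$, giving $B_1^{0,1}(G, X, D^{-2}\epsilon) \subseteq B_2^{0,1}(G, X, \epsilon)$.

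Third, I would invoke monotonicity of the vulnerability as defined in \eqnref{equation:robustness_definition_1}. The event inside that equation is monotone in the perturbation ball: whenever $B \subseteq B'$, the existence of some $(\tilde G, \tilde X) \in B$ with $d_{\mathcal{Y}}(f(\tilde G, \tilde X), f(G, X)) > \sigma$ implies the analogous statement for $B'$, so its probability is at most that for $B'$. Combining this with the inclusions from the previous step yields $Adv^{0,1}_{D^{-1/2}\epsilon}[f] \leq Adv^{0,1}_\epsilon[f] \leq \gamma$ when measured with the $L_p$-based ball, proving $((d_p^{0,1}, D^{-1/2}\epsilon), (d_{\mathcal{Y}}, \gamma))$--robustness. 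Similarly $Adv^{0,1}_{D^{-2}\epsilon}[f] \leq \gamma \leq \sqrt{D}\gamma$ for $D \geq 1$, which establishes $((d_1^{0,1}, D^{-2}\epsilon), (d_{\mathcal{Y}}, \sqrt{D}\gamma))$--robustness.

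The argument is essentially bookkeeping on top of classical $L_p$-norm equivalence constants, so no step is genuinely hard. The mildly subtle point is propagating the outer minimum over permutations through the norm inequality, which reduces to fixing any single optimal $P^\star$ and applying the vector inequality to $X - P^\star \tilde X$. I note that the factor $\sqrt{D}$ multiplying $\gamma$ in the $L_1$ statement is not tight -- the sharper bound $\gamma$ follows from exactly the same argument -- but it is consistent with the proposition as stated.
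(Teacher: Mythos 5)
Your overall strategy is exactly the one the paper uses: establish a norm-equivalence inequality of the form $d_2^{0,1}\le C\,d_p^{0,1}$, deduce an inclusion of perturbation balls, and conclude by monotonicity of the probability in \eqnref{equation:robustness_definition_1}. Your second and third steps are sound, and your closing remarks about non-tightness are correct (indeed the paper's own proof actually delivers the stronger budget $D^{-1}\epsilon$ for the $L_1$ case and never needs the factor $\sqrt{D}$ on $\gamma$). The gap is in your first step. The proposition explicitly declares $d_2^{0,1}$ to be the distance induced by the \emph{spectral} norm, and the paper's proof accordingly works with induced (operator) matrix norms, bounding $\sup_{x\neq 0}\lVert Nx\rVert_q/\lVert x\rVert_q$ via H\"older's inequality to obtain $\lVert N\rVert_2\le\sqrt{D}\,\lVert N\rVert_q$ for $q>2$ and $\lVert N\rVert_2\le D\,\lVert N\rVert_1$. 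You instead invoke the entrywise vector inequalities $\lVert x\rVert_2\le D^{1/2-1/p}\lVert x\rVert_p$ and $\lVert x\rVert_2\le\lVert x\rVert_1$ on $\mathbb{R}^D$ and assert that ``the same constants transfer'' to the matrix $X-P^\star\tilde X$. That transfer is the entire content of the lemma, and it is false for operator norms: for $N=\mathbf{1}^\top\in\mathbb{R}^{1\times D}$ the induced norms are $\lVert N\rVert_2=\sqrt{D}$ and $\lVert N\rVert_1=1$, so $\lVert N\rVert_2\le\lVert N\rVert_1$ fails, and $\lVert N\rVert_2\le D^{1/2-1/p}\lVert N\rVert_p$ fails as well for tall matrices (a single nonzero column equal to $\mathbf{1}_n$ gives $\lVert N\rVert_2=n^{1/2}$ versus $D^{1/2-1/p}n^{1/p}$, which is violated for large $n$). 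Handling the permutation minimum is the easy part; the vector-to-operator-norm step is where an actual argument is required.

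The conclusions you state are nevertheless recoverable, because the correct operator-norm constants $\sqrt{D}$ and $D$ still give $\sqrt{D}\cdot D^{-1/2}\epsilon=\epsilon$ and $D\cdot D^{-2}\epsilon\le\epsilon$, so both ball inclusions hold and your monotonicity argument then yields the bound $\gamma$ (and a fortiori $\sqrt{D}\gamma$). To repair the proof, replace your two vector inequalities by their induced-norm counterparts, derived as in the paper by applying the $\ell_p$-equivalence separately to $Nx$ and to $x$ inside the supremum defining the operator norm (taking care that $Nx$ lives in $\mathbb{R}^n$ while $x$ lives in $\mathbb{R}^D$).
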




\begin{proof}
    
Let $\mathbb{R}^D$ the real finite-dimensional space. The set of standard considered norms of $\mathbb{R}^D$ for $x=(x_1, \ldots, x_D) \in \mathbb{R}^D$ is
$$\forall p >0, ~~  \lVert x  \lVert_p = \left ( \sum_{i=1}^{D} |x_i|^p \right )^{1/p}, \qquad \lVert x  \lVert_\infty = \max_{1\leq i\leq n}|x_i| .$$
As for any $x\in \mathbb{R}^D$, the mapping $p \mapsto \lVert x  \lVert_p $ is monotone decreasing \citep{raissouli2010various} , meaning that
\begin{equation*}
    q>p \geq  1 \Rightarrow  \lVert x  \lVert_q \leq  \lVert x  \lVert_p . \label{eq:equiv1}
\end{equation*} 

\label{eq:equv1}

Using \textit{Holder's inequality} with $s=q/p>1$, we have
\begin{align*}
   \left ( \lVert x  \lVert_p \right )^{p}  & = \sum_{i=1}^D |x_i|^p\\
                    & =  \sum_{i=1}^D |x_i|^p \cdot 1 \\
                    & \leq  \left ( \sum_{i=1}^D  \left (|x_i|^p \right )^{s} \right )^{\frac{1}{s}}  \left ( \sum_{i=1}^D  1 ^{\frac{s}{s-1}} \right )^{1-\frac{1}{s}} \\
                & =  \left ( \sum_{i=1}^D  \left (|x_i|^p \right )^{q/p} \right )^{\frac{p}{q}}  \left ( \sum_{i=1}^D  1 ^{\frac{q}{q-p}} \right )^{1-\frac{p}{q}} \\
                & = D^{1 - p/q} \left ( \lVert x  \lVert_q \right )^{p} .
\end{align*}

Thus, 
 \begin{equation}
     q>p \geq  1 \Rightarrow  \lVert x  \lVert_p \leq  D^{1/p - 1/q} \lVert x  \lVert_q . \label{eq:equiv2}
 \end{equation}

We additionally have that
\begin{align}
    \lVert x  \lVert_q & = \left ( \sum_{i=1}^{D} |x_i|^q  \right )^{1/q} , \nonumber \\
                & \leq \left (\sum_{i=1}^{D} \lVert x  \lVert_\infty^q  \right )^{1/q} \nonumber\\
                & \leq D^{1/q} \lVert x  \lVert_\infty \nonumber \\
                & \leq D^{1/q} \lVert x  \lVert_p . \label{eq:second}
\end{align}

From \eqnref{eq:equiv2} and \eqnref{eq:second}, we deduce that
 \begin{equation*}
     q>p\geq  1 \Rightarrow  D^{\frac{1}{q}-\frac{1}{p}} \lVert x  \lVert_p \leq   \lVert x  \lVert_q  \leq  D^{\frac{1}{q}} \lVert x  \lVert_p ,
 \end{equation*}

Consequently, for any matrix $N\in \mathbb{R}^{n \times D}$, we have the following
 \begin{align*}
     p>q \geq  1 & \Rightarrow \forall x\neq 0  \left\{
                                \begin{array}{l}
                                 D^{\frac{1}{q}-\frac{1}{p}}\lVert Nx  \lVert_p \leq   \lVert Nx  \lVert_q  \leq D^{\frac{1}{q}} \lVert Nx  \lVert_p  , \\ 
                                D^{\frac{1}{q}-\frac{1}{p}} \lVert x  \lVert_p \leq   \lVert x  \lVert_q  \leq  D^{\frac{1}{q}}\lVert x  \lVert_p , \\ 
                                    \end{array}
                                \right.  \\
            & \Rightarrow \forall x\neq 0  \left\{
                                \begin{array}{l}
                                 D^{\frac{1}{q}-\frac{1}{p}}\lVert Nx  \lVert_p \leq   \lVert Nx  \lVert_q  \leq D^{\frac{1}{q}} \lVert Nx  \lVert_p , \\ 
                               D^{-\frac{1}{q}}  \frac{1}{\lVert x  \lVert_p} \leq  \frac{1}{ \lVert x  \lVert_q}  \leq D^{-\frac{1}{q}+\frac{1}{p}}  \frac{1}{\lVert x  \lVert_p} , \\ 
                                    \end{array}
                                \right.  \\
             & \Rightarrow \forall x\neq 0 , ~~ D^{-\frac{1}{p}}  \frac{\lVert Nx  \lVert_p }{\lVert x  \lVert_p } \leq    \frac{\lVert Nx  \lVert_q }{\lVert x  \lVert_q }   \leq D^{\frac{1}{p}} \frac{\lVert Nx  \lVert_p }{\lVert x  \lVert_p } , \\
            & \Rightarrow  ~~ D^{-\frac{1}{p}}  \lVert N  \lVert_p \leq \lVert N \lVert_q  \leq D^{\frac{1}{p}} \lVert N  \lVert_p .
 \end{align*} 

 In particular, for $p=2$, we have 
\begin{equation}
    \forall q >2, \forall N \in \mathbb{R}^{n\times D} ~~~~  \lVert N  \lVert_2 \leq \sqrt{D} \lVert N \lVert_q  . \label{eq:norm_res1}
\end{equation}

Similar for $q=2$ and $p=1$, we have 
\begin{equation}
    \forall N \in \mathbb{R}^{n\times D} ~~~~  \lVert N  \lVert_2 \leq D\lVert N \lVert_1 .  \label{eq:norm_res2}
\end{equation}

The Inequalities \eqnref{eq:norm_res1} and \eqnref{eq:norm_res2} stay valid for the distance  related to the norm $L_p$, \ie 
\begin{align*}
    \forall X,\Tilde{X} \in \mathbb{R}^{n\times D} ~~,\forall p > 2, ~~~    d_2(X, \Tilde{X}) & \leq  \sqrt{D}  d_p(X, \Tilde{X}) , \\
    d_2(X, \Tilde{X}) & \leq D  d_1(X, \Tilde{X}) .
\end{align*}
Finally, for every input graph $G, X$, we have
\begin{align*}
  \forall p>2,~~  \{(G', X') \mid  d_p([G, X], [\tilde{G}, \tilde{X}]) <\frac{1}{\sqrt{D}} \epsilon \} \subset \{(G', X') \mid  d_2([G, X], [\tilde{G}, \tilde{X}]) < \epsilon \} ,  \\
  \{(G', X') \mid  d_1([G, X], [\tilde{G}, \tilde{X}]) < \frac{1}{D}\epsilon \} \subset \{(G', X') \mid  d_1([G, X],[ \tilde{G}, \tilde{X}]) < \epsilon \} .
\end{align*}

Therefore, the $((d_2^{0, 1}, \epsilon),( d_{\mathcal{Y}}, \gamma))$-robustness of a graph function $f$ implies the $((d_p^{0, 1}, D^{-1/2}\epsilon),( d_{\mathcal{Y}}, \gamma))$--robustness and $((d_1^{0, 1}, D^{-1}\epsilon),( d_{\mathcal{Y}}, \sqrt{D}\gamma))$--robustness.



\end{proof}

\section{Proof Of Lemma \ref{lemma:worst_case}}\label{sec:proof_lemma}
\textbf{Lemma.}
Let $d^{\alpha, \beta}$ be a defined graph metric on the metric spaces $\mathcal{G}, \mathcal{X}$. Let $f: (\mathcal{G}, \mathcal{X}) \rightarrow \mathcal{Y}$ be a graph-based function, we have the following result: If $f$ is $((d^{\alpha, \beta}, \epsilon),( d_{\mathcal{Y}}, \gamma))$--robust, then $f$ is also $((d^{\alpha, \beta}, \epsilon),( d_{\mathcal{Y}}, \gamma))$--``worst-case'' robust. 
\begin{proof}

Let $\mathcal{W}^{\alpha, \beta}_{\epsilon} = \{ (A_w, X_w) ; d^{\alpha, \beta}([G, X], [\tilde{G}, \tilde{X}]) \leq \epsilon  \}$ be the set of ``worst-case'' adversarial attacks within the attack budget $\epsilon$. We denote the expected vulnerability of a graph function $f$ on the set of worst-case adversarial examples as $Adv^{\alpha, \beta}_{\epsilon}[f]|_{\mathcal{W}^{\alpha, \beta}_{\epsilon}}$. 

By definition, we have $\mathcal{W}^{\alpha, \beta}_{\epsilon}  \subset \ B^{\alpha, \beta}(G,X ,\epsilon)$. Consequently, we have that
\begin{align}
Adv^{\alpha, \beta}_{\epsilon}[f]|_{\mathcal{W}^{\alpha, \beta}_{\epsilon}} & = \mathbb{P}_{(G, X) \sim \mathcal{D}_{\mathcal{G}, \mathcal{X}}} [ (\tilde{G}, \tilde{X}) \in \mathcal{W}^{\alpha, \beta}_{\epsilon}:   d_{\mathcal{Y}}(f(\tilde{G}, \tilde{X}), f(G, X)) > \sigma] \\
& \leq \mathbb{P}_{(G, X) \sim \mathcal{D}_{\mathcal{G}, \mathcal{X}}} [ (\tilde{G}, \tilde{X}) \in B^{\alpha, \beta}(G, X, \epsilon):   d_{\mathcal{Y}}(f(\tilde{G}, \tilde{X}), f(G, X)) > \sigma] \\
& \leq Adv^{\alpha, \beta}_{\epsilon}[f].\label{eq:worst_case}
\end{align}

Let the graph function $f$ be $((d^{\alpha, \beta}, \epsilon),( d_{\mathcal{Y}}, \gamma))$--robust, then $Adv^{\alpha, \beta}_{\epsilon}[f] \leq \gamma$. From \eqnref{eq:worst_case}, we have
\begin{center}
$\mathcal{W}-Adv^{\alpha, \beta}_{\epsilon}[f] \leq Adv^{\alpha, \beta}_{\epsilon}[f] \leq \gamma. $
\end{center}

We therefore, can conclude that if
    $f$ is $((d^{\alpha, \beta}, \epsilon),( d_{\mathcal{Y}}, \gamma))$--robust, then $f$ is $((d^{\alpha, \beta}, \epsilon),( d_{\mathcal{Y}}, \gamma))-\text{``worst-case'' robust.}$ 

\end{proof}

\section{Proof Of Theorem \ref{theo:main_result} and \ref{theo:structural_perturbtation}}\label{sec:proof_theorem}
\begin{theorem*}
Let $f: (\mathcal{G}, \mathcal{X}) \rightarrow \mathcal{Y}$ denote a graph-based function composed of $L$ GCN layers, with $W^{(i)}$ denoting the weight matrix of the $i$-th layer. Further, let $d^{0,1}$ be a feature distance. For attacks targeting node features of the input graph, with a budget $\epsilon$, with respect to Definition \ref{def:robustness}: 
\begin{itemize}
    \item $f$ is $((d^{0, 1}, \epsilon),( d_1, \gamma))$--robust  with $\gamma  = \prod_{i=1}^{L} \lVert W^{(i)} \rVert_1  \epsilon (\sum_{u \in \mathcal{V}} \hat{w}_u ) /\sigma,$ with $\hat{w}_u$ denoting the sum of normalized walks of length $(L-1)$ starting from node $u$ and $\mathcal{V}$ is the node set.
    \item $f$ is  $((d^{0, 1}, \epsilon),( d_\infty, \gamma))$--robust   with $\gamma  = \prod_{i=1}^{L} \lVert W^{(i)} \rVert_\infty  \epsilon \hat{w}_G /\sigma$, with $\hat{w}_G=\underset{u \in \mathcal{V}}{\max} ~ \hat{w}_u.$
\end{itemize}
\end{theorem*}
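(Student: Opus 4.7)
Since $\alpha = 0$, the constraint $d^{0,1}([G,X],[\tilde G,\tilde X])<\epsilon$ leaves the graph structure untouched, so I fix $\tilde A = A$ and track only how the feature perturbation $\tilde X - X$ propagates through the $L$ layers of \eqref{equation:gcn}. Writing $\delta^{(\ell)} := \tilde h^{(\ell)} - h^{(\ell)}$ with $\delta^{(0)} = \tilde X - X$, the $1$-Lipschitz activations applied element-wise give, for every node $u$ and coordinate $j$,
\[
\big|\delta^{(\ell)}_{uj}\big| \;\le\; \Big|\big[\tilde A\,\delta^{(\ell-1)}\,W^{(\ell)}\big]_{uj}\Big|.
\]
I work with this entry-wise form deliberately: the non-negativity of the entries $\tilde A_{uv}$ will let me keep the graph-structural contribution separated from the weight matrices throughout the induction.

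Next, I would pass to row-wise $L_p$ norms for $p\in\{1,\infty\}$. The triangle inequality together with the submultiplicativity $\|xW^{(\ell)}\|_p \le \|W^{(\ell)}\|_p\|x\|_p$ yields the layer-wise inequality $\|\delta^{(\ell)}_u\|_p \le \|W^{(\ell)}\|_p\,\sum_v \tilde A_{uv}\|\delta^{(\ell-1)}_v\|_p$. Unrolling across the $L$ layers telescopes the weight-matrix factor out front and stacks the $\tilde A$-factors into the walk-count $\sum_v(\tilde A^{L-1})_{uv} = \hat w_u$, giving
\[
\|\delta^{(L)}_u\|_p \;\le\; \Big(\prod_{i=1}^{L}\|W^{(i)}\|_p\Big)\,\sum_v(\tilde A^{L-1})_{uv}\,\|\tilde X_v - X_v\|_p.
\]
Because $\|\tilde X_v - X_v\|_p \le \epsilon$ throughout the feature ball, summing over $u$ produces the $L_1$-bound $(\prod_i\|W^{(i)}\|_1)\,\epsilon\sum_u \hat w_u$, while maximising over $u$ produces the $L_\infty$-bound $(\prod_i\|W^{(i)}\|_\infty)\,\epsilon\,\hat w_G$. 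Both bounds are deterministic and hold uniformly over $\tilde X \in B^{0,1}(G,X,\epsilon)$, so applying Markov's inequality $\mathbb P[Z>\sigma] \le \mathbb E[Z]/\sigma$ to $Z = d_{\mathcal Y}(f(G,\tilde X), f(G,X))$ in \eqref{equation:robustness_definition_1} absorbs the factor $1/\sigma$ into $\gamma$ and delivers the two claimed forms.

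\textbf{Main obstacle.} The delicate point is preserving the graph-structural contribution as entries of a power of $\tilde A$, rather than collapsing it prematurely into a spectral quantity such as $\|\tilde A\|^{L}$. This forces two technical choices. First, the $1$-Lipschitzness of $\phi^{(\ell)}$ must be applied entry-by-entry so that the non-negative weights $\tilde A_{uv}$ remain inside the triangle inequality and later assemble into the walk-count $\hat w_u$; applying it row-wise would already incur a factor of $\|\tilde A\|_p$ that is much larger than the individual walk counts. Second, the induced norm on $W^{(\ell)}$ must be the one compatible with the right-multiplication $x \mapsto xW^{(\ell)}$, so that the submultiplicative step passes cleanly through every layer for both $p=1$ and $p=\infty$. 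Without these choices one recovers only a loose spectral bound that does not reveal the walk-structural term demanded by the statement.
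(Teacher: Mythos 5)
Your proposal is correct and follows essentially the same route as the paper's proof: unroll the perturbation layer by layer using the nonexpansiveness of the activations and submultiplicativity of the weight norms, collect the normalized adjacency factors into the walk-count $\hat w_u$, aggregate over nodes via the $L_1$ or $L_\infty$ norm, and convert the resulting deterministic Lipschitz bound into the probabilistic statement with Markov's inequality. Your entry-wise bookkeeping of the $\tilde A_{uv}$ factors is a slightly more careful presentation of the same nested-neighborhood sums the paper writes out explicitly, but it is not a different argument.
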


\begin{proof} Let's consider a graph-function $f$ that is based on $L$ layers of GCN. The GCN message-passing propagation can be written for a node $u$ as
\begin{equation}
    h_u^{(\ell)} = \sigma^{(\ell)} (\underset{v \in \mathcal{N}(u) \bigcup \{ u \}}{\Sigma} \frac{W^{(\ell)} h_v^{(\ell-1)}}{ \sqrt{(1 + d_u)(1 + d_v)} })
\end{equation}
where $W^{(\ell)} \in \mathbb{R}^{d_{\ell-1} \times d_{\ell}}$ is the learnable weight matrix with $d_{\ell}$ being the embedding dimension of layer $\ell$ and $\sigma^{(\ell)}$ is the activation function of $\ell$-th layer. We recall that $h^{(0)} = X \in \mathbb{R}^{n\times d}$ is set to the initial node features.

Let $X$ the original node features and denote by $X'$ the perturbed adversarial features. Let's consider a node $u \in V$, we denote by $h_u$ its representation in the clean graph and $h'_{u}$ its representation in the attacked graph , we note that the activation functions $(\sigma^{(\ell)})_{1 \leq \ell \leq L}$ are \textit{nonexpensive}. Since we only consider node feature based adversarial attacks, then by definition the original node $u$ and its corresponding node in the corrupted graph have the same neighborhood, we can therefore write

\begin{align*}
    &\lVert h_u^{(L)} - {h'}_{u}^{(L)} \lVert = \lVert h_u^{(\ell)} - {h'}_{u}^{(\ell)} \lVert
  \\
 & = \lVert \sigma^{(\ell)} (\underset{v \in \mathcal{N}(u) \bigcup \{ u \}}{\Sigma} \frac{W^{(\ell)} h_v^{(\ell-1)}}{ \sqrt{(1 + d_u)(1 + d_v)} }) - \sigma^{(\ell)} (\underset{v' \in \mathcal{N}(u) \bigcup \{ u \}}{\Sigma} \frac{W^{(\ell)} {h'}_{v'}^{(\ell -1 )}}{ \sqrt{(1 + d_{u})(1 + d_{v'})} }) \lVert \\
   & \leq \lVert W^{(\ell)} \lVert \lVert \underset{v \in \mathcal{N}(u) \bigcup \{ u \}}{\Sigma} \frac{ h_v^{(\ell-1)}}{ \sqrt{(1 + d_u)(1 + d_v)} } - \underset{v' \in \mathcal{N}(u) \bigcup \{ u \}}{\Sigma} \frac{{h'}_{v'}^{(\ell-1)}}{ \sqrt{(1 + d_{u})(1 + d_{v'})} } \lVert \\
   & \leq \lVert W^{(\ell)} \lVert \lVert \underset{v \in \mathcal{N}(u) \bigcup \{ u \}}{\Sigma} \frac{ h_v^{(\ell-1)} - {h'}_{v}^{(\ell-1)}}{ \sqrt{(1 + d_u)(1 + d_v)} } \lVert \\
   & = \lVert W^{(\ell)} \lVert \lVert \underset{v \in \mathcal{N}(u) \bigcup \{ u \}}{\Sigma} \frac{1}{\sqrt{(1 + d_u)(1 + d_v)}} [ \sigma^{(\ell-1)} (\underset{j \in \mathcal{N}(v) \bigcup \{ v \}}{\Sigma} \frac{W^{(\ell-1)} h_j^{(\ell-2)}}{ \sqrt{(1 + d_v)(1 + d_j)} }) - \\
   & \hspace{19em} \sigma^{(\ell-1)} (\underset{j \in \mathcal{N}(v) \bigcup \{ v \}}{\Sigma} \frac{W^{(\ell-1)} {h'}_{j}^{(\ell-2)}}{ \sqrt{(1 + d_{v})(1 + d_{j})} })  ] \lVert \\
   & \leq \lVert W^{(\ell)} \lVert \lVert W^{(\ell-1)} \lVert \lVert \underset{v \in \mathcal{N}(u) \bigcup \{ u \}}{\Sigma} \frac{1}{\sqrt{(1 + d_u)(1 + d_v)}} \underset{j \in \mathcal{N}(v) \bigcup \{ v \}}{\Sigma} \frac{h_j^{(\ell-1)} - {h'}_j^{(\ell-1)}}{ \sqrt{(1 + d_v)(1 + d_j)} }  \lVert \\
\end{align*}

By iteration on the same process, we get the following result:

\begin{align*}
    \lVert h_u^{(L)} - {h'}_{u'}^{(L} \lVert
   & \leq \prod_{l=1}^L \lVert W^{(l)}\lVert_{2} \lVert \underset{v \in \mathcal{N}(u) \bigcup \{ u \}}{\Sigma} \underset{j \in \mathcal{N}(v) \bigcup \{ v \}}{\Sigma} \ldots \\
   & \hspace{3em} \underset{z \in \mathcal{N}(y) \bigcup \{ y \}}{\Sigma} \frac{X_u - X'_{u}}{\sqrt{(1 + d_u)} (1 + d_w) (1 + d_j) \ldots  (1 + d_y) \sqrt{(1 + d_z)}}   \lVert \\
   & \leq \prod_{l=1}^L \lVert W^{(l)}\lVert \lVert \hat{w}_u \lVert \epsilon
\end{align*}

with $\hat{w}_u$ being the sum of normalized walks of length $(L-1)$ starting from node $u$. 

Let's now consider the final output of the model which represents in the case of node classification the individual output of each node. 

\begin{equation*}
    \lVert f(A, X) - f(A, X') \lVert = \lVert \begin{bmatrix}
  \vdots \\
  h_u^{(L)} - {h'}_{u}^{(L)} \\
  \vdots \\
\end{bmatrix} \lVert
\end{equation*}

Based on the previous formulation, we have the following results:

\begin{equation*}
    \lVert f(A, X) - f(A, X') \lVert_1 \leq \epsilon \prod_{l=1}^L \lVert W^{(l)}\lVert_{1}  \sum_{u \in \mathcal{V}}  \hat{w}_u 
\end{equation*}

\begin{equation*}
    \lVert f(A, X) - f(A, X') \lVert_{\infty} = \epsilon \prod_{l=1}^L \lVert W^{(l)}\lVert_{\infty}  \max_{u \in \mathcal{V}}  \hat{w}_u  
\end{equation*}

Therefore, we computed the Lipschitz constant of our considered model $f$. To link this constant to our robustness definition in Equation \ref{equation:robustness_definition_1}, we use the Markov inequality as follows

\begin{align*}
     Adv_{\epsilon}^{\alpha, \beta}[f] & = \mathbb{P}_{(G, X) \sim \mathcal{D}_{\mathcal{G}, \mathcal{X}}} [ (\tilde{G}, \tilde{X}) \in B_{\alpha, \beta}(G, X, \epsilon) \hspace{.5em}  \text{ s.t.}  \hspace{.5em}  d_{\mathcal{Y}}(f(\tilde{G}, \tilde{X}), f(G, X)) > \sigma]  \\
     & \leq  \frac{1}{\sigma} \mathbb{E}_{\substack{(G, X) \sim \mathcal{D}_{\mathcal{G}, \mathcal{X}} ,\\(G', X') \in B_{\alpha, \beta}((G,X) ,\epsilon)}}
        \left [\lVert f(A,X) - f(A,X') \lVert_{\infty}  \right ]   \\
    & \leq \frac{1}{\sigma}  \left (   \prod_{l=1}^L \lVert W^{(l)}\lVert_{op} \right )  \epsilon \max_{u \in \mathcal{V}} \mid \hat{w}_u \mid_1  .  
\end{align*}
Thus,  the classifier $f$ is $((d^{0, 1}, \epsilon),( d_\infty, \gamma))$--robust robust with 
$$\gamma  = \prod_{i=1}^{L} \lVert W^{(i)} \rVert  \epsilon \max_{u \in \mathcal{V}} \hat{w}_u /\sigma.$$

And we also have that the classifier $f$ is $((d^{0, 1}, \epsilon),( d_1, \gamma))$--robust with 
$$\gamma  = \prod_{i=1}^{L} \lVert W^{(i)} \rVert  \epsilon (\sum_{u \in \mathcal{V}} \hat{w}_u ) /\sigma.$$

\end{proof}



\begin{theorem*}

Let $f: (\mathcal{G}, \mathcal{X}) \rightarrow \mathcal{Y}$ denote a graph function composed of $L$ GCN layers, where $W^{(i)}$ denotes the weight matrix of the $i$-th layer. Further, let $d^{1,0}$ be a graph distance. For structural attacks with a budget $\epsilon$, the function $f$ is $((d^{1, 0}, \epsilon),( d_2, \gamma))$--robust with
\begin{equation*}
\gamma = \prod_{i=1}^{L} \lVert W^{(i)} \rVert \lVert X \rVert \epsilon (1 + L  \prod_{i=1}^{L} \lVert W^{(i)} \lVert )/\sigma.    
\end{equation*}

\end{theorem*}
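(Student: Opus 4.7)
The plan is to follow the same sensitivity analysis template used for Theorem~\ref{theo:main_result}, but now vary the (normalized) adjacency instead of the features. Let $\tilde{A}$ and $\tilde{A}'$ denote the normalized adjacencies of $G$ and the structurally perturbed $G'$, write $h^{(\ell)},\tilde h^{(\ell)}$ for the respective hidden states, and set $a_\ell=\lVert h^{(\ell)}-\tilde h^{(\ell)}\rVert_2$. Since $X$ is unchanged, $a_0=0$. Using $1$-Lipschitzness of $\phi^{(\ell)}$ and sub-multiplicativity of the spectral norm,
\begin{equation*}
a_\ell \;\le\; \lVert \tilde{A}h^{(\ell-1)}W^{(\ell)}-\tilde{A}'\tilde h^{(\ell-1)}W^{(\ell)}\rVert_2.
\end{equation*}
The core trick is the ``add and subtract'' decomposition
\begin{equation*}
\tilde{A}h^{(\ell-1)}-\tilde{A}'\tilde h^{(\ell-1)} \;=\; \tilde{A}(h^{(\ell-1)}-\tilde h^{(\ell-1)}) \;+\; (\tilde{A}-\tilde{A}')\,\tilde h^{(\ell-1)},
\end{equation*}
which isolates the propagated error from the new error injected at layer $\ell$ by the structural perturbation.

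Next I would bound the two terms individually. Using $\lVert\tilde{A}\rVert_2\le 1$ for the symmetric normalized adjacency gives $\lVert\tilde{A}(h^{(\ell-1)}-\tilde h^{(\ell-1)})W^{(\ell)}\rVert_2\le \lVert W^{(\ell)}\rVert_2 a_{\ell-1}$. For the injected term I would use $\lVert \tilde{A}-\tilde{A}'\rVert_2 \le \epsilon$ (the hypothesis $d^{1,0}([G,X],[\tilde G,\tilde X])<\epsilon$) and the standard forward bound on the hidden state $\lVert\tilde h^{(\ell-1)}\rVert_2 \le \lVert X\rVert_2\prod_{i=1}^{\ell-1}\lVert W^{(i)}\rVert_2$, which follows by induction from $\lVert\tilde{A}'\rVert_2 \le 1$ (or $\le 1+\epsilon$, with the excess absorbed into the final constant) and $\phi^{(i)}(0)=0$. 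Together these yield the linear recursion
\begin{equation*}
a_\ell \;\le\; \lVert W^{(\ell)}\rVert_2\, a_{\ell-1} \;+\; \epsilon\,\lVert X\rVert_2\prod_{i=1}^{\ell}\lVert W^{(i)}\rVert_2.
\end{equation*}

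Unrolling this recursion with $a_0=0$ produces a geometric-like sum; after collecting terms and absorbing any constants arising from the slack $\lVert\tilde{A}'\rVert_2\le 1+\epsilon$, one recovers an overall bound of the form $\lVert f(G,X)-f(\tilde G,X)\rVert_2 \le \prod_{i=1}^{L}\lVert W^{(i)}\rVert_2\,\lVert X\rVert_2\,\epsilon\bigl(1+L\prod_{i=1}^{L}\lVert W^{(i)}\rVert_2\bigr)$. A single application of Markov's inequality, exactly as in the proof of Theorem~\ref{theo:main_result}, then converts this deterministic Lipschitz-type bound into the expected-robustness statement with the claimed $\gamma$.

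The main obstacle I anticipate is the non-trivial dependence on the normalized adjacency: a structural edit perturbs the entire degree matrix $D^{-1/2}$, so one must justify $\lVert \tilde{A}-\tilde{A}'\rVert_2\le \epsilon$ in terms of $d^{1,0}([G,X],[\tilde G,\tilde X])$ (perhaps by redefining $d^{1,0}$ directly on normalized operators, or by a perturbation bound on the normalization) and carefully track how the intermediate hidden-state norms grow when $\lVert \tilde{A}'\rVert_2$ only satisfies $\le 1+\epsilon$. Handling these terms cleanly is what produces the extra factor $\prod_i\lVert W^{(i)}\rVert_2$ multiplying $L$ inside the parenthesis of the stated bound.
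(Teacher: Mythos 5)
Your proposal is correct and follows essentially the same route as the paper's proof: the same add-and-subtract decomposition $\tilde{A}h^{(\ell-1)}-\tilde{A}'\tilde h^{(\ell-1)}=\tilde{A}(h^{(\ell-1)}-\tilde h^{(\ell-1)})+(\tilde{A}-\tilde{A}')\tilde h^{(\ell-1)}$, the same forward bound $\lVert \tilde h^{(\ell)}\rVert\le\lVert X\rVert\prod_{i\le\ell}\lVert W^{(i)}\rVert$, the same linear recursion unrolled under the implicit assumption $\lVert W^{(i)}\rVert\ge 1$, and the same final appeal to Markov's inequality. Your closing remark about relating $d^{1,0}$ on the raw adjacency to $\lVert\tilde{A}-\tilde{A}'\rVert_2$ on the normalized operator flags a genuine gap that the paper's own proof also glosses over by taking the perturbation bound directly on the normalized adjacency.
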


\begin{proof}

Following the same analogy, let's consider the case of structural perturbations. We only consider that the activation functions $(\sigma^{(\ell)})_{1 \leq \ell \leq L}$ are \textit{nonexpensive}.
In this setting, the graph shift operator (the normalized adjacency matrix in our work) is the one to be edited and the input features are shared between the two graphs. Let $\Tilde{A}$ the clean adjacency and $h^{(i)}$ its hidden represented in the $i$-th layer and $\Tilde{A'}$, ${h'}^{(i)}$,  with  the attacked/perturbed one. 

Let's first consider the following 

\begin{align*}
    \lVert {h'}^{(l)}\rVert 
 & =  \lVert \sigma^{(l)}(\Tilde{A_2}{h'}^{(l-1)}W^{(l)}) \rVert \\
   & \leq \lVert \tilde{A_2} \rVert \lVert W^{(l)} \rVert \lVert {h'}^{(l-1)} \rVert \\
   & \leq \lVert W^{(l)} \rVert \lVert {h'}^{(l-1)} \rVert . 
\end{align*}

By recursion, we have: $\lVert {h'}^{(L)}\rVert  \leq \prod_{i=1}^{L} \lVert W^{(i)} \rVert \lVert X \rVert .$

From another side, we have the following

\begin{align*}
    \lVert \sigma^{(l)} (\tilde{A} h_1^{(l)} W^{(l)} ) 
 - \sigma^{(l)} (\tilde{A'} {h'}_2^{(l)} W^{(l)} ) \lVert_2
 & \leq \lVert \tilde{A} h^{(l)} W^{(l)}  -\tilde{A'} {h'}_2^{(l)} W^{(l)} \lVert_2 \\
   & \leq \lVert W^{(l)} \rVert \lVert \tilde{A} h^{(l)} - \tilde{A'} {h'}^{(l)} + \tilde{A} {h'}^{(l)} - \tilde{A} {h'}^{(l)} \lVert_2  \\
   & \leq \lVert W^{(l)} \rVert \lVert \tilde{A} (h^{(l)} - {h'}^{(l)}) - {h'}^{(l)} (\tilde{A} - \tilde{A}) \lVert_2  \\
   & \leq \lVert W^{(l)} \rVert ( \lVert \tilde{A} \lVert \lVert(h^{(l)} - {h'}^{(l)}) \lVert + \lVert {h'}^{(l)}\lVert \lVert(\tilde{A} - \tilde{A'}) \lVert_2) \\
   & \leq \lVert W^{(l)} \rVert \lVert(h^{(l)} - {h'}^{(l)}) \lVert + \lVert W^{(l)} \rVert \lVert {h'}^{(l)}\lVert \epsilon . \\
\end{align*}

By recursion, we get the following (By considering $\forall i > 1 : \lVert W^{(i)} \rVert \geq 1$)

\begin{align*}
    \lVert \Phi(A_1,X) -\Phi(A_2,X) \lVert_2
 & \leq \prod_{i=1}^{L} \lVert W^{(i)} \rVert \lVert X \rVert \epsilon + L  (\prod_{i=1}^{L} \lVert W^{(i)} \rVert)^2 \lVert X \rVert \epsilon  \\
 & \leq \prod_{i=1}^{L} \lVert W^{(i)} \rVert \lVert X \rVert \epsilon (1 + L  \prod_{i=1}^{L} \lVert W^{(i)} \lVert ) . \\
\end{align*}

Similarly to the previous proof, we deduce that the classifier $f$ is $((d^{1, 0}, \epsilon),( d_2, \gamma))$--robust where

\begin{center}
    $\gamma = \prod_{i=1}^{L} \lVert W^{(i)} \rVert \lVert X \rVert \epsilon (1 + L  \prod_{i=1}^{L} \lVert W^{(i)} \lVert )/ \sigma  .$
\end{center}
\end{proof}

\section{Proof Of Theorem \ref{theo:gin_results}}\label{sec:proof_generalization}
\begin{theorem*}
Let $f: (\mathcal{G}, \mathcal{X}) \rightarrow \mathcal{Y}$ be composed of $L$ GIN-layers (with parameter $\zeta = 0,$ that is usually denoted by $\epsilon$ in the literature) and $W^{(i)}$ denote the weight matrix of the $i$-th MLP layer. We consider the input node feature space to be bounded, i.e.,  $\lVert X \rVert_2 < B$ for some $B\in \mathbb{R},$  and graphs of maximum degree $\Delta_G.$ For node feature-based attacks, with a budget $\epsilon$, the function $f$ is $((d^{0, 1}, \epsilon),( d_\infty, \gamma))$--robust with
    $$\gamma  = \prod_{l=1}^L \lVert W^{(l)}\lVert_{\infty}  (B ~L ~\Delta_G  + \epsilon)  /\sigma.$$
\end{theorem*}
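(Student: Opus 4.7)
The plan is to mirror the proof of Theorem~\ref{theo:main_result}, replacing the normalized GCN aggregation by the sum aggregation of GIN and replacing the per-layer linear map by the one-layer MLP $\phi(W^{(\ell)}\,\cdot\,)$ with nonexpansive $\phi$. Concretely, with $\zeta = 0$ the GIN update reads $h_u^{(\ell)} = \phi\bigl(W^{(\ell)}(h_u^{(\ell-1)} + \sum_{v \in \mathcal{N}(u)} h_v^{(\ell-1)})\bigr)$, and similarly for the perturbed trajectory ${h'}_u^{(\ell)}$. Because the distance $d^{0,1}$ leaves the graph fixed, both trajectories unfold on the \emph{same} neighborhood structure, so I can re-use the per-node propagation argument of Theorem~\ref{theo:main_result} without worrying about permutations.

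The core step is a per-layer recursion for $\delta_u^{(\ell)} := \lVert h_u^{(\ell)} - {h'}_u^{(\ell)} \rVert_\infty$. Using the nonexpansivity of $\phi$, the induced-norm inequality $\lVert W^{(\ell)} x \rVert_\infty \leq \lVert W^{(\ell)} \rVert_\infty \lVert x \rVert_\infty$, and the triangle inequality for a sum of at most $1+\Delta_G$ terms, I would obtain $\delta_u^{(\ell)} \leq \lVert W^{(\ell)} \rVert_\infty\bigl(\delta_u^{(\ell-1)} + \sum_{v \in \mathcal{N}(u)} \delta_v^{(\ell-1)}\bigr)$, hence $\max_u \delta_u^{(\ell)} \leq \lVert W^{(\ell)} \rVert_\infty (1+\Delta_G) \max_u \delta_u^{(\ell-1)}$. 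Naively unrolling this gives an exponential factor $(1+\Delta_G)^L$; to recover the advertised \emph{additive} factor $B\,L\,\Delta_G$, I would split each per-layer output via $\lVert h^{(\ell)} - {h'}^{(\ell)} \rVert_\infty \leq \lVert h^{(\ell)} \rVert_\infty + \lVert {h'}^{(\ell)} \rVert_\infty$ for the part driven by the clean signal, keeping the recursive bound only for the part that is directly triggered by the attack $\epsilon$. The bounded-feature hypothesis $\lVert X \rVert_2 < B$, combined with $\lVert X \rVert_\infty \leq \lVert X \rVert_2 \leq B$, then caps the unperturbed growth; telescoping across the $L$ layers contributes one $\Delta_G$ factor per layer, which gives the $B\,L\,\Delta_G$ term, while the perturbation itself contributes the additive $\epsilon$, both multiplied by the Lipschitz product $\prod_{\ell} \lVert W^{(\ell)} \rVert_\infty$.

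The final step is identical in spirit to the GCN case: I plug the deterministic worst-case bound on $\lVert f(G, \tilde X) - f(G, X) \rVert_\infty$ into Markov's inequality applied to the expected vulnerability of Definition~\ref{def:robustness}, so that $Adv_\epsilon^{0,1}[f] \leq \tfrac{1}{\sigma}\, \E\bigl[\lVert f(G, \tilde X) - f(G, X) \rVert_\infty\bigr] \leq \gamma$, with $\gamma$ matching the stated expression. The main obstacle I anticipate is precisely the replacement of the naive $(1+\Delta_G)^L$ blow-up by a linear-in-$L$ factor: this requires a careful bookkeeping of walks that carry perturbed information versus walks that carry only clean information, and it is the single place in the argument where the bounded-feature assumption $\lVert X \rVert_2 < B$ must be used non-trivially.
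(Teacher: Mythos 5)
Your proposal is correct and follows essentially the same route as the paper: the paper likewise keeps the recursion only on the self-term $h_u^{(\ell-1)}-{h'}_u^{(\ell-1)}$ (which telescopes down to the input perturbation $\epsilon$), crudely bounds the aggregated neighbor differences at each layer by $B\cdot\deg(u)$ using the boundedness of the hidden representations, sums the resulting per-layer contributions to get the additive $B\,L\,\Delta_G$ term, and concludes via Markov's inequality. The one bookkeeping point you should make explicit, as the paper does, is that the bound $B$ must be taken as a uniform bound over all hidden layers (not just the input features), which follows from the input space being bounded and each layer being a continuous map.
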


\begin{proof}

Let's now consider a graph-function $f$ that is based on $L$ GIN-layers (with a parameter $\zeta = 0,$ that is usually denoted by $\epsilon$ in the literature). The GIN message-passing propagation process can be written for a node $u$ as:

\begin{equation*}
    h_u^{(\ell + 1)} = T^{(\ell + 1)}( (1 + \zeta) h_u^{(\ell)} +  \underset{v \in \mathcal{N}(u)}{\Sigma} h_v^{(\ell)})
\end{equation*}

where $T$ denotes a Neural Networks (a MLP) for example and $\zeta$ denotes the parameter of the GIN.  We recall that $h^{(0)} = X \in \mathbb{R}^{n\times d}$ is set to the initial node features.

Let $X$ the original node features and $\tilde{X}$ the perturbed features. Let's consider a node $u \in V$, we denote by $h_u$ its representation in the clean graph and $h'_{u}$ its representation in the attacked graph , we note that we consider the activation functions to be \textit{nonexpensive}. Since we only consider node feature based adversarial attacks, then by definition the original node $u$ and its corresponding corrupted node have the same neighborhood, we can therefore write

\begin{align*}
    \lVert h_u^{(L)} - {h'}_{u}^{(L)} \lVert
 & = \lVert T^{(\ell)}( (1 + \zeta) h_u^{(\ell-1)} +  \underset{v \in \mathcal{N}(u)}{\Sigma} h_v^{(\ell-1)}) - T^{(\ell)}( (1 + \zeta) {h'}_{u}^{(\ell)} +  \underset{v \in \mathcal{N}(u)}{\Sigma} {h'}_v^{(\ell)}) \lVert \\
   & \leq \lVert W^{(\ell)} \lVert \lVert (1 + \zeta) (h_u^{(\ell-1)} - {h'}_{u}^{(\ell-1)}) + \underset{v \in \mathcal{N}(u)}{\Sigma} (h_v^{(\ell-1)} - {h'}_v^{(\ell-1)}) \lVert 
\end{align*}

We assume that the input feature space $\mathcal{H}_0$ is bounded, thus each hidden space $\mathcal{H}_i$ of the iterative process of message passing is bounded and let $B = \underset{\ell \leq L}{\max} B_{\ell}$ be its global maximum bound. From the previous result, we have

\begin{align*}
    \lVert h_u^{(\ell)} - {h'}_{u}^{(\ell)} \lVert
    \leq \lVert W^{(\ell)} \lVert \lVert (1 + \zeta) (h_u^{(\ell-1)} - {h'}_{u}^{(\ell-1)}) + B \times deg(u)  \lVert 
\end{align*}

By iteration over the previous process, and by considering $\forall i > 1 : \lVert W^{(i)} \rVert \geq 1$,  we can write

\begin{align*}
    \lVert h_u^{(L)} - {h'}_{u}^{(L)} \lVert
   & \leq B \times deg(u) \times \prod_{l=1}^L \lVert W^{(l)}\lVert \sum_{l=1}^{L} (1+\zeta)^{l} + \prod_{l=1}^L \lVert W^{(l)}\lVert (1+\zeta)^L  \lVert h_u^{(0)} - {h'}_{u}^{(0)} \lVert  \\
   & \leq B \times deg(u) \times \prod_{l=1}^L \lVert W^{(l)}\lVert \sum_{l=1}^{L} (1+\zeta)^{l} + \prod_{l=1}^L \lVert W^{(l)}\lVert (1+\zeta)^L  \epsilon  \\
   & \leq \prod_{l=1}^L \lVert W^{(l)}\lVert [B \times deg(u) \sum_{l=1}^{L} (1+\zeta)^{l} + (1+\zeta)^L  \epsilon]  
\end{align*}

Since we consider the GIN-parameter $\zeta \approx 0$, we can deduce from the previous result that
\begin{equation*}
    \lVert h_u^{(\ell + 1)} - {h'}_{u'}^{(\ell + 1)} \lVert \leq \prod_{l=1}^L \lVert W^{(l)}\lVert [B \times L \times deg(u) + \epsilon]
\end{equation*}

Let's now consider the final output of the model which represents in the case of node classification the individual output of each node. 
\begin{equation*}
    \lVert f(A, X) - f(A, X') \lVert = \lVert \begin{bmatrix}
  \vdots \\
  h_u^{(L)} - {h'}_{u}^{(L)} \\
  \vdots \\
\end{bmatrix} \lVert
\end{equation*}

Based on the previous formulation, we have the following results

\begin{equation*}
    \lVert f(A, X) - f(A, X') \lVert_{\infty} = \prod_{l=1}^L \lVert W^{(l)}\lVert_{\infty}  [B \times L \times \max_{u \in \mathcal{V}}  deg(u) + \epsilon] 
\end{equation*}

Similar to the previous proof of Theorem \ref{theo:main_result}, we connect the Lipschitz constant of our considered model $f$ to the robustness definition in Equation \ref{equation:robustness_definition_1} through Markov inequality and we get the following

$$\gamma  = \prod_{l=1}^L \lVert W^{(l)}\lVert_{\infty}  [B \times L \times \max_{u \in \mathcal{V}}  deg(u) + \epsilon]  /\sigma.$$

\end{proof}

\section{Generalizing to Any Graph Neural Network}
\label{appendix:generalization_GNN}
While this work focuses on GCNs, it can be easily extended to other GNN architecture. Given the iterative nature of GNNs, they can be viewed as a composition of multiple continuously differentiable functions, $f_i : \mathcal{H}_{i} \rightarrow\mathcal{H}_{i+1}$, where $\mathcal{H}_{i}$ represents the input space of the $i-th$ function. The Lipschitz constant, as shown in \cite{pmlr-v139-zhao21e} is given by $C = \prod_{i} \lVert \triangledown f_i  \lVert$. The functions $\{f_i\}_i$ can be grouped into two categories : \textbf{(i)} The set $\mathcal{L}$ of functions $f_i$ that are linear with a corresponding parameter $W_i$. ~ \textbf{(ii)} The set of functions that are not linear. We can hence decompose $C'$ according to the nature of the function as follows

\begin{equation*}
C' = \Big( \prod_{f_i \notin \mathcal{L}} \lVert \triangledown f_i  \rVert \Big)\Big( \prod_{f_i \in \mathcal{L}} \lVert W_i  \rVert \Big).
\end{equation*}

We can assume the input feature space $\mathcal{H}_0$ to be bounded; this a realistic assumption. Thus, each hidden space $\mathcal{H}_{i}$ is also bounded, and since $\{\triangledown f_i\}_{f_i \notin \mathcal{L}}$ are continuous functions on bounded spaces, there exists an upper bound $C_i$ for $ \lVert \triangledown f_i  \lVert $. Therefore, there exists $C>0$, such that, 

\begin{equation*}
C' \leq C \prod_{f_i \in \mathcal{L}} \lVert W_i  \lVert.   
\end{equation*}
We return to the previous case by simply taking $\epsilon' = \epsilon C/C'.$ Furthermore, following the assumption on the input feature space being bounded, it is possible to derive an upper bound on the GNN's robustness when subject to both structural and feature-based attacks simultaneously (See Appendix~\ref{appendix:simultaneous_attacks}).

\section{Vulnerability Upper-bound When Dealing With Both Structural and Node Features Attacks } \todo{We don't refer to this in the main text of the paper, shall we add it another theorem/results?}
\label{appendix:simultaneous_attacks}

In this section, we derive the upper-bound for a GCN's vulnerability $Adv_{\alpha, \beta, \epsilon}[f]$ when dealing with both structural and feature-based attacks simultaneously. \todo{A line is missing in the development}

\begin{align*}
     Adv_{\alpha, \beta, \epsilon}[f] & \leq  \frac{1}{\sigma} \mathbb{E}_{\substack{(G_1, X_1) \sim \mathcal{D}_{\mathcal{G}, \mathcal{X}} ,\\(G_2, X_2) \in B_{\alpha, \beta}((G_1, X_1) ,\epsilon)}}
        \left [\lVert f(A_1,X_1) - f(A_2,X_2) \lVert_2  \right ]  \\
         & \leq \frac{1}{\sigma} \mathbb{E}_{\substack{(G_1, X_1) \sim \mathcal{D}_{\mathcal{G}, \mathcal{X}} ,\\(G_2, X_2) \in B_{\alpha, \beta}((G_1, X_1) ,\epsilon)}}
        \left [\lVert f(A_1,X_1) -  f(A_1,X_2) + f(A_1,X_2) - f(A_2,X_2) \lVert_2  \right ]    \\
    & \leq \frac{1}{\sigma} \mathbb{E}_{\substack{(G_1, X_1) \sim \mathcal{D}_{\mathcal{G}, \mathcal{X}} ,\\(G_2, X_2) \in B_{\alpha, \beta}((G_1, X_1) ,\epsilon)}}
        \left [\lVert f(A_1,X_1) -  f(A_1,X_2)\lVert_2    + \lVert f(A_1,X_2) - f(A_2,X_2) \lVert_2  \right ]   \\
    & \leq \frac{1}{\sigma} \mathbb{E}_{\substack{(G_1, X_1) \sim \mathcal{D}_{\mathcal{G}, \mathcal{X}} ,\\(G_2, X_2) \in B_{\alpha, \beta}((G_1, X_1) ,\epsilon)}}
        \left [\lVert f(A_1,X_1) -  f(A_1,X_2)\lVert_2  \right ]  + \\
        & \hspace*{3cm}\mathbb{E}_{\substack{(G_1, X_1) \sim \mathcal{D}_{\mathcal{G}, \mathcal{X}} ,\\(G_2, X_2) \in B_{\alpha, \beta}((G_1, X_1) ,\epsilon)}}
        \left [\lVert f(A_1,X_2) - f(A_2,X_2) \lVert_2  \right ]. \label{} \\
\end{align*}

We already established an upper-bound in Appendix \ref{sec:proof_theorem} for each term of the previous inequality. Therefore, we can have a new upper-bound for $Adv_{\alpha, \beta, \epsilon}[f]$

\begin{align*}
     Adv_{\alpha, \beta, \epsilon}[f] & \leq  \frac{1}{\sigma}  \left ( \prod_{i=1}^{L} \lVert W^{(i)} \rVert  \epsilon (\sum_{u \in \mathcal{V}} \hat{w_u} )^2  \right )  \epsilon +  \frac{1}{\sigma} \left ( \prod_{i=1}^{L} \lVert W^{(i)} \rVert \lVert X \rVert \epsilon (1 + L  \prod_{i=1}^{L} \lVert W^{(i)} \lVert )  \right ) . 
\end{align*}

Following the assumption on the input feature space being bounded, i.e. $\exists B >0, \forall X ~~~\lVert X \rVert  \leq B$, the inquality become

\begin{align*}
     Adv_{\alpha, \beta, \epsilon}[f] & \leq  \frac{1}{\sigma}  \left (   \prod_{i=1}^{L} \lVert W^{(i)} \rVert \right )    \left (  (\sum_{u \in \mathcal{V}} \hat{w_u} )^2   +  B (1 + L  \prod_{i=1}^{L} \lVert W^{(i)} \lVert ) \right ) \epsilon .
\end{align*}

Thus,  the classifier $f$ is $d^{0,1}$-$(\epsilon, \gamma)$ robust where 
$$\gamma  = \left (   \prod_{i=1}^{L} \lVert W^{(i)} \rVert \right )    \left (  (\sum_{u \in \mathcal{V}} \hat{w_u} )^2   +  B (1 + L  \prod_{i=1}^{L} \lVert W^{(i)} \lVert ) \right )  \epsilon/\sigma.$$

\subsection{Experimental Validation}
We assessed the efficacy of our proposed method against both Structural and Node-feature-based adversarial attacks simultaneously. This evaluation involved a combined evasion attack approach, wherein two attacks were integrated. Initially, we generated the perturbed adjacency matrix by employing a surrogate GCN model along with the "Mettack" (utilizing the ‘Meta-Self’ strategy) \citep{zugner2019adversarial}. Subsequently, we perturbed the node features using a random attack strategy involving the injection of Gaussian noise $\mathcal{N}(0, \mathbf{I})$ into the features. This perturbation was controlled by a scaling parameter $\psi$, which we identified as effective in prior experiments. For the structural perturbations, we set the perturbation budget to $0.1 E$, while for the node features, $\psi$ was set to $5.0$. We compared our method to the same considered defense benchmarks that were used in Section \ref{sec:experiments}.

\begin{table}[]
\centering
\caption{Attacked classification accuracy ($\pm$ standard deviation) of the models on different benchmark node classification datasets after both structural attacks and node feature attacks application.}

\resizebox{0.95\textwidth}{!}{%

\begin{tabular}{llllllll}
\hline
\multicolumn{1}{c}{Dataset} & \multicolumn{1}{c}{GCN} & \multicolumn{1}{c}{GCN-Jaccard} & \multicolumn{1}{c}{RGCN} & \multicolumn{1}{c}{GNN-SVD} & \multicolumn{1}{c}{GNNGuard} & \multicolumn{1}{c}{ParsevalR} & \multicolumn{1}{c}{GCORN} \\ \hline
Cora                        & 76.7   $\pm$ 1.2        & 76.7 $\pm$ 0.4                  & 78.1 $\pm$ 0.6           & 65.9 $\pm$ 4.0              & 61.0 $\pm$ 0.4               & 77.6 $\pm$ 1.1               & \textbf{79.8 $\pm$ 0.7}   \\
CiteSeer                    & 68.7 $\pm$ 0.3          & 71.5 $\pm$ 0.3                  & 68.9 $\pm$ 0.8           & 68.3 $\pm$ 0.5              & 65.0 $\pm$ 0.3               & 70.9 $\pm$ 0.6               & \textbf{74.2 $\pm$ 0.3}   \\
PubMed                      & 47.0 $\pm$ 0.4          & 49.4 $\pm$ 0.8                  & 51.2 $\pm$ 0.4           & 47.7 $\pm$ 0.5              & 40.3 $\pm$ 0.3               & 49.7 $\pm$ 1.1               & \textbf{53.9 $\pm$ 0.8}   \\
CoraML                      & 63.3 $\pm$ 0.0          & 32.2 $\pm$ 1.5                  & 59.7 $\pm$ 1.4           & 53.8 $\pm$ 0.8              & 32.2 $\pm$ 1.5               & 65.4 $\pm$ 0.2               & \textbf{77.4 $\pm$ 1.1}   \\ \hline
\end{tabular}
}
\end{table}

\section{Time and Complexity Analysis} \label{sec:time_complexity_analysis}
Drawing from the original paper \citep{bjork_paper}, it has been established that the Bjorck orthonormalization method will always converge, provided that the condition $\lVert W^TW - I \rVert_2 < 1$ is met. In this section, we will begin by examining the connection between the selected order, the number of iterations and the resulting performance and then present an empirical time analysis of the proposed GCORN on various datasets.

\subsection{On the Effect Of Order/Iterations}
As per the projection equation~\ref{equation:bjork}, it appears that a higher order/iteration generally results in a more precise projection of the weight matrix. In our study, we gauge the accuracy of the approximation by assessing the model's performance in the absence of adversarial attacks, as well as under attack. Although our experiments demonstrate that employing the first order with a restricted number of iterations can produce satisfactory outcomes, we believe that an hyper-parameter analysis should be conducted. Consequently, we evaluate the influence of modifying the order (while maintaining a constant number of iterations) on the resulting accuracy in Table~\ref{tab:order_iter_analysis}.

\begin{table}[h]
\small
\caption{Performance of GCN and our proposed GCORN model, for different used approximation orders, on the Cora dataset.}
\label{tab:order_iter_analysis}
\vskip 0.01in
\begin{center}
\begin{small}
\begin{sc}
\begin{tabular}{lcccc}
\toprule
 & GCN & GCORN(1 ord) & GCORN(2 ord) & GCORN(3 ord) \\
\midrule
Training Time (in s)    & 2.8 $\pm$ 0.01 & 4.8 $\pm$ 0.07 & 8.7 $\pm$ 0.07 & 10.9 $\pm$ 0.08 \\
Accuracy w/o attack    & 79.2 $\pm$ 1.6 & 78.8 $\pm$ 1.3 & 79.8 $\pm$ 0.9 & 80.8 $\pm$ 1.1 \\
Accuracy w. attack    & 68.4 $\pm$ 1.9 & 77.1 $\pm$ 2.1 & 78.3 $\pm$ 1.1 & 78.6 $\pm$ 0.4 \\

\bottomrule
\end{tabular}
\end{sc}
\end{small}
\end{center}
\vskip -0.1in
\end{table}

Table~\ref{tab:order_iter_analysis} shows both clean and attacked accuracy of the GCN and our proposed GCORN and the corresponding standard deviations for different orders. We observe that using higher order approximations generally leads to enhanced accuracy and robustness, albeit at the expense of increased running time. Therefore, the choice of order hyperparameter employed in the approximation must be carefully tuned for each application to find the best trade-off between robustness,accuracy and training time. In our experiments, we found that an approximation of order 1 yielded adequate and satisfactory results.

\subsection{On Training Time}

In line with the above, we also investigate the training time of our model compared to that of a standard GCN. To conduct this experiment, we held the order and number of iterations constant and utilized identical architectures and hyperparameters for both models (see Appendix~\ref{sec:dataset_implementation_details}).

\begin{table}[h]
\small
\caption{Mean training time analysis (in s) of a our GCORN in comparison to the other benchmarks.}
\label{tab:time_analysis}
\vskip 0.01in
\begin{center}
\begin{small}
\begin{sc}
\begin{tabular}{lccccc}
\toprule
Dataset & GCN & GCN-K & AIRGNN & RGCN & GCORN  \\
\midrule
Cora    & 2.8 & 1.8 & 2.6 & 3.2 & 4.8  \\
CiteSeer    & 2.4 & 5.8 & 2.9 & 2.4 & 4.6  \\
PubMed    & 5.9 & 8.9 & 7.4 & 14.5 & 7.3  \\
CS    & 6.1 & 12.1 & 12.4 & 13.8 & 15.5  \\
Ogbn-Arxiv & 77.8 & 185.8 & 68.1 & 161.6 & 78.4  \\

\bottomrule
\end{tabular}
\end{sc}
\end{small}
\end{center}
\vskip -0.1in
\end{table}

The mean training time are presented in Table \ref{tab:time_analysis}, indicating a trade-off between the improved robustness provided by the GCORN model and its slightly longer training time compared to the GCN model and other available methods.

\section{More Details About The Estimation Of Our Robustness Measure}  \label{appendix:prob_eval}
 
We begin by providing the pseudocode of the algorithm used to estimate our robustness measure. 

    \begin{algorithm}[H]
    \small
    \textbf{Inputs: }Sphere Radius :  $\epsilon > 0,$ 
    Number of Samples $L_{max},$ 
    Number of Input Graphs $|\mathcal{D}|$\; 
    Initialize $Adv$ = 0\; 
    \ForEach{$[G_i,X_i] \in \mathcal{D}$}{
        Initialize $Adv_i$ = 0\; 
        \ForEach{$l = 1,\ldots,L_{max}$}{
            1. Sample a distance $r\in [0,\epsilon]$ from the prior distribution $p_\epsilon$ (see Appendix \ref{appendix:proof_lemma})\;
            2. Uniformly sample $Z_l \in \mathbb{R}^{n \times K}$ from $S_r$ (see Appendix \ref{appendix:prob_eval})\;
            3. Choose $\tilde{X}_l = X_i + Z_l$\;
            4. Update \\$~~~~~~~Adv_i \leftarrow Adv_i + \mathbf{1}\{d_{\mathcal{Y}}(f(\tilde{G}_l, \tilde{X}_l), f(G, X))>\sigma\}$\;
        }
        $Adv =Adv +  Adv_i/L_{max}$;
    }
    Return $Adv/|\mathcal{D}| $ \;
    \caption{Estimation of  $Adv^{\alpha, \beta}_{\epsilon}[f].$ }\label{algo:prob_eval}
    \end{algorithm}

We now detail how to uniformly sample from the set $S_r$ defined by, 
$$S_r = \{ Z\in \mathbb{R}^{n\times D}  ~ \mid ~~~\max_{1\leq i \leq n} \lVert Z_i\lVert_p = r \},$$
where $r \in [0, \epsilon]$. The set $S_r$ can also be defined in the following way,

\begin{align*}
  Z \in S_r   &\Leftrightarrow  \max_{1\leq i \leq n} \lVert  Z_i\lVert_p = r  \\
    &\Leftrightarrow \left\{
                    \begin{array}{ll}
                      \exists i_0 \in \{1,\ldots,n\} &\lVert  Z_{i_0}\lVert_p = r ,\\ 
                     \forall i \in \{1,\ldots,n\} &\lVert  Z_i\lVert_p \leq  r .
                    \end{array}
                \right.  
 \end{align*}

  Since the position of $i_0$ within $\{1,\ldots,n\}$ do not matter in the previous equivalence, we can uniformly sample $i_0 \in \{1,\ldots,n\}$, such that $r_{i_0} = \lVert  Z_{i_0}\lVert_p = r$ and that the other index $i \in \{1,\ldots,n\}\setminus\{i_0\} $ should satisfy $r_i= \lVert  Z_i\lVert_p \leq r $, i.e., $Z_i \in \mathbb{B}^{K}(r) = \{x \in \mathbb{R}^K| \lVert x \lVert_p \leq r  \}$. We will use another time \textit{Stratified Sampling} where we start by sampling $\{r_i\}_{i\neq i_0}$ within $[0, r]$. Using the Lemma \ref{lem:sampling_radius}, we can directly sample $\{r_i\}_{i\neq i_0}$ from the probability distribution 

 $$\forall i \in \{1,\ldots,n\}\setminus\{i_0\}, ~~ p(r_i) = K\frac{1}{r} \left ( \frac{r_i}{r}  \right )^{K-1} . $$

Once we know the values of $\{r_i\}_{i}$, the problem boils down to sampling $n$ vectors $\{Z_i\}_{1\leq i \leq n }$ from $\mathbb{R}^K$, such that 

\begin{equation}
    \forall i\in\{1, \ldots , n\}, ~~~~ \lVert Z_i  \lVert_p = r_i   .
    \label{eq:constraint_lp}
\end{equation} 

For that, we have to consider the two cases $p< \infty$ and $p=\infty$  

\subsection{Case Where $p<\infty$}
In this case, Equation \ref{eq:constraint_lp} can be written as follows

 \begin{align}
     \forall i\in\{1, \ldots , n\}, ~~~~ \lVert  Z_i \lVert_p =  r_i & \Leftrightarrow  \forall i\in\{1, \ldots , n\}, ~~~~  \sum_{j=1}^{K} |Z_{i,j}|^p = r_i^p  , \nonumber \\
      & \Leftrightarrow  \forall i\in\{1, \ldots , n\}, ~~~~  \sum_{j=1}^{K} \left (  \frac{|Z_{i,j}|}{r_i} \right )^{p} = 1 . \label{eq:sample}
 \end{align}

To satisfy Equation \eqnref{eq:sample}, we randomly uniformly partition the $[0,1]$ into $K$ parts. To do so, for each $i \in \{1, \ldots , n\}$, we randomly sample $D-1$ element from uniform distribution $\mathcal{U}[0,1]$; Let's denote $[p^{(i)}_1, \ldots ,p^{(i)}_{K-1}]$ the sorted list of sampled elements. We directly choose 
\begin{equation*}
      \mathcal{O}_{i,j} =  \left\{
    \begin{array}{ll}
              1-p^{(i)}_{j-1} & \text{if } j = K,\\
              p^{(i)}_j & \text{if } j = 0 ,\\ 
             p^{(i)}_j-p^{(i)}_{j-1} &  \text{otherwise .}  
            \end{array}
    \right.
\end{equation*}

The matrix $\mathcal{O} = \left ( \mathcal{O}_{i,j}  \right )_{1\leq i \leq n,1\leq j \leq D}$ satisfy 

$$\forall i\in\{1, \ldots , n\}, ~~~~  \sum_{j=1}^{K}  \mathcal{O}_{i,j} = 1 . $$
Figure \ref{fig:GraphConstructions} gives insight into the previously introduced concept of randomly partitioning in the case of the set $[0,1]$ into $K=4$.

Based on the previous elements, we can choose $Z$ such that 
\begin{align}
    \forall i,j, ~~~~~ \left (  \frac{|Z_{i,j}|}{r_i} \right )^{p} = \mathcal{O}_{i,j} & \Leftrightarrow   \forall i,j, ~~~~~ |Z_{i,j}| =  r_i \times \mathcal{O}_{i,j} ^{1/p} . \label{eq:O_Z}
\end{align}

\begin{figure*}[t]
    \centering
    \resizebox{0.7\columnwidth}{!}{%
    \tikzset{every picture/.style={line width=0.75pt}} 

\begin{tikzpicture}[x=0.75pt,y=0.75pt,yscale=-1,xscale=1]

\draw    (98.71,120.14) -- (492.5,120.75) ;
\draw    (150.41,114.96) -- (150.56,125.73) ;
\draw    (98.71,108.14) -- (98.71,132.14) ;
\draw    (353.92,114.76) -- (354.08,125.53) ;
\draw    (399.69,115.38) -- (399.85,126.15) ;
\draw    (492.5,108.75) -- (492.5,132.75) ;
\draw   (99.57,119.86) .. controls (99.52,124.53) and (101.82,126.89) .. (106.49,126.94) -- (114.78,127.03) .. controls (121.45,127.1) and (124.75,129.47) .. (124.7,134.14) .. controls (124.75,129.47) and (128.11,127.18) .. (134.78,127.26)(131.78,127.22) -- (143.06,127.35) .. controls (147.73,127.4) and (150.09,125.1) .. (150.14,120.43) ;
\draw   (150.71,120.43) .. controls (150.7,125.1) and (153.03,127.43) .. (157.7,127.44) -- (242.56,127.56) .. controls (249.23,127.57) and (252.56,129.9) .. (252.55,134.57) .. controls (252.56,129.9) and (255.89,127.57) .. (262.56,127.58)(259.56,127.58) -- (347.42,127.7) .. controls (352.09,127.71) and (354.42,125.38) .. (354.43,120.71) ;
\draw   (354.43,121) .. controls (354.37,125.67) and (356.67,128.03) .. (361.34,128.09) -- (366.91,128.16) .. controls (373.58,128.25) and (376.88,130.62) .. (376.82,135.29) .. controls (376.88,130.62) and (380.24,128.33) .. (386.91,128.41)(383.91,128.38) -- (392.48,128.49) .. controls (397.15,128.54) and (399.51,126.24) .. (399.57,121.58) ;
\draw   (399.86,122.14) .. controls (399.87,126.81) and (402.21,129.13) .. (406.88,129.12) -- (436.02,129.03) .. controls (442.69,129.01) and (446.03,131.33) .. (446.04,136) .. controls (446.03,131.33) and (449.35,128.99) .. (456.02,128.97)(453.02,128.98) -- (485.16,128.87) .. controls (489.83,128.86) and (492.15,126.52) .. (492.14,121.85) ;

\draw (92.83,134.17) node [anchor=north west][inner sep=0.75pt]   [align=left] {$\displaystyle 0$};
\draw (487.33,132.33) node [anchor=north west][inner sep=0.75pt]   [align=left] {$\displaystyle 1$};
\draw (142,93) node [anchor=north west][inner sep=0.75pt]   [align=left] {$\displaystyle p_{1}$};
\draw (345.33,92.33) node [anchor=north west][inner sep=0.75pt]   [align=left] {$\displaystyle p_{2}$};
\draw (389.33,91.67) node [anchor=north west][inner sep=0.75pt]   [align=left] {$\displaystyle p_{4}$};
\draw (114.67,134) node [anchor=north west][inner sep=0.75pt]   [align=left] {$\displaystyle \mathcal{O}_{1}$};
\draw (244,134.67) node [anchor=north west][inner sep=0.75pt]   [align=left] {$\displaystyle \mathcal{O}_{2}$};
\draw (370,134) node [anchor=north west][inner sep=0.75pt]   [align=left] {$\displaystyle \mathcal{O}_{3}$};
\draw (437.33,136.67) node [anchor=north west][inner sep=0.75pt]   [align=left] {$\displaystyle \mathcal{O}_{4}$};

\end{tikzpicture}
    }
    \caption{A toy example on how to randomly partition the set $[0,1]$ into $K=4$ parts such as the sum of the parts lengths is 1. We first uniformly sample $3$ elements from $[0,1]$ and reorder them $[p_1,p_2,p_3]$. And subsequently, we consider $\mathcal{O}_1 = [0,p_1], ~~\mathcal{O}_2= [p_4,p_3], ~~\mathcal{O}_3 = [p_4,p_3], ~~\mathcal{O}_4 = [1,p_4]$
    }
    \label{fig:GraphConstructions}
\end{figure*}
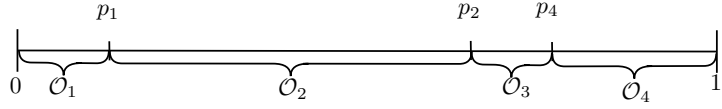

Equation \ref{eq:O_Z} is invariant with respect to the sign of $Z_{i,j}$, hence, we use the following
$$  \forall i,j, ~~~~~ Z_{i,j} = u_{i,j} \times   r_i \times \mathcal{O}_{i,j} ^{1/p} , $$
where $u_{i,j}$ is sampled from the discrete uniform distribution $\mathcal{U}_{\{-1,1\}}$.

We recall that for $p=1$, the distance $d^{0,1}([G, X], [\Tilde{G} , \Tilde{X}]) = \max_{i \in \{1,\ldots, n\}} \lVert X_{i}-\Tilde{X}_{i}\lVert_p$ defined in Equation \ref{eq:prob_eval_distance} matches with the infinity matrix distance $d_\infty: (A,B) \mapsto \sum_{X\neq 0} \frac{\left \| AX-BX \right \|_\infty }{\left \| X \right \|_\infty } $ as shown in \cite{nagisa2022p}. Therefore, from the Theorem \ref{theo:main_result},  the upper-bound of the expected vulnerability $Adv^{\alpha, \beta}_{\epsilon}[f]$ is $\gamma  = \prod_{i=1}^{L} \lVert W^{(i)} \rVert_\infty  \epsilon \hat{w}_G /\sigma$.

\subsection{Case Where $p=\infty$}
In this case, Equation \ref{eq:constraint_lp} can be written as follows

 \begin{align}
     \forall i\in\{1, \ldots , n\}, ~~~~ \lVert  Z_i \lVert_\infty =  r_i & \Leftrightarrow  \forall i\in\{1, \ldots , n\}, ~~~~  \max_{1 \leq j \leq K} |Z_{i,j}|= r_i \nonumber \\
      & \Leftrightarrow  \forall i\in\{1, \ldots , n\}, ~~~~  \left\{
                    \begin{array}{ll}
                      \exists j_0 \in \{1,\ldots,K\} &   |Z_{i,j_0}| = r_i , \\ 
                     \forall j \in \{1,\ldots,K\} &|Z_{i,j_0}| \leq  r_i.
                    \end{array}
                \right.  
 \end{align}
 
For each $i\in\{1, \ldots , n\}$, we randomly select $j_0  \in \{1,\ldots,K\}$ to satisfy $|Z_{i,j_0}| = r_i$. For the other $j\neq j_0$, we uniformly sample the value of $|Z_{i,j}| $ from $\mathcal{U}_{[0,r_i]}$. These equations are invariant with respect to the sign of $Z_{i,j}$, therefore, we choose, 
$$  \forall i,j, ~~~~~ Z_{i,j} = u_{i,j} \times   |Z_{i,j}| , $$
where $u_{i,j}$ is sampled from the discrete uniform distribution $\mathcal{U}_{\{-1,1\}}$.
 





\section{Proof of Lemma \ref{lem:sampling_radius}}
\label{appendix:proof_lemma}
\begin{lemma*}
Let $\mathbb{R}^K$ be the real finite-dimensional space and $\epsilon$ a positive real number. If  $R^{(p)}$ is the random variable indicating the maximum of the $L_p$ norms values inside the ball of radius $\epsilon$, i.e., 
$\mathcal{B}_\epsilon = \left \{  Z \in \mathbb{R}^{n\times K}: \max_{i \in \{1,\ldots, n\}} \lVert Z_i \lVert_p \leq \epsilon  \right \} .$
Then, for every $p>0$, the density distribution of $R^{(p)}$ does not depends on $p$ and is defined as follows, 
$$ p_\epsilon(r)   =  K \tfrac{1}{\epsilon} \left ( \tfrac{r}{\epsilon}\right )^{K-1}\mathbf{1}\{ 0 \leq r \leq \epsilon \}. $$
\end{lemma*}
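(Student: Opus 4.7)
The plan is to compute the CDF of $R^{(p)}$ via a volume-ratio argument and then differentiate. The key observation is that the $L_p$ ``ball'' $B_p^K(\rho) := \{z \in \mathbb{R}^K : \lVert z\rVert_p \leq \rho\}$ is, by positive homogeneity of $\lVert \cdot \rVert_p$, a homothetic copy of $B_p^K(\epsilon)$ with scale factor $\rho/\epsilon$. Since Lebesgue measure in $\mathbb{R}^K$ scales by $\lambda^K$ under dilation by $\lambda$, one gets $\mathrm{Vol}(B_p^K(\rho))/\mathrm{Vol}(B_p^K(\epsilon)) = (\rho/\epsilon)^K$, and crucially the $p$-dependent prefactors in the absolute volumes cancel in this ratio.

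Under uniform sampling on $\mathcal{B}_\epsilon = \prod_{i=1}^n B_p^K(\epsilon)$, the rows $Z_1,\ldots,Z_n$ are i.i.d.\ uniform on $B_p^K(\epsilon)$; the density stated in the lemma corresponds to the single-row marginal distribution of $\lVert Z_i\rVert_p$, which is exactly the object invoked by the stratified sampling procedure. Hence for $0 \leq r \leq \epsilon$,
$$\mathbb{P}\bigl(R^{(p)} \leq r\bigr) = \frac{\mathrm{Vol}(B_p^K(r))}{\mathrm{Vol}(B_p^K(\epsilon))} = \left(\frac{r}{\epsilon}\right)^K,$$
which is independent of $p$. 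Differentiating in $r$ then yields $p_\epsilon(r) = K \epsilon^{-1}(r/\epsilon)^{K-1}$ on $[0,\epsilon]$ and zero elsewhere, exactly as claimed.

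The only mildly delicate point is that for $0 < p < 1$ the set $B_p^K(\epsilon)$ is non-convex, but the homothety argument only requires $B_p^K(\epsilon)$ to be star-shaped about the origin and to have finite, strictly positive Lebesgue measure, both of which hold for every $p > 0$. I would also briefly verify the normalization $\int_0^\epsilon K \epsilon^{-1}(r/\epsilon)^{K-1}\,dr = 1$ as a sanity check. No substantive technical obstacle is expected; the proof is essentially a one-line volume-ratio computation combined with the positive homogeneity of $\lVert \cdot \rVert_p$, with the cancellation of the $p$-dependent volume constants explaining why the resulting density is $p$-independent.
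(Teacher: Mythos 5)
Your proposal is correct and follows essentially the same route as the paper's proof: both reduce to the single-row case $n=1$ and derive the density from the scaling $\mathrm{Vol}(B_p^K(r)) = (r/\epsilon)^K\,\mathrm{Vol}(B_p^K(\epsilon))$, with your differentiation of the CDF $\mathbb{P}(R^{(p)}\leq r)=(r/\epsilon)^K$ being the same computation the paper writes as surface over volume, $\mathcal{S}^K(r)/\mathcal{V}^K(\epsilon)$. The only cosmetic differences are that you obtain the volume scaling by dilation homogeneity of Lebesgue measure where the paper uses a Fubini induction, and that you state the reduction to the single-row marginal more explicitly than the paper's brief equivalence argument.
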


\begin{proof}
Using the following equivalence 
$$\exists  Z \in  \mathbb{R}^{n,K} \max_{1\leq i \leq n} \lVert Z_i\lVert_p = r   \Leftrightarrow  \exists T \in \mathbb{R}^K, ~~  \lVert T \lVert_p = r .$$
We deduce that the density $p_\epsilon$ do not depend on $n$, and therefore we can set $n=1$ for this proof. Therefore,
$$\mathcal{B}_\epsilon = \left \{  Z \in \mathbb{R}^{ K}| \lVert Z \lVert_p \leq \epsilon  \right \} .$$

We start by calculating the volume of the $K-$sphere of radius $r$ for any real finite-dimensional space $\mathbb{R}^K$ where $K>2$ is an integer.
$$\mathbb{S}^{K}(r) = \{x \in \mathbb{R}^K| \lVert x \lVert_p = r  \} .$$
Let's denote the volume of $\mathbb{S}^{K}(r)$ by $\mathcal{V}^{K}(r)$.\\
Using \textit{Fubini Theorem}, we have 

\begin{align}
  \mathcal{V}^{K}(r) & = \int_{-r}^{r}    \mathcal{V}^{K-1} \left ( \left ( r^p - |x|^p\right )^{1/p}\right ) dx  \nonumber\\
  & = r \int_{-1}^{1}    \mathcal{V}^{K-1}\left (  r \left ( 1 - |x|^p   \right )^{1/p}\right ) dx  \nonumber\\
  & = r \int_{-1}^{1}   r^{K-1}  \mathcal{V}^{K-1}\left ( \left ( 1 - |x|^p   \right )^{1/p} \right)  dx   \nonumber\\
    & = r^{K}\int_{-1}^{1}     \mathcal{V}^{K-1}\left ( \left ( 1 - |x|^p   \right )^{1/p} \right ) dx   \nonumber\\
      & = r^{K}    \mathcal{V}^{K} (  1 ) . \nonumber\\
\end{align}

Thus, the surface $L^p$ ball in $\mathbb{R}^{K}$ of radius $r$ is 

\begin{align}
  \mathcal{S}^{K}(r)  & = \frac{d \mathcal{V}^{K}(r)}{d r}   \\
                & = K r^{K-1} \mathcal{V}^{K} (  1 ) .
\end{align}

Consequently the density distribution of $\mathbf{R}^{(p)}$ could be written as
\begin{align}
   p_{\epsilon}(r)  & = \frac{\mathcal{S}_{K}^{p}(r)}{\mathcal{V}_{K}^{p}(\epsilon)}\mathbf{1}\{ 0 \leq r \leq \epsilon \}   \nonumber\\
  & = \frac{1}{\mathcal{V}_{K}^{p}(\epsilon)}\frac{d \mathcal{V}_{K}^{p}(r)}{d r}\mathbf{1}\{ 0 \leq r \leq \epsilon \}   \nonumber \\
                & = a \frac{1}{\epsilon} \left ( \frac{r}{\epsilon}\right )^{K-1}\mathbf{1}\{ 0 \leq r \leq \epsilon \}.
\end{align}
We note that the previous quantity doesn't depend on $p$.
\end{proof}

\section{Additional Results} 
\label{appendix:additional_results}
\subsection{Node Classification}

To further understand the relationship between robustness and input sensitivity, from which we derived the upper bound for GNNs and specifically GCNs, we empirically validate the trade-off between robustness and input perturbation. We hence compare the difference in the output of each model when subjected to random perturbations with varying attack budgets. 
The results of this study are presented in Figure \ref{fig:plot_diff_nrom}. The figure displays the results for both Cora and CiteSeer datasets in both absolute (left figure) and log-scale (right figure) format. The results demonstrate that the GCN is highly sensitive to perturbations in comparison to the proposed approach in terms of the difference in output. These findings provide validation for the stability of GCORN against input perturbations, and by extension, adversarial attacks.

\begin{figure}[t]
\begin{center}
\centerline{\includegraphics[width=0.8\columnwidth]{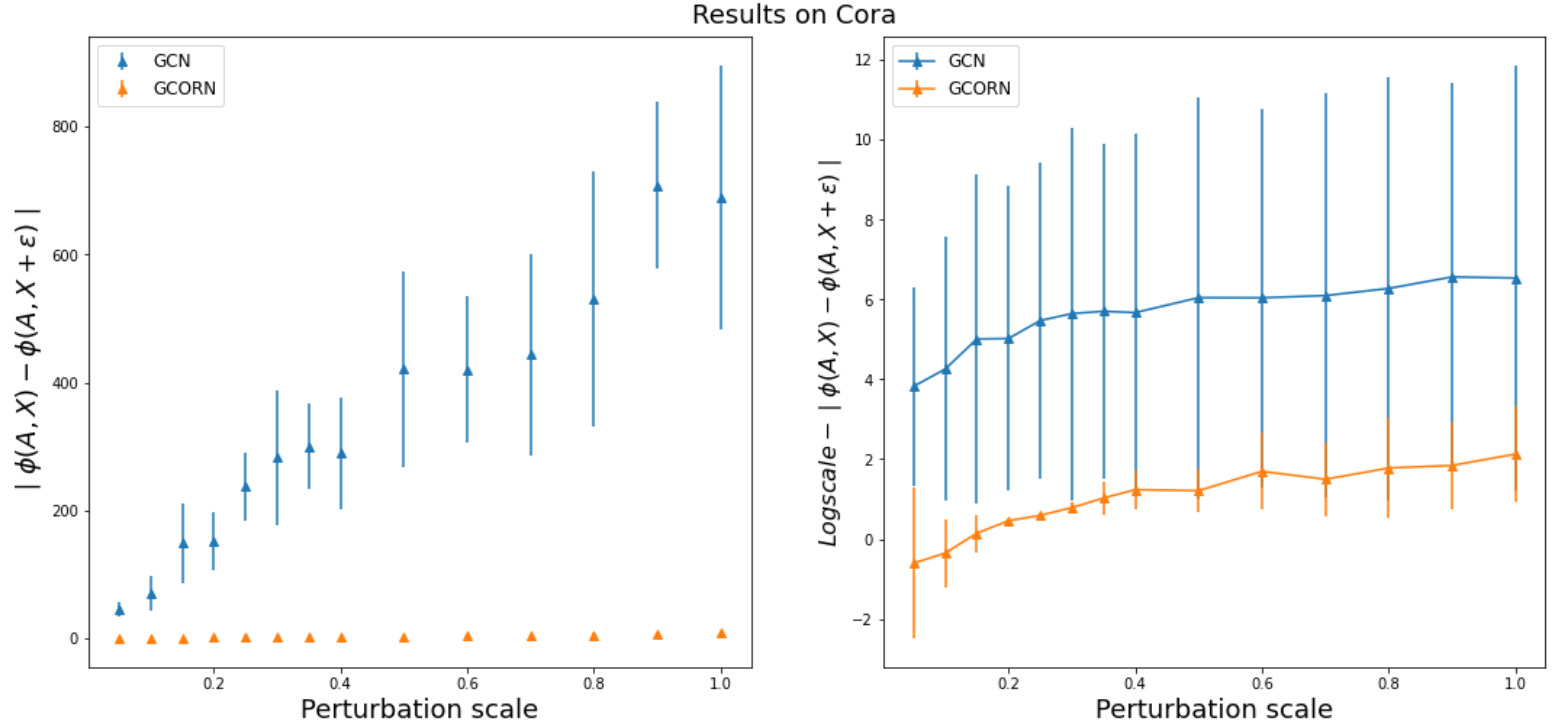}}
\centerline{\includegraphics[width=0.8\columnwidth]{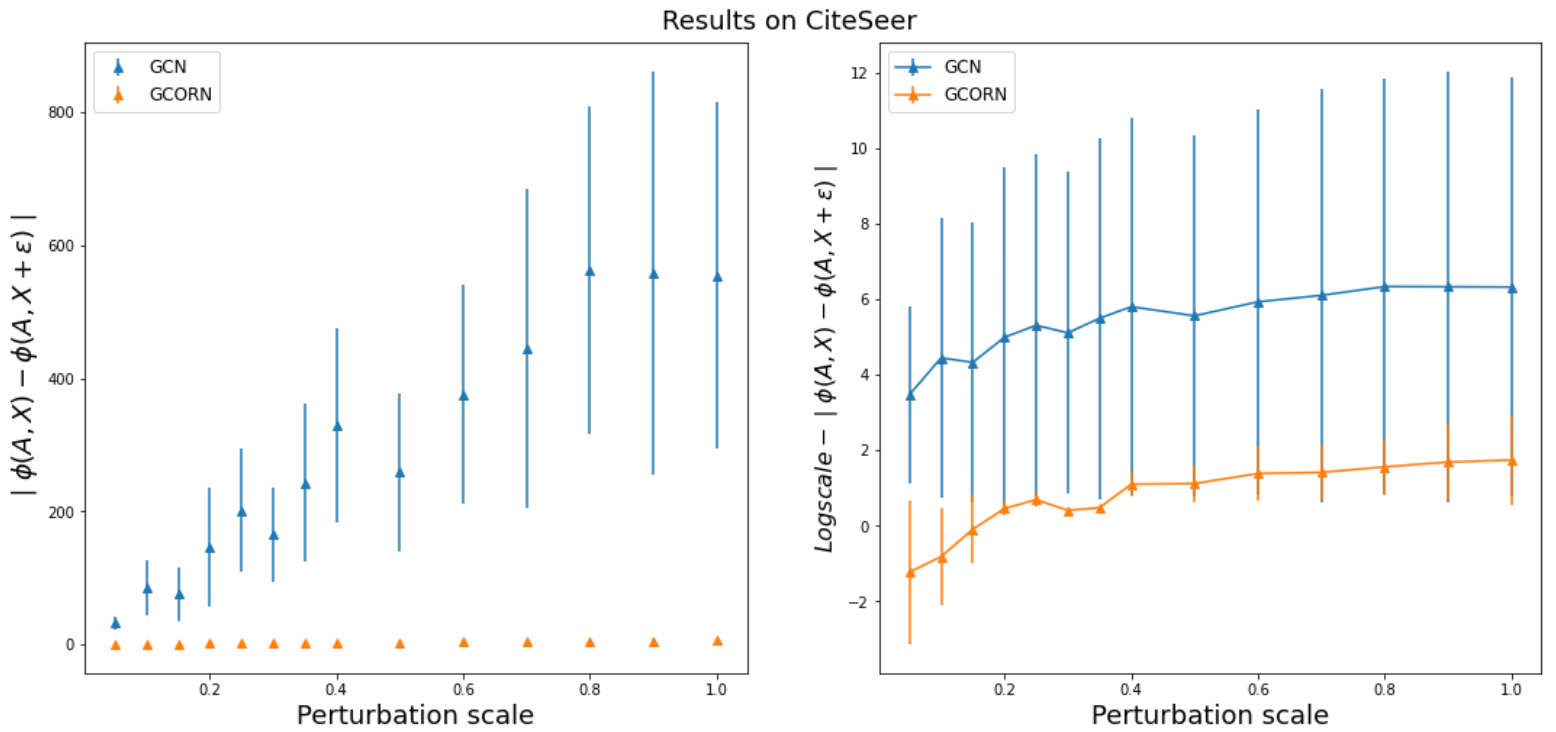}}
\caption{Difference (in average and standard deviation) in output for the GCN and our GCORN when subject to random perturbations with different attack budgets for both (a) Cora and (b) CiteSeer. The right-hand side plots are on the log-scale.}
\label{fig:plot_diff_nrom}
\end{center}
\end{figure}

We additionally considered two defense techniques benchmarks: \textbf{(1)} ElasticGNN \citep{elasticnet} which proposes a novel and general message passing scheme into GNNs to enhance the local smoothness adaptivity of GNNs via $\ell_1$-based graph smoothing and \textbf{(2)} EvenNet \citep{evennet} which proposes a spectral GNN corresponding to even-polynomial graph filter with the main idea being that ignoring odd-hop neighbors improves the underlying robustness. For this additional evaluation, we consider similar attacks techniques as the one from Table \ref{tab:results_node_classification}.

\begin{table}[h]
\centering
\caption{Attacked classification accuracy ($\pm$ standard deviation) of the models on different benchmark node classification dataset after the attack application. }

\begin{tabular}{llccc}
\hline
Attack                                                                           & Dataset  & ElasticGNN     & EvenNet        & GCORN          \\ \hline
\multirow{3}{*}{\begin{tabular}[c]{@{}l@{}}Random \\ ($\psi =0.5$)\end{tabular}} & Cora     & 72.5 $\pm$ 2.4 & 69.7 $\pm$ 1.8 & 77.1 $\pm$ 1.8 \\
                                                                                 & CiteSeer & 58.9 $\pm$ 2.3 & 48.6 $\pm$ 1.6 & 67.8 $\pm$ 1.4 \\
                                                                                 & PubMed   & 70.5 $\pm$ 0.6 & 68.8 $\pm$ 1.6 & 73.1 $\pm$ 1.1 \\ \hline
\multirow{3}{*}{\begin{tabular}[c]{@{}l@{}}Random \\ ($\psi =0.5$)\end{tabular}} & Cora     & 54.4 $\pm$ 2.4 & 48.9 $\pm$ 2.1 & 57.6 $\pm$ 1.9 \\
                                                                                 & CiteSeer & 43.8 $\pm$ 2.0 & 39.7 $\pm$ 4.2 & 57.3 $\pm$ 1.7 \\
                                                                                 & PubMed   & 56.8 $\pm$ 2.1 & 58.5 $\pm$ 3.2 & 65.8 $\pm$ 1.4 \\ \hline
\multirow{3}{*}{PGD}                                                             & Cora     & 64.7 $\pm$ 1.6 & 57.6 $\pm$ 3.6 & 71.1 $\pm$ 1.4 \\
                                                                                 & CiteSeer & 50.8 $\pm$ 3.2 & 37.1 $\pm$ 3.3 & 65.6 $\pm$ 1.4 \\
                                                                                 & PubMed   & 64.6 $\pm$ 1.5 & 61.0 $\pm$ 4.1 & 72.3 $\pm$ 1.3 \\ \hline
\end{tabular}
\end{table}

\subsection{Graph Classification}

In this section, we present the result of our \textit{GCORN} on the graph classification task using both empirical and our proposed probabilistic evaluation. Details about the experimental setting for this task are provided in Appendix~\ref{sec:dataset_implementation_details}. Note that for the random attack, we used $\psi = 0.5$ and for the gradient-based, we used a budget $\delta = 0.2$. Table \ref{tab:results_graph_classification} reports the average accuracy and the corresponding standard deviation for both clean and attacked accuracy.

\begin{table*}[t]
	\centering
	\caption{Classification accuracy ($\pm$ standard deviation) of the models on different benchmark graph classification dataset before (clean) and after the attack. The higher the accuracy (in \%) the better.}
	\label{tab:results_graph_classification}
    \begin{tabular}{p{0.10\textwidth}cccccc}
        \toprule
        \multirow{2}{*}{Attack} & \multicolumn{2}{c}{PROTEINS} & \multicolumn{2}{c}{D\&D} & \multicolumn{2}{c}{NCI1}\\
        
        \cmidrule(lr){2-3} \cmidrule(lr){4-5} \cmidrule(lr){6-7}
        & \(\mathbf{GCN}\)  & \(\mathbf{GCORN}\) & \(\mathbf{GCN}\) & \(\mathbf{GCORN}\) & \(\mathbf{GCN}\)  & \(\mathbf{GCORN}\)   \\
        \midrule
        Clean & 73.4 $\pm$ 2.8 &  74.1 $\pm$ 1.9 & 75.8 $\pm$ 3.6 & 76.4 $\pm$ 4.1  & 75.7 $\pm$ 2.2 & 74.8 $\pm$ 1.7 \\
        
        \hline
        Random & 66.7 $\pm$ 2.5 & 71.7 $\pm$ 2.8 &  70.9 $\pm$ 3.2 & 75.1 $\pm$ 4.8 & 67.1 $\pm$ 2.6 & 71.8 $\pm$ 2.1 \\

        \hline
        PGD & 56.7 $\pm$ 2.8  & 65.4 $\pm$ 3.1 & 61.8 $\pm$ 4.1  & 68.6 $\pm$ 4.3  & 54.9 $\pm$ 2.9  & 62.4 $\pm$ 2.1 \\

        \toprule   
        \end{tabular}

\end{table*}

\section{Experimental Results on GIN} \label{sec:experimental_setup_for_gin}
To showcase our method's versatility beyond the considered GCN framework and confirm the GIN-related results outlined in Theorem \ref{theo:gin_results}, we conducted similar experiments to those presented in Table \ref{tab:results_node_classification}. However, this time, we employed GIN as the message-passing scheme. In this perspective, we used the same previously considered node features attacks, notably: \textbf{(i)} The baseline random attack injecting Gaussian noise $\mathcal{N}(0, \mathbf{I})$ to the features with a scaling parameter $\psi$ controlling the attack budget;  
\textbf{(ii)} The white-box Proximal Gradient Descent \citep{pgd_paper}, which is a gradient-based approach to the adversarial optimization task with a chosen attack budget of 15\%.

\begin{table}[h]
\centering
\caption{Attacked classification accuracy ($\pm$ standard deviation) of the models on different benchmark node classification dataset using Graph Isomorphism Network after the attack application. }
\begin{tabular}{llccc}
\hline
                                      & Model & Cora                    & CiteSeer                & PubMed                  \\ \hline
\multirow{2}{*}{Random ($\psi =0.5$)} & GIN   & 67.8 $\pm$ 1.3          & 50.1 $\pm$ 0.3          & 72.8 $\pm$ 0.4          \\
                                      & GIORN & \textbf{71.8 $\pm$ 0.8} & \textbf{60.8 $\pm$ 0.5} & \textbf{74.3 $\pm$ 0.7} \\ \hline
\multirow{2}{*}{Random ($\psi =1.0$)} & GIN   & 54.5 $\pm$ 0.9          & 45.1 $\pm$ 0.4          & 67.3 $\pm$ 0.6          \\
                                      & GIORN & \textbf{66.7 $\pm$ 1.2} & \textbf{54.4 $\pm$ 0.8} & \textbf{68.9 $\pm$ 1.1} \\ \hline
\multirow{2}{*}{PGD}                  & GIN   & 61.4 $\pm$ 1.4          & 44.1 $\pm$ 0.5          & 70.6 $\pm$ 0.4          \\
                                      & GIORN & \textbf{69.5 $\pm$ 0.6} & \textbf{58.3 $\pm$ 0.6} & \textbf{73.1 $\pm$ 1.6} \\ \hline
\end{tabular}
\end{table}

\section{Datasets and Implementation Details} \label{sec:dataset_implementation_details}

For all the used models, the same number of layers, hyperparameters, and activation functions were used. The models were trained using the cross-entropy loss function with the Adam optimizer, the number of epochs and learning rate were kept similar for the different approaches across all experiments.

\subsection{Node Classification}
Characteristics and information about the datasets utilized in the node classification part of the study are presented in Table \ref{tab:data_statistics}. As outlined in the main paper, we conduct experiments on a set of citation networks, including Cora, CiteSeer, and PubMed \citep{dataset_node_classification}, as well as the Amazon Co-author network of authors from the Computer Science (CS) domain \citep{cs_data}. For Cora, CiteSeer, and PubMed, we adhere to the train/valid/test splits provided by the datasets. For the CS dataset, we adopt the same methodology as used in \citet{yang_2016}, by randomly selecting 20 nodes from each class to form the training set and 500/1000 nodes from the remaining for the validation and test sets. 

\begin{table}[t]
\caption{Statistics of the node classification datasets used in our experiments.}
\label{tab:data_statistics}
\vskip 0.15in
\begin{center}
\begin{small}
\begin{sc}
\begin{tabular}{lcccc}
\toprule
Dataset & \#Features & \#Nodes & \#Edges & \#Classes \\
\midrule
Cora    & 1433 & 2708 & 5208 & 7 \\
CiteSeer    & 3703 & 3327 & 4552 & 6 \\
PubMed    & 500 & 19717 & 44338 & 3 \\
CS    & 6805 & 18333 & 81894 & 15 \\
OGBN-Arxiv    & 128 & 31971 & 71669 & 40 \\
\bottomrule
\end{tabular}
\end{sc}
\end{small}
\end{center}
\vskip -0.1in
\end{table}

In all of the experiments, the models employed a 2-layer convolutional architecture (consisting of two iterations of message passing and updating) stacked with a Multi-Layer Perception (MLP) as a readout. The intent was to compare the models in an iso-architectural setting, to ensure a fair evaluation of their robustness. All experiments were conducted using the Adam optimizer \citep{kingma_adam} and the same hyperparameters, including a learning rate of 1e-2, 300 epochs, and a hidden feature dimension of 16. For the OGB dataset, we used 512 as a hidden feature dimension. To account for the impact of random initialization, each experiment was repeated 10 times, and the mean and standard deviation of the results were reported. We finally note that for the AIRGNN, we set $K=2$ so as to have the same number of propagation as the other benchmarks. For our proposed GCORN, we tuned the iteration numbers for the different datasets and this was also done for the Parseval Regularization (ParselR) to find the best regularization parameter.

\subsection{Graph Classification}
We additionally emperically evaluated our proposed method GCORN using both probabilistic evaluation and the classical experimental evaluation. We used benchmark datasets derived from bioinformatics and chemoinformatics (PROTEINS, NCI1, D\&D) \citep{morris2020tudataset}. The framework proposed by \citet{errica2020fair} was used to evaluate the performance of the models on this task. We therefore performed a $10$-fold cross-validation using the same folds as provided by the paper to obtain an estimate of the generalization performance of each method. 

\begin{table}[t]
\caption{Statistics of the graph classification datasets used in our experiments.}
\label{tab:data_statistics}
\vskip 0.15in
\begin{center}
\begin{small}
\begin{sc}
\begin{tabular}{lcccc}
\toprule
Dataset & \#Graphs & \#Nodes & \#Edges & \#Classes \\
\midrule
DD    & 1178 & 284.32 & 715.66 & 2 \\
NCI1    & 4110 & 29.87 & 32.30 & 2 \\
PROTEINS    & 1113 & 39.06 & 72.82 & 2 \\

\bottomrule
\end{tabular}
\end{sc}
\end{small}
\end{center}
\vskip -0.1in
\end{table}

Similar to node classification, we employed a 2-layer convolutional architecture (consisting of two iterations of message passing and updating) stacked with a Multi-Layer Perception (MLP) as a readout using the Adam Optimizer. 

\subsection{Implementation Details}

Our implementation is available in the supplementary materials (and will be publicly available afterwards). It is built using the open-source library \textit{PyTorch Geometric} (PyG) under the MIT license \citep{Fey/Lenssen/2019}. We leveraged the publicly available implementation of the different benchmarks from their available repositories : From GCN-k \footnote{https://github.com/ChangminWu/RobustGCN}, for AIRGNN \footnote{https://github.com/lxiaorui/AirGNN}, for GNNGuard \footnote{https://github.com/mims-harvard/GNNGuard} and for RGCN we used the implementation from the DeepRobust package. Note that we additionally utilized the PyTorch DeepRobust package\footnote{https://github.com/DSE-MSU/DeepRobust} to implement the adversarial attacks used in this study. The experiments have been run on both a NVIDIA A100 GPU and a RTX A6000 GPU.

\subsection{Implementation Details of our Empirical Robustness Estimation}
 We considered the input distance defined in Equation \eqnref{eq:prob_eval_distance} with $p=2$, we fixed the radius $\epsilon = 10$ and the number of sampling per graph at $L_{max} = 100$.
For each graph $G$, the output distance $d_{\mathcal{Y}}$ has been re-scaled to the interval $[0,1]$ by normalization using a factor of $2 \sqrt{N_G}$ where $N_G$ is the number of nodes in the graph $G$ (See Appendix \ref{appendix:epmirical_estimation}).

\section{Results of the Empirical Estimation Of Our Robustness Measure}
\label{appendix:epmirical_estimation}
\subsection{Normalization of the Output Distance $\mathbf{d_{\mathcal{Y}}}$} \label{app:normalizing_dist}
 The distance $\mathbf{d_{\mathcal{Y}}}$ is defined as follows, 

$$d_{\mathcal{Y}}(f(\tilde{G}, \tilde{X}), f(G, X)) = \lVert  f(\tilde{G}, \tilde{X}) -  f(G, X) \lVert_2 . $$
Let's put $Q = \left ( Q_{i,j} \right )_{\substack{0\leq i \leq N_G\\ 0\leq i \leq C}} =f(\tilde{G}, \tilde{X}) -  f(G, X)$, where $N_G$ is the number of nodes in the graph and $C$ the number of classes, i.e. output dimension. 
Since, the last layer of $f$ is a Sigmoid function, all the element of the matrix $f(\tilde{G}, \tilde{X})$ and $f(G, X)$ are bounded between 0 and 1, thus, $\forall i,j, ~~ |Q_{i,j}|\leq \lVert f(\tilde{G}, \tilde{X})\lVert_{\infty} + \lVert f(G, X)\lVert_{\infty} \leq 2$, i.e. $\lVert Q \lVert_{\infty} \leq 2$.
Therefore, we have
\begin{align*}
    d_{\mathcal{Y}}(f(\tilde{G}, \tilde{X}), f(G, X))  
 & = \lVert S \lVert_2 \\
   & = \sup_{x\neq 0}\frac{\lVert Sx\lVert_2}{\lVert x \lVert_2}  ,  \\
   & = \sup_{x\neq 0}\frac{  \left ( \sum_{i =1 }^{N_G} \left ( \sum_{j =1 }^{C}  Q_{i,j}x_j\right )^2 \right )^{1/2}  }{\lVert x \lVert_2}    \\ 
   & \leq \lVert Q \lVert_{\infty} \sup_{x\neq 0}\frac{  \left ( \sum_{i =1 }^{N_G} \left ( \sum_{j =1 }^{C} x_j\right )^2 \right )^{1/2}  }{\lVert x \lVert_2}    \\
   & = \lVert Q \lVert_{\infty} \sqrt{N_G} \sup_{x\neq 0}\frac{   \left ( \sum_{j =1 }^{C} x_j\right )^2   }{\lVert x \lVert_2}     \\
 & = \lVert Q \lVert_{\infty} \sqrt{N_G} \sup_{x\neq 0}\frac{\lVert x \lVert_2 }{\lVert x \lVert_2}     \\
  & = \lVert Q \lVert_{\infty} \sqrt{N_G} \leq  2 \sqrt{N_G}.   \\
\end{align*}
Thus, 
$$0 \leq \frac{1}{2 \sqrt{N_G}}  d_{\mathcal{Y}}(f(\tilde{G}, \tilde{X}), f(G, X)) \leq 1 .$$

\subsection{The Effect of Sampling on the empirical estimation of $Adv^{\alpha, \beta}_{\epsilon}[f]$ }

The provided robustness evaluation is based on an estimation related to a sampling strategy. Consequently, the quality of our evaluation is dependent on the number of generated samples. In what follows we aim to shed light on the effect of the chosen number of samples on our estimation. 

Previous work \citep{niederreiter1992random,tezuka2012uniform} has proven the intricate link between the discrepancy of samples and the estimation errors. Discrepancy, particularly in the context of quasi-Monte Carlo methods, assesses how effectively a set of points covers a given space. A high discrepancy signifies the presence of volumes with varying point densities—some with high density and others with low density. Conversely, a low discrepancy indicates a more uniform distribution of points, resulting in smaller estimation errors for $Adv^{\alpha, \beta}_{\epsilon}[f]$.

We use this concept to understand the possible variation of our estimation within our considered input graph's neighborhood. In this perspective, we measure the discrepancy by computing the probability to find a ball of radius $r<<\epsilon$, included in $\mathcal{B}_\epsilon = \left \{  Z \in \mathbb{R}^{ K}| \lVert Z \lVert_p \leq \epsilon  \right \} $, which does not contain any of the $L_{max}$ samples $\{Z_1, \ldots, Z_{L_{max}}\}$. We denote by $\mathbb{B}_r$ the set of balls of radius $r$, i.e. we have:
$$B \in \mathbb{B}_r \Leftrightarrow  \exists T \in \mathbb{R}^K, ~~ B = \left \{  Z \in \mathbb{R}^{ K}| \lVert Z -T\lVert_p \leq \epsilon  \right \}.$$
For any $B \in \mathbb{B}_r$ such that $B \subset \mathcal{B}_\epsilon $, the probability that none of the $L_{max}$ samples belongs to $B$ is
\begin{align*}
    \mathbb{P} (\forall i \in \{1, \ldots, {L_{max}} \},~Z_i \notin B) & = \prod_{i=1}^{L_{max}} \mathbb{P} ( Z_i \notin B\}) \\
    & = \mathbb{P} ( Z_1 \notin B) ^{L_{max}} \\
    & =  \left ( 1  - \mathbb{P} ( Z_1 \in B) \right )^{L_{max}}\\
    & =  \left ( 1  - \frac{ \mathcal{V}^{K}(r)}{ \mathcal{V}^{K}(\epsilon)} \right )^{L_{max}}\\
    & =  \left ( 1  -  \left ( \frac{ r}{ \epsilon} \right )^K \right )^{L_{max}}.
\end{align*}
To reach a low discrepancy, we need to reduce the aforementioned probability, i.e. upper-bounding the probability $\mathbb{P} (\forall i \in \{1, \ldots, {L_{max}} \},~Z_i \notin B)$ with a small $\alpha $ within $]0,1]$, e.g. $\alpha=0.05$.
\begin{align}
    \mathbb{P} (\forall i \in \{1, \ldots, {L_{max}}\} ,~ Z_i \notin B)  \leq \alpha & \Leftrightarrow  \left ( 1  -  \left ( \frac{ r}{ \epsilon} \right )^K \right )^{L_{max}}  \leq \alpha  \\
    & \Leftrightarrow L_{max} log\left ( 1  -  \left ( \frac{ r}{ \epsilon} \right )^K \right ) \leq log(\alpha)\\
    & \Leftrightarrow L_{max}  \geq \frac{log(\alpha) }{log\left ( 1  -  \left ( \frac{ r}{ \epsilon} \right )^K \right )}. \label{eq:discrepancy}
\end{align}
Therefore, to reach a low discrepancy and hence reduce the estimation error, Equation \ref{eq:discrepancy} provides a theoretical guarantee for the minimum required number of samples. The given lower-bound is an increasing function in respect to $\epsilon$, thus, we may need fewer samples if we consider a smaller input graph's neighborhood.

We also empirically investigate how the robustness measure $Adv^{\alpha, \beta}_{\epsilon}[f]$ varies due to sampling. To do so, for a fixed number of samples, we empirically estimate $Adv^{\alpha, \beta}_{\epsilon}[f]$ 10 different times using the Algorithm \ref{algo:prob_eval}. We report the mean and the standard deviation of $Adv^{\alpha, \beta}_{\epsilon}[f]$ for each fixed number of samples. As noticed for the dataset Cora, we have a consistent estimation of $Adv^{\alpha, \beta}_{\epsilon}[f]$  with different number of samples. Moreover, the variance becomes almost constant with only 40 samples.  Thus, with only a few number of samples, we can statistically significant estimation of $Adv^{\alpha, \beta}_{\epsilon}[f]$.
\begin{figure}[h]
    \centering
    \includegraphics[width=.45\linewidth]{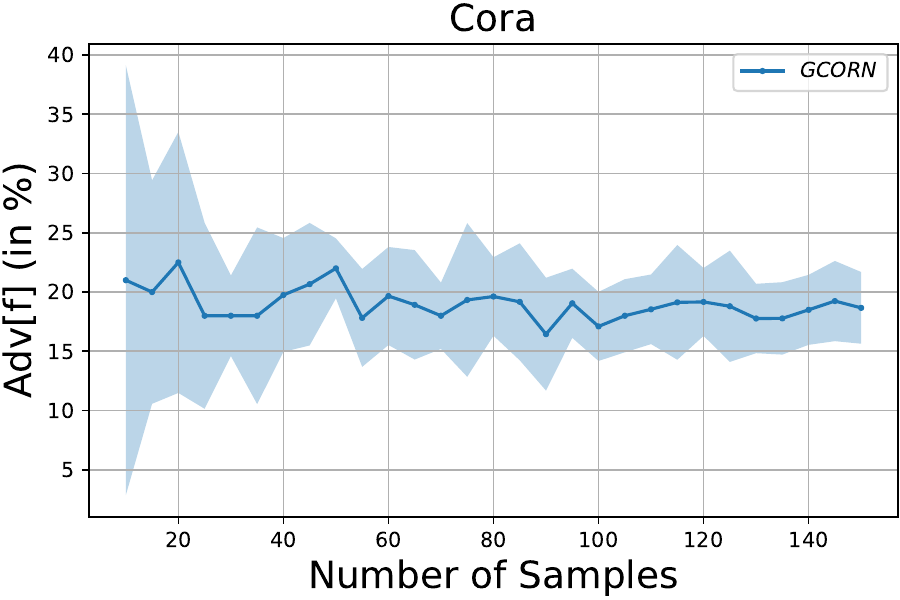}
    \caption{The Values of $Adv^{\alpha, \beta}_{\epsilon}[f]$  for the dataset Cora. The dotted line and the shaded region represent respectively the mean value and the standard deviation of $Adv^{\alpha, \beta}_{\epsilon}[f]$. }
    \label{fig:stdev_mean_adv}
\end{figure}

\subsection{Empirical validation of the tightness of the computed theoretical upper-bound }

In this section, we study the tightness of the theoretical upper-bound $\gamma  = \prod_{i=1}^{L} \lVert W^{(i)} \rVert_\infty  \epsilon \hat{w}_G /\sigma$ of Theorem \ref{theo:main_result}. For that, we use our empirical estimation of  $Adv^{\alpha, \beta}_{\epsilon}[f]$, i.e. using Algorithm \ref{algo:prob_eval}.

We calculated both its theoretical upper-bound and the estimated robustness evaluation for different values of $\sigma$. The results are provided in Figure \ref{fig:theoretical_bound}. As shown in Appendix \ref{app:normalizing_dist}, the maximum possible value of $\sigma$ is $2 \sqrt{N_G}$, thus, we plot the values of $Adv^{\alpha, \beta}_{\epsilon}[f]$ and the upper-bound $\gamma$ as a function of $\sigma/(2 \sqrt{N_G})$.  Notably, these experimental findings affirm the tightness of the theoretically computed bound. This tightness explains our considered link between reducing the upper-bound and the robustness enhancement as experimentally seen in Table \ref{tab:results_node_classification} and \ref{tab:structural_perturbations}.
\begin{figure}[h]
    \centering
    \includegraphics[width=.45\linewidth]{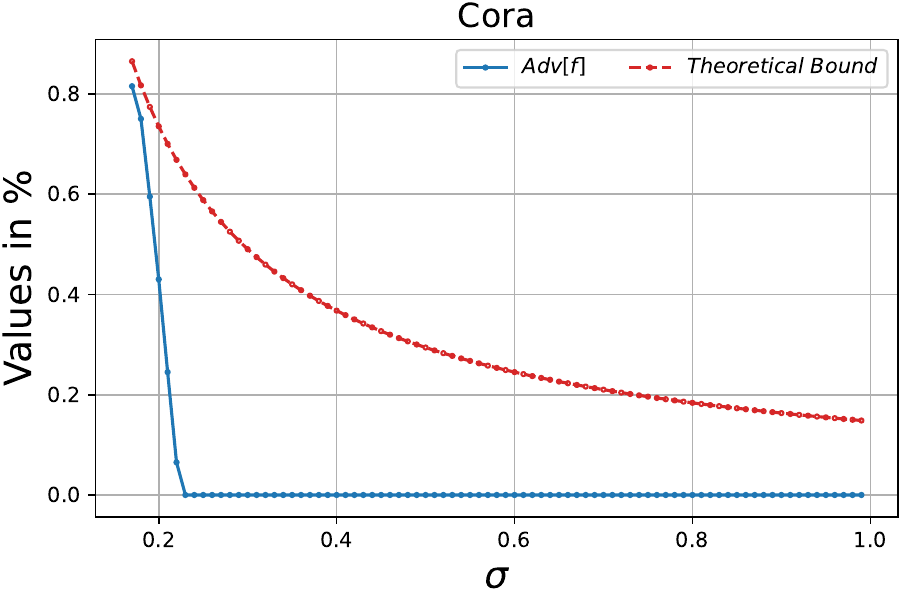}
    \caption{The Values of $Adv^{\alpha, \beta}_{\epsilon}[f]$ and the theoretical upper-bound $\gamma $ (c.f. Theorem \ref{theo:main_result}) for the dataset Cora using different values of $\sigma$. For this experiment, we used the values $\epsilon=0.1$ and $L_{max}=100$.}
    \label{fig:theoretical_bound}
\end{figure}

\subsection{Additional Results of the Empirical Estimation for the Nodes Classification Task}

In Figure \ref{fig:aditional_node_dataset}, we report the values of $Adv^{\alpha, \beta}_{\epsilon}[f]$ of our GCORN and the baselines for the datasets CiteSeer, PubMed and CS. We used the same setting previously described in Section \ref{sec:probabilistic_evaluation}. For the dataset CiteSeer and PubMed, we notice that \textit{GCORN} method yielded the lowest Adversarial Robustness value, indicating that \textit{GCORN} exhibits greater robustness against adversarial examples. 

 \begin{figure}[h]
 \centering
 \begin{subfigure}
   \centering
   \includegraphics[width=.45\linewidth]{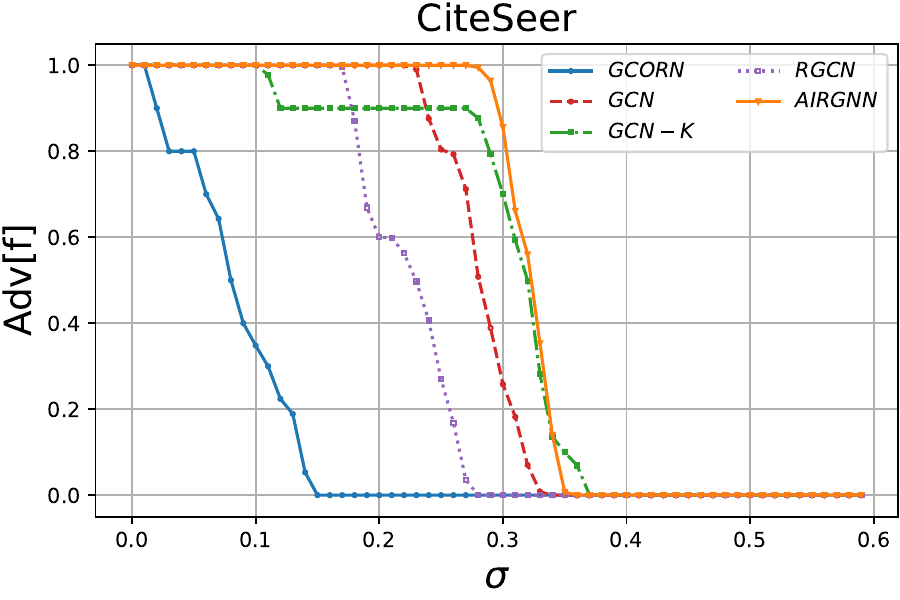}
 \end{subfigure}%
 \begin{subfigure}
   \centering
   \includegraphics[width=.45\linewidth]{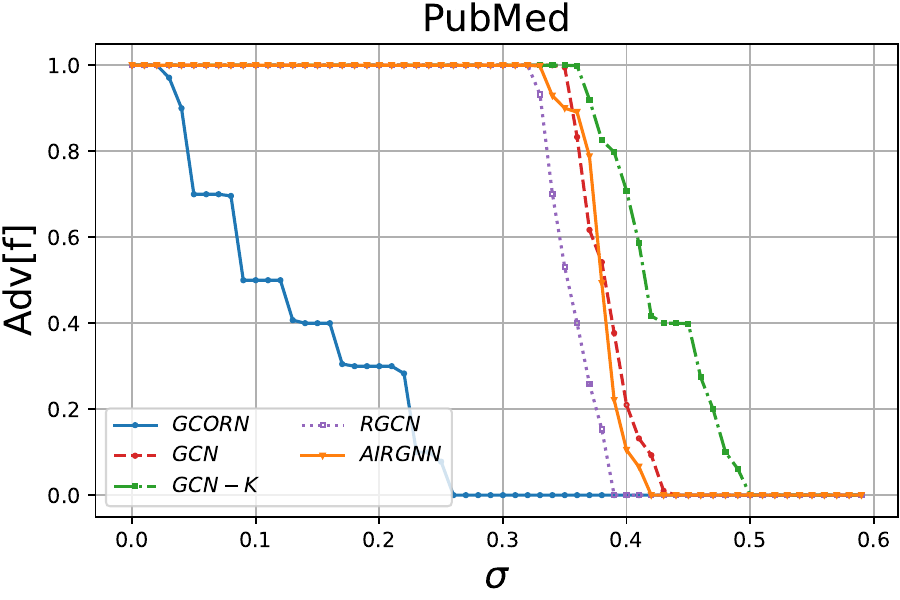}

 \end{subfigure}
 \begin{subfigure}
   \centering
   \includegraphics[width=.45\linewidth]{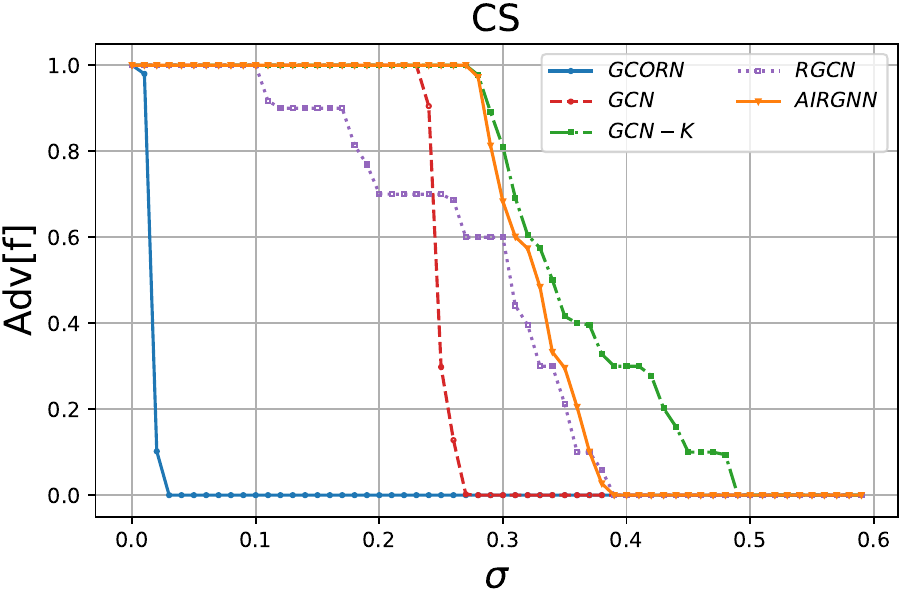}

 \end{subfigure}
 \caption{The Values of $Adv^{\alpha, \beta}_{\epsilon}[f]$ for the dataset CiteSeer, PubMed and CS.}
 \label{fig:aditional_node_dataset}
 \end{figure}



\subsection{Additional Results of the Empirical Estimation for the Graphs Classification Task}

In Figures \ref{fig:aditional_graph_dataset}, we report the values of $Adv^{\alpha, \beta}_{\epsilon}[f]$ for the datasets DD, NCI and Proteins. We observe that GCORN and the two other baselines exhibit nearly identical values of Ad $Adv^{\alpha, \beta}_{\epsilon}[f]$.

 \begin{figure}[h]
 \centering
 \begin{subfigure}
   \centering
   \includegraphics[width=.45\textwidth]{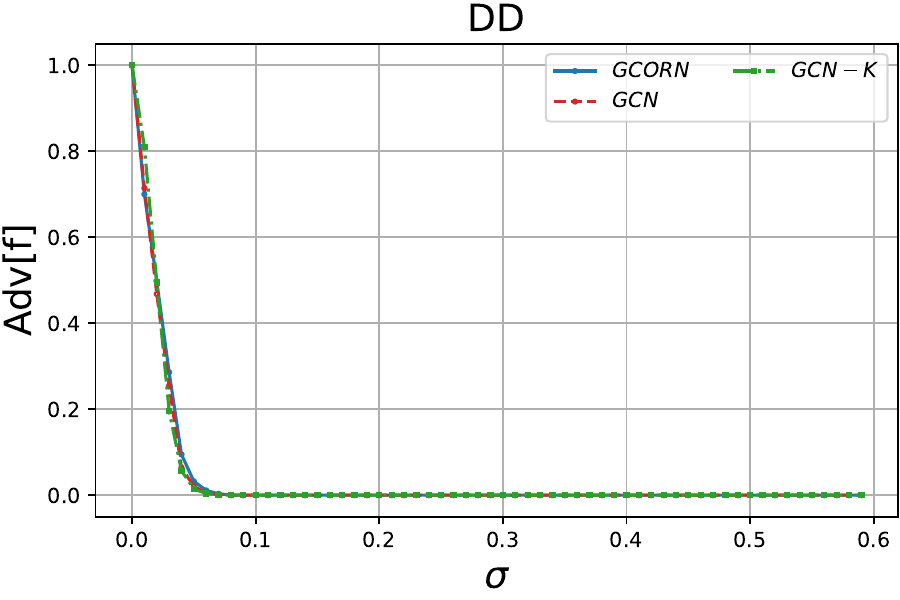}
 \end{subfigure}%
 \begin{subfigure}
   \centering
   \includegraphics[width=.45\textwidth]{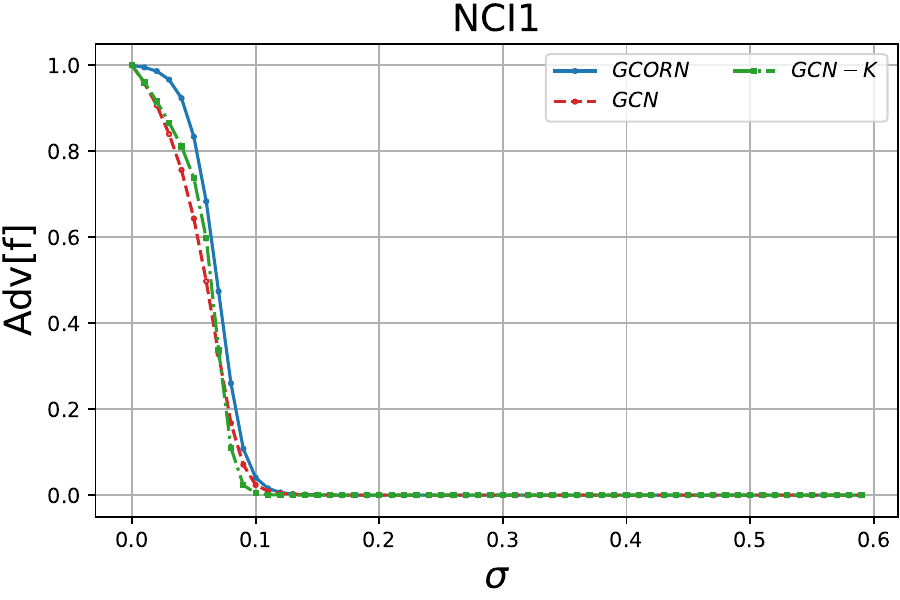}

 \end{subfigure}
 \begin{subfigure}
   \centering
   \includegraphics[width=.45\textwidth]{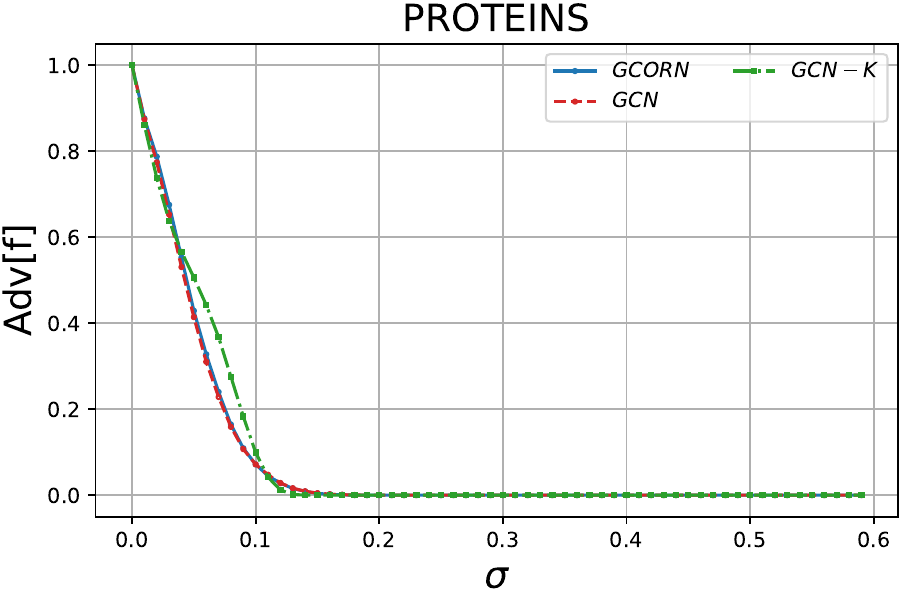}

 \end{subfigure}
 \caption{The Values of $Adv^{\alpha, \beta}_{\epsilon}[f]$ for the dataset CiteSeer, PubMed and CS.}
 \label{fig:aditional_graph_dataset}
 \end{figure}



\end{document}